\theoremstyle{plain}
\newtheorem{theorem}{Theorem}[section]
\newtheorem{lemma}[theorem]{Lemma}
\newtheorem{claim}[theorem]{Claim}
\newtheorem{corollary}[theorem]{Corollary}
\newtheorem{proposition}[theorem]{Proposition}
\theoremstyle{definition}
\newtheorem{definition}[theorem]{Definition}
\theoremstyle{remark}
\newcommand{\func}[2] {{#1}\left(#2\right)}
\newcommand{\tuple}[2] {\left(#1 , #2 \right)}
\newcommand{\sampleSize}{n}
\newcommand{\numOfIterations}{k}
\newcommand{\numOfMetaIterations}{t}
\newcommand{\domainVal}{x}
\newcommand{\domainSet}{\mathbf{\domainVal}}
\newcommand{\domainRV}{X}
\newcommand{\domain}{\mathcal{\domainRV}}
\newcommand{\domainOfSets}{\domain^{\sampleSize}}
\newcommand{\sampleSet}{s}
\newcommand{\sampleSetSet}{\mathbf{\sampleSet}}
\newcommand{\sampleSetRV}{S}
\newcommand{\rangeVal}{r}
\newcommand{\rangeSet}{\mathbf{\rangeVal}}
\newcommand{\rangeRV}{R}
\newcommand{\range}{\mathcal{\rangeRV}}
\newcommand{\viewVal}{v}
\newcommand{\viewSet}{\mathbf{\viewVal}}
\newcommand{\viewRV}{V}
\newcommand{\view}{\mathcal{\viewRV}}
\newcommand{\mappedRangeVal}{u}
\newcommand{\mappedRangeSet}{\mathbf{\mappedRangeVal}}
\newcommand{\mappedRangeRV}{U}
\newcommand{\mappedRange}{\mathcal{\mappedRangeRV}}
\newcommand{\query}{q}
\newcommand{\querySet}{\mathbf{\query}}
\newcommand{\queryRV}{Q}
\newcommand{\queriesFamily}{\mathcal{\queryRV}}
\newcommand{\mechanism}{M}
\newcommand{\mechanismFunc}[1] {\func{\mechanism}{#1}}
\newcommand{\analyst}{A}
\newcommand{\analystsFamily} {\mathcal{\analyst}}
\newcommand{\weight}{w}
\newcommand{\postProcessing}{f}
\newcommand{\distFamily}{\mathcal{D}}
\newcommand{\dist}{D}
\newcommand{\distInd}[1] {\dist_{#1}}
\newcommand{\distUpInd}[2] {\distInd{#1}^{#2}}
\newcommand{\distDomain} {\distInd{\domain}}
\newcommand{\distDomainOfSets} {\distInd{\domainOfSets}}
\newcommand{\distRange}[1] {\distUpInd{\range}{#1}}
\newcommand{\distDep}[3] {\distUpInd{#1|#2}{#3}}
\newcommand{\distJoint}[3] {\distUpInd{\tuple{#1}{#2}}{#3}}
\newcommand{\distProd}[3] {\distUpInd{#1\otimes#2}{#3}}
\newcommand{\distFunc}[1] {\func{\dist}{#1}}
\newcommand{\distFuncDep}[2] {\func{\dist}{#1 \,|\, #2}}
\newcommand{\distP}{P}
\newcommand{\distPInd}[1] {\distP_{#1}}
\newcommand{\distPUpInd}[2] {\distPInd{#1}^{#2}}
\newcommand{\distPDomain}[1] {\distPUpInd{\domain}{#1}}
\newcommand{\distPDomainOfSets}[1] {\distPUpInd{\domainOfSets}{#1}}
\newcommand{\distPRange}[1] {\distPUpInd{\range}{#1}}
\newcommand{\distPDep}[3] {\distPUpInd{#1|#2}{#3}}
\newcommand{\distPJoint}[3] {\distPUpInd{\tuple{#1}{#2}}{#3}}
\newcommand{\distPFunc}[2] {\func{\distPUpInd{}{#1}}{#2}}
\newcommand{\distPFuncDep}[3] {\func{\distPUpInd{}{#1}}{#2 \,|\, #3}}
\newcommand{\prob}[2] {\underset{#1}{\text{Pr}}\left[#2\right]}
\newcommand{\expectation}[2] {\underset{#1}{\mathbb{E}}\left[#2\right]}
\title{A necessary and sufficient stability notion for adaptive generalization}
\author{Katrina Ligett and Moshe Shenfeld}
\begin{document}

\maketitle

\begin{abstract}
We introduce a new notion of the stability of computations, which holds under post-processing and adaptive composition. We show that the notion is both necessary and sufficient to ensure generalization in the face of adaptivity, for any computations that respond to bounded-sensitivity linear queries while providing accuracy with respect to the data sample set. The stability notion is based on quantifying the effect of observing a computation's outputs on the posterior over the data sample elements. We show a separation between this stability notion and previously studied notion and observe that all differentially private algorithms also satisfy this notion.
\end{abstract}

\section{Introduction}
A fundamental idea behind most forms of data-driven research and machine learning is the concept of \emph{generalization}--the ability to infer properties of a data distribution by working only with a sample from that distribution. 
One typical approach is to invoke a concentration bound to ensure that, for a sufficiently large sample size, the evaluation of the function on the sample set will yield a result that is close to its value on the underlying distribution, with high probability. 
Intuitively, these concentration arguments ensure that, for any given function, most sample sets are good ``representatives'' of the distribution. Invoking a union bound, such a guarantee easily extends to the evaluation of multiple functions on the same sample set.

Of course, such guarantees hold only if the functions to be evaluated were chosen independently of the sample set. In recent years, grave concern has erupted in many data-driven fields, that \emph{adaptive selection} of computations is eroding statistical validity of scientific findings~\cite{Ioannidis05,GL14}. Adaptivity is not an evil to be avoided---it constitutes a natural part of the scientific process, wherein previous findings are used to develop and refine future hypotheses. However, unchecked adaptivity can (and does, as demonstrated by, e.g., \cite{DFHPRR15b} and \cite{RZ16}) often lead one to evaluate \emph{overfitting} functions---ones that return very different values on the sample set than on the distribution.

Traditional generalization guarantees do not necessarily guard against adaptivity; while generalization ensures that the response to a query on a sample set will be \emph{close} to that of the same query on the distribution, it does not rule out the possibility that the probability to get a \emph{specific} response will be dramatically affected by the contents of the sample set. In the extreme, a generalizing computation could encode the whole sample set in the low-order bits of the output, while maintaining high accuracy with respect to the underlying distribution. Subsequent adaptive queries could then, by \emph{post-processing} the computation's output, arbitrarily overfit to the sample set.

In recent years, an exciting line of work, starting with Dwork et al.~\cite{DFHPRR15b}, has formalized this problem of adaptive data analysis and introduced new techniques to ensure guarantees of generalization in the face of an adaptively-chosen sequence of computations (what we call here \emph{adaptive generalization}).
One great insight of Dwork et al. and followup work was that techniques for ensuring the \emph{stability} of computations (some of them originally conceived as privacy notions) can be powerful tools for providing adaptive generalization.

A number of papers have considered variants of stability notions, the relationships between them, and their properties, including generalization properties. Despite much progress in this space, one issue that has remained open is the limits of stability---how much can the stability notions be relaxed, and still imply generalization? It is this question that we address in this paper. 

\subsection{Our Contribution}
We introduce a new notion of the stability of computations, which holds under post-processing (Theorem \ref{thm:postprocessing}) and adaptive composition (Theorems \ref{thm:adaptiveComposition} and \ref{thm:advancedAdaptiveComposition}), and show that the notion is both necessary (Theorem~\ref{thm:necessary}) and sufficient (Theorem~\ref{thm:highprob}) to ensure generalization in the face of adaptivity, for any computations that respond to bounded-sensitivity linear queries (see Definition \ref{def:linearqueries}) while providing accuracy with respect to the data sample set. This means (up to a small caveat)\footnote{In particular, our lower bound (Theorem \ref{thm:necessary}) requires one more query than our upper bound (Theorem \ref{thm:highprob}).} that our stability definition is equivalent to generalization, assuming sample accuracy, for bounded linear queries.
Linear queries form the basis for many learning algorithms, such as those that rely on gradients or on the estimation of the average loss of a hypothesis.

In order to formulate our stability notion, we consider a prior distribution over the database elements and the posterior distribution over those elements conditioned on the output of a computation. In some sense, harmful outputs are those that induce large  statistical distance between this prior and posterior (Definition \ref{def:stabLoss}). Our new notion of stability, \emph{Local Statistical Stability} (Definition \ref{def:LSS}), intuitively, requires a computation to have only small probability of producing such a harmful output.

In Section ~\ref{sec:relToNotions}, we directly prove that Differential Privacy, Max Information, Typical Stability and Compression Schemes all imply Local Statistical Stability, which provides an alternative method to establish their generalization properties. We also provide a few separation results between the various definitions.

\subsection{Additional Related Work}
Most countermeasures to overfitting fall into one of a few categories. A long line of work bases generalization guarantees on some form of bound on the complexity of the range of the mechanism, e.g., its VC dimension (see \cite{SSBD14} for a textbook summary of these techniques). Other examples include \emph{Bounded Description Length} \cite{DFHPRR15a}, and \emph{compression schemes} \cite{LW86} (which additionally hold under post-processing and adaptive composition~\cite{DFHPRR15a,CLNRW16}).
Another line of work focuses on the algorithmic stability of the computation \cite{BE02}, which bounds the effects on the output of changing one element in the training set.

A different category of stability notions, which focus on the effect of a small change in the sample set on the probability distribution over the range of possible outputs, has recently emerged from the notion of Differential Privacy \cite{DMNS06}. 
Work of \cite{DFHPRR15b} established that Differential Privacy, interpreted as a stability notion, ensures generalization; it is also known (see \cite{DR14}) to be robust to adaptivity and to withstand post-processing.
A number of subsequent works propose alternative stability notions that weaken the conditions of Differential Privacy in various ways while attempting to retain its desirable generalization properties. One example is \emph{Max Information} \cite{DFHPRR15a}, which shares the guarantees of Differential Privacy.
A variety of other stability notions (\cite{RRST16,RZ16,RRTWX16,BNSSSU16,FS17,EGI19}), unlike Differential Privacy and Max Information, only imply generalization in expectation.  \cite{XR17,Al17,BMNSY17} extend these guarantees to generalization in probability, under various restrictions.

\cite{CLNRW16} introduce the notion of \emph{post-hoc generalization}, which captures robustness to post-processing, but it was recently shown not to hold under composition \cite{NSSSU18}. The challenges that the internal correlation of non-product distributions present for stability have been studied in the context of \emph{Inferential Privacy} \cite{GK16} and \emph{Typical Stability} \cite{BF16}. 
\section{LS stability definition and properties}\label{sec:defns}
Let $\domain$ be an arbitrary countable \emph{domain}. Fixing some $\sampleSize \in \mathbb{N}$, let $\distDomainOfSets$ be some probability distribution defined over $\domainOfSets$.\footnote{Throughout the paper, $\domainOfSets$ can either denote the family of sequences of length $\sampleSize$ or a multiset of size $\sampleSize$; that is, the sample set $\sampleSet$ can be treated as an ordered or unordered set.}
Let $\queriesFamily, \range$ be arbitrary countable sets which we will refer to as \emph{queries} and \emph{responses}, respectively.
Let a \emph{mechanism} $\mechanism : \domainOfSets \times \queriesFamily \rightarrow \range$ be a (possibly non-deterministic) function that, given a \emph{sample set} $\sampleSet \in \domainOfSets$ and a query $\query \in \queriesFamily$, returns a response $\rangeVal \in \range$.
Intuitively, queries can be thought of as questions the mechanism is asked about the sample set, usually representing functions from $\domainOfSets$ to $\range$; the mechanism can be thought of as providing an estimate to the value of those functions, but we do not restrict the definitions, for reasons which will become apparent once we introduce the notion of adaptivity (Definition \ref{def:adapMechan}).

This setting involves two sources of randomness, the \emph{underlying distribution} $\distDomainOfSets$, and the \emph{conditional distribution} $\func{\distDep{\range}{\domainOfSets}{\query}}{\rangeVal \,|\, \sampleSet}$---that is, the probability to get $\rangeVal$ as the output of $\mechanismFunc{\sampleSet, \query}$. These in turn induce a set of distributions (formalized in Definition \ref{def:distsOfSets}): the \emph{marginal distribution} over $\range$, the \emph{joint distribution} (denoted $\distJoint{\domainOfSets}{\range}{\query}$) and \emph{product distribution} (denoted $\distProd{\domainOfSets}{\range}{\query}$) over $\domainOfSets \times \range$, and the \emph{conditional distribution} over $\domainOfSets$ given $\rangeVal \in \range$. Note that even if $\distDomainOfSets$ is a product distribution, this conditional distribution might not be a product distribution.
Although the underlying distribution $\distDomainOfSets$ is defined over $\domainOfSets$, it induces a natural probability distribution over $\domain$ as well, by sampling one of the sample elements in the set uniformly at random.\footnote{It is worth noting that in the case where $\distDomainOfSets$ is the product distribution of some distribution $\distPDomain{}$ over $\domain$, we get that the induced distribution over $\domain$ is $\distPDomain{}$.}
This in turn allows us extend our definitions to several other distributions, which form a connection between $\range$ and $\domain$ (formalized in Definition \ref{def:distsOfDomain}): the \emph{marginal distribution} over $\domain$, the \emph{joint distribution} and \emph{product distribution} over $\domain \times \range$, the \emph{conditional distribution} over $\range$ given $\domainVal \in \domain$, and the \emph{conditional distribution} over $\domain$ given $\rangeVal \in \range$. We use our distribution notation to denote both the probability that a distribution places on a subset of its range and the probability placed on a single element of the range.

\paragraph{Notational conventions} We use calligraphic letters to denote domains, lower case letters to denote elements of these domains, capital letters to denote random variables taking values in these domains, and bold letters to denote subsets of these domains. We omit subscripts and superscripts from some notation when they are clear from context.

\subsection{Local Statistical Stability}
Before observing any output from the mechanism, an outside observer knowing $\dist$ but without other information about the sample set $\sampleSet$ holds prior $\distFunc{\domainVal}$ that sampling an element of $\sampleSet$ would return a particular $\domainVal \in \domain$. Once an output $\rangeVal$ of the mechanism is observed, however, the observer's posterior becomes $\distFuncDep{\domainVal}{\rangeVal}$. The difference between these two distributions is what determines the resulting degradation in stability. This difference could be quantified using a variety of distance measures (a partial list can be found in Appendix \ref{apd:DistMeasures}); here we introduce a particular one which we use to define our stability notion.

\begin{definition}[Stability loss of a response]\label{def:stabLoss}
Given a distribution $\distDomainOfSets$, a query $\query$, and a mechanism $\mechanism : \domainOfSets \times \queriesFamily \rightarrow \range$, the \emph{stability loss} $\func{\ell_{\distDomainOfSets}^\query}{\rangeVal}$ of a response $\rangeVal \in \range$ with respect to $\distDomainOfSets$ and $\query$ is defined as the Statistical Distance (Definition \ref{def:statDist}) between the prior distribution over $\domain$ and the posterior induced by $\rangeVal$. That is,
\[
\func{\ell_{\distDomainOfSets}^{\query}}{\rangeVal} \coloneqq \sum_{\domainVal \in \func{\domainSet_{+}}{\rangeVal}} \left( \distFuncDep{\domainVal}{\rangeVal} - \distFunc{\domainVal} \right),
\]
where $\func{\domainSet_{+}}{\rangeVal} \coloneqq \left\{\domainVal \in \domain \,|\, \distFuncDep{\domainVal}{\rangeVal} > \distFunc{\domainVal} \right\}$, the set of all sample elements which have a posterior probability (given $\rangeVal$) higher then their prior.
Similarly, we define the stability loss $\func{\ell}{\rangeSet}$ of a set of responses $\rangeSet \subseteq \range$ as
\[
\func{\ell}{\rangeSet} \coloneqq \frac{\sum_{\rangeVal \in \rangeSet} \distFunc{\rangeVal} \cdot \func{\ell}{\rangeVal}}{\distFunc{\rangeSet}}.
\]
Given $0 \le \epsilon \le 1$, a response will be called \emph{$\epsilon$-unstable} with respect to $\distDomainOfSets$ and $\query$ if its loss is greater the $\epsilon$. The set of all $\epsilon$-unstable responses will be denoted $\rangeSet_{\epsilon}^{\distDomainOfSets, \query} \coloneqq \left\{ \rangeVal \in \range \,|\, \func{\ell}{\rangeVal} > \epsilon \right\}$.
\end{definition}

We now introduce our notion of stability of a mechanism.
\begin{definition}[Local Statistical Stability]\label{def:LSS}
Given $0 \le \epsilon, \delta \le 1$, a distribution $\distDomainOfSets$, and a query $\query$, a mechanism $\mechanism : \domainOfSets \times \queriesFamily \rightarrow \range$ will be called \emph{$\tuple{\epsilon}{\delta}$-Local-Statistically Stable with respect to $\distDomainOfSets$ and $\query$} (or \emph{LS Stable}, or \emph{LSS}, for short) if for any $\rangeSet \subseteq \range$, $\distFunc{\rangeSet} \cdot \left( \func{\ell}{\rangeSet} - \epsilon\right) \le \delta$.

Notice that the maximal value of the left hand side is achieved for the subset $\rangeSet_{\epsilon}$. This stability definition can be extended to apply to a family of queries and/or a family of possible distributions. When there exists a family of queries $\queriesFamily$ and a family of distributions $\distFamily$ such that a mechanism $\mechanism$ is $\tuple{\epsilon}{\delta}$-LSS for all $\distDomainOfSets \in \distFamily$ and for all $\query \in \queriesFamily$, then $\mechanism$ will be called \emph{$\tuple{\epsilon}{\delta}$-LSS for $\distFamily, \queriesFamily$}. (This stability notion somewhat resembles \emph{Semantic Privacy} as discussed by \cite{kS14}, though they use it to compare different posterior distributions.)
\end{definition}

Intuitively, this can be thought of as placing a $\delta$ bound on the probability of observing an outcome whose stability loss exceeds $\epsilon$. This claim is formalized in the next Lemma.

\begin{lemma} \label{lem:notToManyBad}
Given $0 \le  \delta \le \epsilon \le 1$, a distribution $\distDomainOfSets$, and a query $\query$, if a mechanism $\mechanism$ is $\tuple{\epsilon}{\delta}$-LSS with respect to $\distDomainOfSets, \query$, then $\distFunc{\rangeSet_{2 \epsilon}} < \frac{\delta}{\epsilon}$.
\end{lemma}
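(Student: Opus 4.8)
The plan is to simply instantiate the Local Statistical Stability guarantee at the particular set $\rangeSet = \rangeSet_{2\epsilon}^{\distDomainOfSets, \query}$ and combine it with the defining property of that set. First I would rewrite the left-hand side of the LSS condition in the more convenient unnormalized form
\[
\distFunc{\rangeSet} \cdot \left( \func{\ell}{\rangeSet} - \epsilon \right) \;=\; \sum_{\rangeVal \in \rangeSet} \distFunc{\rangeVal} \, \func{\ell}{\rangeVal} \;-\; \epsilon \, \distFunc{\rangeSet},
\]
which follows directly from the definition of $\func{\ell}{\rangeSet}$ as the $\distFunc{\cdot}$-weighted average of the per-response losses $\func{\ell}{\rangeVal}$, and which remains meaningful even when $\distFunc{\rangeSet} = 0$. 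Applying the hypothesis with $\rangeSet = \rangeSet_{2\epsilon}$ then gives
\[
\sum_{\rangeVal \in \rangeSet_{2\epsilon}} \distFunc{\rangeVal} \, \func{\ell}{\rangeVal} \;-\; \epsilon\, \distFunc{\rangeSet_{2\epsilon}} \;\le\; \delta.
\]

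Next I would lower-bound the sum using the definition $\rangeSet_{2\epsilon} = \left\{ \rangeVal \in \range \,:\, \func{\ell}{\rangeVal} > 2\epsilon \right\}$: every response in this set contributes strictly more than $2\epsilon \, \distFunc{\rangeVal}$ to the sum, so $\sum_{\rangeVal \in \rangeSet_{2\epsilon}} \distFunc{\rangeVal} \, \func{\ell}{\rangeVal} > 2\epsilon\, \distFunc{\rangeSet_{2\epsilon}}$ whenever $\distFunc{\rangeSet_{2\epsilon}} > 0$. Substituting into the previous display yields $2\epsilon\, \distFunc{\rangeSet_{2\epsilon}} - \epsilon\, \distFunc{\rangeSet_{2\epsilon}} < \delta$, i.e.\ $\distFunc{\rangeSet_{2\epsilon}} < \delta / \epsilon$; and when $\distFunc{\rangeSet_{2\epsilon}} = 0$ the claimed bound is immediate since $\delta/\epsilon \ge 0$.

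There is essentially no hard step here; the only points requiring care are (i) using the unnormalized rewriting of the LSS condition so as to avoid dividing by a possibly-zero $\distFunc{\rangeSet}$, and (ii) tracking where the inequality is strict, so that one lands on the strict bound $< \delta/\epsilon$ asserted in the statement rather than a non-strict one. (The hypothesis $\delta \le \epsilon$ is not needed for the argument itself; it merely ensures the conclusion $\distFunc{\rangeSet_{2\epsilon}} < \delta/\epsilon$ is a nontrivial bound on a probability.) I expect the full write-up to be only a few lines.
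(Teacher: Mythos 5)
Your proof is correct and is essentially the paper's argument: the paper phrases it as a one-line contradiction (assuming $\distFunc{\rangeSet_{2\epsilon}} \ge \delta/\epsilon$ forces $\distFunc{\rangeSet_{2\epsilon}}\left(\func{\ell}{\rangeSet_{2\epsilon}}-\epsilon\right) > \frac{\delta}{\epsilon}(2\epsilon-\epsilon)=\delta$), while you run the same two ingredients---the LSS bound on $\rangeSet_{2\epsilon}$ and the lower bound $\func{\ell}{\rangeSet_{2\epsilon}} > 2\epsilon$---in the direct (contrapositive) direction. Your extra care about the $\distFunc{\rangeSet_{2\epsilon}}=0$ case is harmless but not a substantive difference.
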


\begin{proof}
Assume by way of contradiction that $\distFunc{\rangeSet_{2 \epsilon}} \ge \frac{\delta}{\epsilon}$; then
\[
\distFunc{\rangeSet_{2 \epsilon}} \cdot \left( \func{\ell}{\rangeSet_{2 \epsilon}} - \epsilon\right) > \frac{\delta}{\epsilon} \cdot \left(2 \epsilon - \epsilon \right) = \delta.\qedhere
\]
\end{proof}

\subsection{Properties}

We now turn to prove two crucial properties of LSS: post-processing and adaptive composition.

Post-processing guarantees (known in some contexts as data processing inequalities) ensure that the stability of a computation can only be \emph{increased} by subsequent manipulations. This is a key desideratum for concepts used to ensure adaptivity-proof generalization, since otherwise an adaptive subsequent computation could potentially arbitrarily degrade the generalization guarantees.

\begin{theorem}[LSS holds under Post-Processing]\label{thm:postprocessing}
Given $0 \le \epsilon, \delta \le 1$, a distribution $\distDomainOfSets$, and a query $\query$, if a mechanism $\mechanism$ is $\tuple{\epsilon}{\delta}$-LSS with respect to $\distDomainOfSets$ and $\query$, then for any range $\mappedRange$ and any arbitrary (possibly non-deterministic) function $\postProcessing:\range \rightarrow \mappedRange$, we have that $\postProcessing \circ \mechanism : \domainOfSets \times \queriesFamily \rightarrow \mappedRange$ is also $\tuple{\epsilon}{\delta}$-LSS with respect to $\distDomainOfSets$ and $\query$. An analogous statement also holds for mechanisms that are LSS with respect to a family of queries and/or a family of distributions.
\end{theorem}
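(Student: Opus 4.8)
The plan is to reduce $(\epsilon,\delta)$-LSS to a single scalar inequality, and then exploit the fact that post-processing replaces each posterior over $\domain$ by a \emph{mixture} of posteriors, so that the conclusion drops out of convexity of statistical distance combined with convexity of $t\mapsto\max\{t,0\}$. \emph{Step 1 (reformulation).} Since $\distFunc{\rangeSet}\bigl(\func{\ell}{\rangeSet}-\epsilon\bigr)=\sum_{\rangeVal\in\rangeSet}\distFunc{\rangeVal}\bigl(\func{\ell}{\rangeVal}-\epsilon\bigr)$ by the definition of $\func{\ell}{\rangeSet}$, and the supremum over $\rangeSet\subseteq\range$ is attained at $\rangeSet_\epsilon$ (as noted after Definition~\ref{def:LSS}), the mechanism $\mechanism$ is $\tuple{\epsilon}{\delta}$-LSS with respect to $\distDomainOfSets,\query$ if and only if
\[
\sum_{\rangeVal\in\range}\distFunc{\rangeVal}\cdot\max\bigl\{\func{\ell}{\rangeVal}-\epsilon,\,0\bigr\}\le\delta .
\]
The same equivalence holds for $\postProcessing\circ\mechanism$ with $\range$ replaced by $\mappedRange$, so it suffices to bound $\sum_{\mappedRangeVal\in\mappedRange}\distFunc{\mappedRangeVal}\cdot\max\{\func{\ell}{\mappedRangeVal}-\epsilon,0\}$ by $\delta$, where now $\func{\ell}{\cdot}$, $\distFunc{\cdot}$ and the posterior $\distFuncDep{\domainVal}{\cdot}$ refer to the post-processed mechanism.

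\emph{Step 2 (the post-processed posterior is a mixture).} Write $\postProcessing(\rangeVal,\mappedRangeVal):=\Pr[\postProcessing(\rangeVal)=\mappedRangeVal]$ for the (possibly randomized) post-processing kernel, which by hypothesis depends on $\rangeVal$ only. Then $\distFunc{\mappedRangeVal}=\sum_{\rangeVal}\distFunc{\rangeVal}\,\postProcessing(\rangeVal,\mappedRangeVal)$, and for every $\mappedRangeVal$ with $\distFunc{\mappedRangeVal}>0$ the numbers $\distFuncDep{\rangeVal}{\mappedRangeVal}:=\distFunc{\rangeVal}\,\postProcessing(\rangeVal,\mappedRangeVal)/\distFunc{\mappedRangeVal}$ form a probability distribution over $\range$. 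Because $\postProcessing$ sees only $\rangeVal$ — so that, conditioned on $\rangeVal$, the output $\mappedRangeVal$ is independent of $\sampleSet$ and hence of $\domainVal$ — the posterior decomposes as $\distFuncDep{\domainVal}{\mappedRangeVal}=\sum_{\rangeVal}\distFuncDep{\rangeVal}{\mappedRangeVal}\,\distFuncDep{\domainVal}{\rangeVal}$, while the prior $\distFunc{\domainVal}$ is unchanged by post-processing (it depends only on $\distDomainOfSets$) and equals its own mixture under the same weights. Since $\func{\ell}{\mappedRangeVal}$ is the statistical distance between $\distFuncDep{\cdot}{\mappedRangeVal}$ and $\distFunc{\cdot}$, joint convexity of statistical distance (with the second argument held fixed at $\distFunc{\cdot}$) gives $\func{\ell}{\mappedRangeVal}\le\sum_{\rangeVal}\distFuncDep{\rangeVal}{\mappedRangeVal}\,\func{\ell}{\rangeVal}$.

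\emph{Step 3 (push through $\max$ and re-sum).} The map $t\mapsto\max\{t,0\}$ is nondecreasing and convex, so Step~2 together with Jensen's inequality yields $\max\{\func{\ell}{\mappedRangeVal}-\epsilon,0\}\le\sum_{\rangeVal}\distFuncDep{\rangeVal}{\mappedRangeVal}\,\max\{\func{\ell}{\rangeVal}-\epsilon,0\}$. Multiplying by $\distFunc{\mappedRangeVal}$, summing over $\mappedRangeVal$, and using $\sum_{\mappedRangeVal}\distFunc{\mappedRangeVal}\,\distFuncDep{\rangeVal}{\mappedRangeVal}=\sum_{\mappedRangeVal}\distFunc{\rangeVal}\,\postProcessing(\rangeVal,\mappedRangeVal)=\distFunc{\rangeVal}$, we get
\[
\sum_{\mappedRangeVal\in\mappedRange}\distFunc{\mappedRangeVal}\,\max\{\func{\ell}{\mappedRangeVal}-\epsilon,0\}\ \le\ \sum_{\rangeVal\in\range}\distFunc{\rangeVal}\,\max\{\func{\ell}{\rangeVal}-\epsilon,0\}\ \le\ \delta
\]
by Step~1, which is exactly $\tuple{\epsilon}{\delta}$-LSS for $\postProcessing\circ\mechanism$ with respect to $\distDomainOfSets,\query$. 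Since $\postProcessing$ acts on responses only, this argument applies verbatim for each fixed $\query\in\queriesFamily$ and $\distDomainOfSets\in\distFamily$, giving the stated family version.

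The main obstacle is Step~2: one has to check carefully that the posterior over $\domain$ given the post-processed output is a genuine convex combination of the original posteriors, with the mixing weights forming a legitimate probability distribution over $\range$; this is precisely where the conditional-independence structure of post-processing (the kernel $\postProcessing$ reads only $\rangeVal$, not $\sampleSet$) must be used, and where one must also verify that the induced prior over $\domain$ is undisturbed. Once this mixture identity is in hand, the remainder is pure convexity and a change in the order of summation.
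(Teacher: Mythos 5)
Your proof is correct and follows essentially the same route as the paper's: both arguments rest on writing the post-processed posterior $\distFuncDep{\cdot}{\mappedRangeVal}$ as a mixture $\sum_{\rangeVal}\distFuncDep{\rangeVal}{\mappedRangeVal}\,\distFuncDep{\cdot}{\rangeVal}$ of the original posteriors and then exploiting convexity, and the paper's weight function $\func{\weight_{\mappedRange}^{\epsilon}}{\rangeVal}=\sum_{\mappedRangeVal\in\mappedRangeSet_{\epsilon}}\distFuncDep{\mappedRangeVal}{\rangeVal}$ together with its steps of dropping negative terms and bounding the weight by $1$ is precisely your Jensen step for $t\mapsto\max\{t-\epsilon,0\}$ in Bayes-inverted form. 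Your packaging --- reformulating $\tuple{\epsilon}{\delta}$-LSS as the single inequality $\sum_{\rangeVal}\distFunc{\rangeVal}\cdot\max\{\func{\ell}{\rangeVal}-\epsilon,0\}\le\delta$ and invoking joint convexity of statistical distance --- is a clean, valid restatement of the paper's more explicit computation over the sets $\func{\domainSet_{+}}{\cdot}$.
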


\begin{proof} 
We start by defining a function $\weight_{\mappedRange}^{\epsilon} : \range \rightarrow \left[ 0, 1 \right]$ such that $\forall \rangeVal \in \range: \func{\weight_{\mappedRange}^{\epsilon}}{\rangeVal} = \underset{\mappedRangeVal \in \mappedRangeSet_{\epsilon}}{\sum} \distFuncDep{\mappedRangeVal}{\rangeVal}$, where $\mappedRangeSet_{\epsilon}$ is the set of $\epsilon$-unstable values in $\mappedRange$ as defined in Definition \ref{def:stabLoss}, and $\distFuncDep{\mappedRangeVal}{\rangeVal} \coloneqq \prob{\mappedRangeRV \sim \func{\postProcessing}{\rangeVal}}{\mappedRangeRV = \mappedRangeVal \,|\, \rangeVal}$. Using this function we get that,
\[
\sum_{\mappedRangeVal \in \mappedRangeSet_{\epsilon}} \distFunc{\mappedRangeVal} = \sum_{\rangeVal \in \range} \func{\weight_{\mappedRange}^{\epsilon}}{\rangeVal} \cdot \distFunc{\rangeVal},
\]
and
\[
\sum_{\mappedRangeVal \in \mappedRangeSet_{\epsilon}} \distFunc{\mappedRangeVal} \cdot \func{\ell}{\mappedRangeVal} \le \sum_{\rangeVal \in \rangeSet} \func{\weight_{\mappedRange}^{\epsilon}}{\rangeVal} \cdot \distFunc{\rangeVal} \cdot \func{\ell}{\rangeVal}
\]
(detailed proof can be found in Appendix \ref{sec:postProcProof}).

Combining the two we get that
\begin{align*}
\distFunc{\mappedRangeSet_{\epsilon}} \cdot \left( \func{\ell}{\mappedRangeSet_{\epsilon}} - \epsilon\right) & = \sum_{\mappedRangeVal \in \mappedRangeSet_{\epsilon}} \distFunc{\mappedRangeVal} \left( \func{\ell}{\mappedRangeVal} - \epsilon \right)
\\ & \overset{\left( 1 \right)}{\le} \sum_{\rangeVal \in \range} \func{\weight_{\mappedRange}^{\epsilon}}{\rangeVal} \cdot \distFunc{\rangeVal} \left( \func{\ell}{\rangeVal} - \epsilon \right)
\\ & \overset{\left( 2 \right)}{\le} \sum_{\rangeVal \in \rangeSet_{\epsilon}} \overbrace{\func{\weight_{\mappedRange}^{\epsilon}}{\rangeVal}}^{\le 1} \cdot \distFunc{\rangeVal} \left( \func{\ell}{\rangeVal} - \epsilon \right)
\\ & \le \sum_{\rangeVal \in \rangeSet_{\epsilon}} \distFunc{\rangeVal} \left( \func{\ell}{\rangeVal} - \epsilon \right)
\\ & \overset{\left( 3 \right)}{\le} \delta,
\end{align*}
where (1) results from the two previous claims, (2) from the fact that we removed only negative terms and (3) from the LSS definition, which concludes the proof.
\end{proof}

In order to formally define adaptive learning and stability under adaptively chosen queries, we formalize the notion of an analyst who issues those queries.

\begin{definition}[Analyst and Adaptive Mechanism] \label{def:adapMechan}
An \emph{analyst over a family of queries $\queriesFamily$} is a (possibly non-deterministic) function $\analyst: \range^{*} \rightarrow \queriesFamily$ that receives a \emph{view}---a finite sequence of responses---and outputs a query. We denote by $\analystsFamily$ the family of all analysts, and write $\view_{\numOfIterations} \coloneqq \range^{\numOfIterations}$ and $\view \coloneqq \range^{*}$.

Illustrated below, the \emph{adaptive mechanism} $\text{Adp}_{\bar{\mechanism}}: \domainOfSets \times \analystsFamily \rightarrow \view_{\numOfIterations}$ is a particular type of mechanism, which inputs an analyst as its query and which returns a view as its range type. It is parametrized by a sequence of \emph{sub-mechanisms} $\bar{\mechanism} = \left( \mechanism_{i} \right)_{i = 1}^{\numOfIterations}$ where $\forall i \in \left[ \numOfIterations \right]$, $\mechanism_{i} : \domainOfSets \times \queriesFamily \rightarrow \range$.
Given a sample set $\sampleSet$ and an analyst $\analyst$ as input, the adaptive mechanism iterates $\numOfIterations$ times through the process where $\analyst$ sends a query to $\mechanism_{i}$ and receives its response to that query on the sample set. The adaptive mechanism returns the resulting sequence of $\numOfIterations$ responses $\viewVal_{\numOfIterations}$. Naturally, this requires $\analyst$ to match $\mechanism$ such that $\mechanism$'s range can be $\analyst$'s input, and vice versa.\footnote{If the same mechanism appears more then once in $\bar{\mechanism}$, it can also be stateful, which means it retains an internal record consisting of internal randomness, the history of sample sets and queries it has been fed, and the responses it has produced; its behaviour may be a function of this internal record.
We omit this from the notation for simplicity, but do refer to this when relevant. A stateful mechanism will be defined as LSS if it is LSS given any reachable internal record. A pedantic treatment might consider the \emph{probability} that a particular internal state could be reached, and only require LSS when accounting for these probabilities.} \footnote{If $\analyst$ is randomized, we add one more step at the beginning where $\text{Adp}_{\bar{\mechanism}}$ randomly generates some bits $c$---$\analyst$'s ``coin tosses.'' In this case, $v_{\numOfIterations} \coloneqq \left( c, \rangeVal_{1}, \ldots, \rangeVal_{i\numOfIterations} \right)$ and $\analyst$ receives the coin tosses as an input as well. This addition turns $\query_{\numOfIterations + 1}$ into a deterministic function of $\viewVal_{i}$ for any $i \in \mathbb{N}$, a fact that will be used multiple times throughout the paper. In this situation, the randomness of $\text{Adp}_{\bar{\mechanism}}$ results both from the randomness of the coin tosses and from that of the sub-mechanisms.}
\end{definition}

\begin{figure}[h]
    \centering
\fbox{
 \begin{minipage}{\linewidth - 15pt}
Adaptive Mechanism $\text{Adp}_{\bar{\mechanism}}$
\hrule

\textbf{Input:} $\sampleSet \in \domainOfSets, \, \analyst \in \analystsFamily$

\textbf{Output:} $\viewVal_{\numOfIterations} \in \view_{\numOfIterations}$

$\viewVal_{0} \gets \emptyset$ or $c$

\textbf{for} $i \in \left[ \numOfIterations \right]$ :

~~~$\query_{i} \gets \func{\analyst}{\viewVal_{i-1}}$

~~~$\rangeVal_{i} \gets \func{\mechanism_{i}}{\sampleSet, \query_{i}}$

~~~$\viewVal_{i} \gets \tuple{\viewVal_{i-1}}{ \rangeVal_{i}}$

\textbf{return} $\viewVal_{\numOfIterations}$
 \end{minipage}
}
\end{figure}

For illustration, consider a gradient descent algorithm, where at each step the algorithm requests an estimate of the gradient at a given point, and chooses the next point in which the gradient should be evaluated based on the response it receives. For us, $\mechanism$ evaluates the gradient at a given point, and $\analyst$ determines the next point to be considered. The interaction between the two of them constitutes an adaptive learning process.

\begin{definition}[$k$-LSS under adaptivity]
Given $0 \le \epsilon, \delta \le 1$, a distribution $\distDomainOfSets$, and an analyst $\analyst$, a sequence of $\numOfIterations$ mechanisms $\bar{\mechanism}$ will be called \emph{$\tuple{\epsilon}{\delta}$-local-statistically stable under $\numOfIterations$ adaptive iterations} with respect to $\distDomainOfSets$ and $\analyst$ (or $\numOfIterations$-LSS for short), if $\text{Adp}_{\bar{\mechanism}}$ is $\tuple{\epsilon}{\delta}$-LSS with respect to $\distDomainOfSets$ and  $\analyst$ (in which case we will use $\viewSet_{\epsilon}^{\numOfIterations, \analyst, \distDomainOfSets}$ to denote the set of $\epsilon$ unstable views).
This definition can be extended to a family of analysts and/or a family of possible distributions as well.
\end{definition}

Adaptive composition is a key property of a stability notion, since it restricts the degradation of stability across multiple computations. 
A key observation is that the posterior $\distFuncDep{\sampleSet}{\viewVal_{\numOfIterations}}$ is itself a distribution over $\domainOfSets$ and $\query_{\numOfIterations + 1}$ is a deterministic function of $\viewVal_{\numOfIterations}$. Therefore, as long as each sub-mechanism is LSS with respect to any posterior that could have been induced by previous adaptive interaction, one can reason about the properties of the composition.

\begin{definition}[View-induced posterior distributions] \label{def:posteriorDist}
A sequence of mechanisms $\bar{\mechanism}$, an analyst $\analyst$, and a view $\viewVal_{\numOfIterations} \in \view_{\numOfIterations}$ together induce a set of posterior distributions over $\domainOfSets$, $\domain$, and $\range$. For clarity we will denote these induced distributions by $\distPUpInd{}{\viewVal_{\numOfIterations}}$ instead of $\dist$.

As mentioned before, all the distributions we consider stem from two basic distributions; the underlying distribution $\distDomainOfSets$ and the conditional distribution $\distDep{\range}{\domainOfSets}{\query}$. The posteriors of these distributions change once we see $\viewVal_{\numOfIterations}$. $\distDomainOfSets$ is replaced by $\distPDomainOfSets{\viewVal_{\numOfIterations}} \coloneqq \func{\distDep{\domainOfSets}{\view_{\numOfIterations}}{\analyst}}{\cdot \,|\, \viewVal_{\numOfIterations}}$ (actually, the rigorous notation should have been $\distPDomainOfSets{\bar{\mechanism}, \analyst, \viewVal_{\numOfIterations}}$, but since $\bar{\mechanism}$ and $\analyst$ will be fixed throughout this analysis, we omit them for simplicity). Similarly, $\func{\distDep{\range}{\domainOfSets}{\query_{\numOfIterations + 1}}}{\rangeVal \,|\, \sampleSet}$ is replaced by 
\[
\func{\distPDep{\range}{\domainOfSets}{\viewVal_{\numOfIterations}}}{\rangeVal \,|\, \sampleSet} \coloneqq \func{\distDep{\range}{\domainOfSets}{\query_{\numOfIterations + 1}}}{\rangeVal \,|\, \sampleSet, \viewVal_{\numOfIterations}} = \prob{\rangeRV \sim \func{\mechanism_{\numOfIterations + 1}}{\sampleSet, \query_{\numOfIterations + 1}}}{\rangeRV = \rangeVal \,|\, \sampleSet,  \func{\text{Adp}_{\bar{\mechanism}, \numOfIterations}}{\sampleSet, \analyst} = \viewVal_{\numOfIterations}},
\]
where $\text{Adp}_{\bar{\mechanism}, \numOfIterations}$ denotes the first $\numOfIterations$ iterations of the adaptive mechanism, which - as mentioned previously - determine the $\numOfIterations + 1$-th query.\footnote{If $\mechanism_{\numOfIterations + 1}$ is stateful, the conditioning can result from any unknown state of $\mechanism_{\numOfIterations + 1}$ which might affect its response to $\query_{\numOfIterations + 1}$. If $\mechanism_{\numOfIterations + 1}$ has no shared state with the previous sub-mechanisms (either because it is a different mechanism or because it is stateless), then the only effect $\viewVal_{\numOfIterations}$ has on the posterior on $\range$ is by governing $\query_{\numOfIterations + 1}$ (which, as mentioned before, is a deterministic function of $\viewVal_{\numOfIterations}$ for the given $\analyst$), in which case $\func{\distPDep{\range}{\domainOfSets}{\viewVal_{\numOfIterations}}}{\rangeVal \,|\, \sampleSet} = \func{\distDep{\range}{\domainOfSets}{\query_{\numOfIterations + 1}}}{\rangeVal \,|\, \sampleSet}$ where the mechanism is $\mechanism_{\numOfIterations + 1}$.}
\end{definition}

We next establish two important properties of the distributions over $\view_{\numOfIterations + 1}$ induced by $\text{Adp}_{\bar{\mechanism}}$ and their relation to the posterior distributions. 

\begin{lemma} \label{lem:liniarityOfLoss}
Given a distribution $\distDomainOfSets$, an analyst $\analyst : \view \rightarrow \queriesFamily$, and a sequence of $\numOfIterations$ mechanisms $\bar{\mechanism}$, for any $\viewVal_{\numOfIterations + 1} \in \view_{\numOfIterations + 1}$ we denote $\viewVal_{\numOfIterations + 1} = \tuple{ \viewVal_{\numOfIterations}}{ \rangeVal_{\numOfIterations + 1}}$. In this case, using notation from Definition \ref{def:posteriorDist},

\[
\distFunc{\viewVal_{\numOfIterations + 1}} = \distFunc{\viewVal_{\numOfIterations}} \cdot \distPFunc{\viewVal_{\numOfIterations}}{\rangeVal_{\numOfIterations + 1}}
\]
and
\[
\func{\ell_{\distDomainOfSets}^{\analyst}}{\viewVal_{\numOfIterations + 1}} \le \func{\ell_{\distDomainOfSets}^{\analyst}}{\viewVal_{\numOfIterations}} + \func{\ell_{\distPDomainOfSets{\viewVal_{\numOfIterations}}}^{\query_{\numOfIterations + 1}}}{\rangeVal_{\numOfIterations + 1}}.
\]
\end{lemma}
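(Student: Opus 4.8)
The first identity is a routine application of the chain rule for conditional probability. The view $\viewVal_{\numOfIterations+1}$ is produced by first running $\text{Adp}_{\bar{\mechanism},\numOfIterations}$ to get $\viewVal_{\numOfIterations}$, then asking $\mechanism_{\numOfIterations+1}$ the query $\query_{\numOfIterations+1}=\func{\analyst}{\viewVal_{\numOfIterations}}$ and appending its response $\rangeVal_{\numOfIterations+1}$. Marginalizing over the sample set, $\distFunc{\viewVal_{\numOfIterations+1}}=\sum_{\sampleSet}\distFunc{\sampleSet,\viewVal_{\numOfIterations}}\cdot\func{\distDep{\range}{\domainOfSets}{\query_{\numOfIterations+1}}}{\rangeVal_{\numOfIterations+1}\,|\,\sampleSet,\viewVal_{\numOfIterations}}=\sum_{\sampleSet}\distFunc{\sampleSet,\viewVal_{\numOfIterations}}\cdot\func{\distPDep{\range}{\domainOfSets}{\viewVal_{\numOfIterations}}}{\rangeVal_{\numOfIterations+1}\,|\,\sampleSet}$, and the inner conditional does not depend on $\sampleSet$ only through the posterior weighting, so this equals $\distFunc{\viewVal_{\numOfIterations}}\cdot\distPFunc{\viewVal_{\numOfIterations}}{\rangeVal_{\numOfIterations+1}}$ once we recognize that $\sum_{\sampleSet}\distFunc{\sampleSet,\viewVal_{\numOfIterations}}(\cdot)=\distFunc{\viewVal_{\numOfIterations}}\sum_{\sampleSet}\distPFuncDep{\viewVal_{\numOfIterations}}{\sampleSet}{\viewVal_{\numOfIterations}}(\cdot)$. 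That is the easy half.

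For the subadditivity of the stability loss, the plan is to unfold the definition of $\ell$ as a statistical distance between a prior and a posterior over $\domain$, and to interpolate through the intermediate posterior induced by $\viewVal_{\numOfIterations}$. Concretely, for a fixed $\domainVal\in\domain$ write the total change $\distFuncDep{\domainVal}{\viewVal_{\numOfIterations+1}}-\distFunc{\domainVal}$ as the telescoping sum $\bigl(\distFuncDep{\domainVal}{\viewVal_{\numOfIterations+1}}-\distFuncDep{\domainVal}{\viewVal_{\numOfIterations}}\bigr)+\bigl(\distFuncDep{\domainVal}{\viewVal_{\numOfIterations}}-\distFunc{\domainVal}\bigr)$, where, crucially, the posterior over $\domain$ given $\viewVal_{\numOfIterations+1}$ equals the posterior of the distribution $\distPDomainOfSets{\viewVal_{\numOfIterations}}$ given the additional response $\rangeVal_{\numOfIterations+1}$ to query $\query_{\numOfIterations+1}$ — this is exactly the content of Definition \ref{def:posteriorDist}, that conditioning on $\viewVal_{\numOfIterations+1}$ is the same as first updating to $\distPUpInd{}{\viewVal_{\numOfIterations}}$ and then conditioning on $\rangeVal_{\numOfIterations+1}$. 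Then I would use the characterization $\func{\ell_{\distDomainOfSets}^{\query}}{\rangeVal}=\sum_{\domainVal\in\func{\domainSet_{+}}{\rangeVal}}\bigl(\distFuncDep{\domainVal}{\rangeVal}-\distFunc{\domainVal}\bigr)=\tfrac12\sum_{\domainVal\in\domain}\bigl|\distFuncDep{\domainVal}{\rangeVal}-\distFunc{\domainVal}\bigr|$ (statistical distance in $\ell_1$ form), apply the triangle inequality to the telescoped difference termwise, and split the resulting sum: the first group of terms sums to (twice) the statistical distance between the $\viewVal_{\numOfIterations}$-posterior and the $\viewVal_{\numOfIterations+1}$-posterior, which is $\func{\ell_{\distPDomainOfSets{\viewVal_{\numOfIterations}}}^{\query_{\numOfIterations+1}}}{\rangeVal_{\numOfIterations+1}}$, and the second group sums to $\func{\ell_{\distDomainOfSets}^{\analyst}}{\viewVal_{\numOfIterations}}$.

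The main obstacle I anticipate is bookkeeping the identity ``posterior given $\viewVal_{\numOfIterations+1}$ = (posterior given $\viewVal_{\numOfIterations}$), then updated by $\rangeVal_{\numOfIterations+1}$'' cleanly at the level of the induced distributions over $\domain$ (not just over $\domainOfSets$), since the passage from $\domainOfSets$ to $\domain$ goes through the "pick a random sample element" map and one must check this map commutes with the conditioning. Given the first identity of the lemma (which handles exactly the normalization $\distFunc{\viewVal_{\numOfIterations+1}}=\distFunc{\viewVal_{\numOfIterations}}\distPFunc{\viewVal_{\numOfIterations}}{\rangeVal_{\numOfIterations+1}}$), this should reduce to a direct Bayes-rule computation; the inequality (rather than equality) in the loss bound arises precisely because the triangle inequality is applied before the $\func{\domainSet_{+}}{\cdot}$ restriction, i.e. positive and negative contributions from the two stages can partially cancel in the true loss but not in the bound. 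I would present the $\ell_1$/statistical-distance reformulation first as a small observation, then the telescoping and triangle inequality as the single substantive step.
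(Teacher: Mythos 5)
Your proposal is correct and follows essentially the same route as the paper's proof: the first identity via the chain rule over the joint with $\sampleSet$, and the loss bound via the $\ell_1$ form of statistical distance, interpolating through the intermediate posterior $\distFuncDep{\domainVal}{\viewVal_{\numOfIterations}}$ and applying the triangle inequality. The ``main obstacle'' you flag is exactly the step the paper makes explicit, namely the identities $\distFuncDep{\domainVal}{\viewVal_{\numOfIterations}} = \distPFunc{\viewVal_{\numOfIterations}}{\domainVal}$ and $\distFuncDep{\domainVal}{\viewVal_{\numOfIterations + 1}} = \distPFuncDep{\viewVal_{\numOfIterations}}{\domainVal}{\rangeVal_{\numOfIterations + 1}}$, which hold because the ``pick a uniform sample element'' kernel $\distFuncDep{\domainVal}{\sampleSet}$ is independent of the underlying distribution and therefore commutes with the conditioning.
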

The proof can be found in Appendix \ref{sec:adapCompProof}.

We first show that the stability loss of a view is bounded by the sum of losses of its responses with respect to the sub-mechanisms, which provides a linear bound on the degradation of the LSS parameters. Adding a bound on the expectation of the loss of the sub-mechanisms allows us to also invoke Azuma's inequality and prove a sub-linear bound.

\begin{theorem} [LSS adaptively composes linearly] \label{thm:adaptiveComposition}
Given a family of distributions $\distFamily$ over $\domainOfSets$, a family of queries $\queriesFamily$, and a sequence of $\numOfIterations$ mechanisms $\bar{\mechanism}$ where $\forall i \in \left[ \numOfIterations \right]$, $\mechanism_{i} : \domainOfSets \times \queriesFamily \rightarrow \range$, we will denote $\distFamily_{\mechanism_{0}, \queriesFamily} \coloneqq \distFamily$, and for any $i > 0$, $\distFamily_{\mechanism_{i}, \queriesFamily}$ will denote the set of all posterior distributions induced by any response of $\mechanism_{i}$ with non-zero probability with respect to $\distFamily_{\mechanism_{i - 1}, \queriesFamily}$ and $\queriesFamily$ (see Definition \ref{def:posteriorDist}).

Given a sequence $0 \le \epsilon_{1}, \delta_{1}, \ldots, \epsilon_{\numOfIterations}, \delta_{\numOfIterations} \le 1$, if for all $i$, $\mechanism_{i}$ is $\tuple{\epsilon_{i}}{\delta_{i}}$-LSS with respect to $\distFamily_{\mechanism_{i - 1}, \queriesFamily}$ and $\queriesFamily$, the sequence is $\tuple{\underset{i \in \left[ \numOfIterations \right]}{\sum} \epsilon_{i}}{\underset{i \in \left[ \numOfIterations \right]}{\sum} \delta_{i}}$-$\numOfIterations$-LSS with respect to $\distFamily$ and any analyst $\analyst$ over $\queriesFamily \times \range$.
\end{theorem}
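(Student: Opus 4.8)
The plan is to proceed by induction on the number of iterations $\numOfIterations$, using Lemma \ref{lem:liniarityOfLoss} to decompose the loss of a length-$(\numOfIterations+1)$ view into the loss of its length-$\numOfIterations$ prefix plus the loss of the final response with respect to the view-induced posterior. The base case $\numOfIterations=1$ is exactly the hypothesis that $\mechanism_1$ is $\tuple{\epsilon_1}{\delta_1}$-LSS with respect to $\distFamily = \distFamily_{\mechanism_0,\queriesFamily}$ and $\queriesFamily$, noting that a single-iteration adaptive mechanism is just $\mechanism_1$ applied to the query $\func{\analyst}{\viewVal_0}$, which lies in $\queriesFamily$. For the inductive step, I would fix the analyst $\analyst$ and the distribution $\distDomainOfSets \in \distFamily$, write $\epsilon = \sum_{i=1}^{\numOfIterations}\epsilon_i$, $\delta = \sum_{i=1}^{\numOfIterations}\delta_i$, and aim to show $\distFunc{\viewSet} \cdot (\func{\ell}{\viewSet} - \epsilon - \epsilon_{\numOfIterations+1}) \le \delta + \delta_{\numOfIterations+1}$ for every $\viewSet \subseteq \view_{\numOfIterations+1}$; equivalently (by the remark after Definition \ref{def:LSS}), it suffices to bound $\sum_{\viewVal_{\numOfIterations+1}} \distFunc{\viewVal_{\numOfIterations+1}} \left( \func{\ell}{\viewVal_{\numOfIterations+1}} - \epsilon - \epsilon_{\numOfIterations+1} \right)_{+}$, the sum over all length-$(\numOfIterations+1)$ views of the positive part of the excess loss.

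The key computation is to split each view $\viewVal_{\numOfIterations+1} = \tuple{\viewVal_{\numOfIterations}}{\rangeVal_{\numOfIterations+1}}$ and apply the two parts of Lemma \ref{lem:liniarityOfLoss}: $\distFunc{\viewVal_{\numOfIterations+1}} = \distFunc{\viewVal_{\numOfIterations}} \cdot \distPFunc{\viewVal_{\numOfIterations}}{\rangeVal_{\numOfIterations+1}}$, and $\func{\ell}{\viewVal_{\numOfIterations+1}} - \epsilon - \epsilon_{\numOfIterations+1} \le \left( \func{\ell}{\viewVal_{\numOfIterations}} - \epsilon \right) + \left( \func{\ell_{\distPDomainOfSets{\viewVal_{\numOfIterations}}}^{\query_{\numOfIterations+1}}}{\rangeVal_{\numOfIterations+1}} - \epsilon_{\numOfIterations+1} \right)$. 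Using that $(a+b)_+ \le (a)_+ + (b)_+$, the excess loss of $\viewVal_{\numOfIterations+1}$ is bounded by the excess loss of its prefix plus the excess loss of the last response under the posterior $\distPDomainOfSets{\viewVal_{\numOfIterations}}$. Summing over $\rangeVal_{\numOfIterations+1}$ first, for each fixed prefix $\viewVal_{\numOfIterations}$ the term $\sum_{\rangeVal_{\numOfIterations+1}} \distPFunc{\viewVal_{\numOfIterations}}{\rangeVal_{\numOfIterations+1}} \left( \func{\ell_{\distPDomainOfSets{\viewVal_{\numOfIterations}}}^{\query_{\numOfIterations+1}}}{\rangeVal_{\numOfIterations+1}} - \epsilon_{\numOfIterations+1} \right)_{+} \le \delta_{\numOfIterations+1}$ because $\query_{\numOfIterations+1} = \func{\analyst}{\viewVal_{\numOfIterations}} \in \queriesFamily$ and $\distPDomainOfSets{\viewVal_{\numOfIterations}} \in \distFamily_{\mechanism_{\numOfIterations}, \queriesFamily}$, so $\mechanism_{\numOfIterations+1}$ is $\tuple{\epsilon_{\numOfIterations+1}}{\delta_{\numOfIterations+1}}$-LSS there. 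The remaining term telescopes to $\sum_{\viewVal_{\numOfIterations}} \distFunc{\viewVal_{\numOfIterations}} \left( \func{\ell}{\viewVal_{\numOfIterations}} - \epsilon \right)_{+} \le \delta$ by the induction hypothesis applied to $\bar{\mechanism}_{1:\numOfIterations}$. Adding $\sum_{\viewVal_{\numOfIterations}} \distFunc{\viewVal_{\numOfIterations}} \cdot \delta_{\numOfIterations+1} = \delta_{\numOfIterations+1}$ then gives the bound $\delta + \delta_{\numOfIterations+1}$.

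The main obstacle I anticipate is making the bookkeeping around the view-induced posteriors fully rigorous: one must verify that $\distPDomainOfSets{\viewVal_{\numOfIterations}}$ really does belong to the family $\distFamily_{\mechanism_{\numOfIterations}, \queriesFamily}$ as defined (i.e.\ that iterating ``the set of posteriors reachable from $\distFamily_{\mechanism_{i-1},\queriesFamily}$'' captures exactly the posteriors that arise after $\numOfIterations$ adaptive rounds), and that the conditional decomposition $\distFunc{\viewVal_{\numOfIterations+1}} = \distFunc{\viewVal_{\numOfIterations}}\cdot\distPFunc{\viewVal_{\numOfIterations}}{\rangeVal_{\numOfIterations+1}}$ is legitimate also in the stateful case — this is where the footnotes in Definition \ref{def:posteriorDist} do the real work. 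A secondary subtlety is that the LSS condition is stated via arbitrary subsets $\viewSet$ rather than directly as a bound on $\sum \distFunc{\viewVal}(\func{\ell}{\viewVal}-\epsilon)_+$; I would first record the equivalence (the worst subset is $\viewSet_\epsilon$, and its contribution equals that positive-part sum) as a small observation so that the inductive argument can be run cleanly in the additive form. Everything else is the routine interchange of summation order and the elementary inequality $(a+b)_+\le(a)_++(b)_+$.
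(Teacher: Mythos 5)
Your proposal is correct and follows essentially the same route as the paper: induction on the number of iterations, the two identities of Lemma \ref{lem:liniarityOfLoss} to factor the view probability and split the loss, the induction hypothesis on the prefix, and the LSS of $\mechanism_{\numOfIterations+1}$ with respect to the posterior family $\distFamily_{\mechanism_{\numOfIterations},\queriesFamily}$ for the last response. Your positive-part reformulation $\sum_{\viewVal}\distFunc{\viewVal}\left(\func{\ell}{\viewVal}-\epsilon\right)_{+}$ is exactly the paper's observation that the worst subset is $\viewSet_{\epsilon}$, and your inequality $(a+b)_{+}\le(a)_{+}+(b)_{+}$ is the paper's step of ``adding positive summands and removing negative ones'' phrased more cleanly.
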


\begin{proof}
 This theorem is a direct result of combining Lemma \ref{lem:liniarityOfLoss} with the triangle inequality over the posteriors created at any iteration, and the fact that the mechanisms are LSS over the new posterior distributions. Formally this is proven using induction on the number of adaptive iterations. The base case $\numOfIterations = 0$ is the coin tossing step, which is independent of the set and therefore has zero loss. For the induction step we start by denoting the projections of $\viewSet_{\epsilon_{\left[ \numOfIterations + 1 \right]}}^{\numOfIterations + 1}$ on $\view_{\numOfIterations}$ and $\range$ by
\[
\forall \rangeVal_{\numOfIterations + 1} \in \range , \func{\viewSet_{\numOfIterations}}{\rangeVal_{\numOfIterations + 1}} \coloneqq \left\{ \viewVal_{\numOfIterations} \in \view_{\numOfIterations} \,|\, \tuple{\viewVal_{\numOfIterations}}{\rangeVal_{\numOfIterations + 1}} \in \viewSet_{\epsilon_{\left[ \numOfIterations + 1 \right]}}^{\numOfIterations + 1} \right\}
\]
\[
\forall \viewVal_{\numOfIterations} \in \view_{\numOfIterations} , \func{\rangeSet}{\viewVal_{\numOfIterations}} \coloneqq \left\{ \rangeVal_{\numOfIterations + 1} \in \range \,|\, \tuple{\viewVal_{\numOfIterations}}{\rangeVal_{\numOfIterations + 1}} \in \viewSet_{\epsilon_{\left[ \numOfIterations + 1 \right]}}^{\numOfIterations + 1} \right\},
\]
where $\epsilon_{\left[ \numOfIterations \right]} \coloneqq \underset{i \in \left[ \numOfIterations \right]}{\sum} \epsilon_{i}$.

Using this notation and that in Definition \ref{def:posteriorDist} we get that
\begin{align*}
\distFunc{\viewSet_{\epsilon_{\left[ \numOfIterations + 1 \right]}}^{\numOfIterations + 1}} & \cdot \left( \func{\ell_{\distDomainOfSets}^{\analyst}}{\viewSet_{\epsilon_{\left[ \numOfIterations + 1 \right]}}^{\numOfIterations + 1}} - \epsilon_{\left[ \numOfIterations + 1 \right]} \right)
\\ & \le \distFunc{\viewSet_{\epsilon_{\left[ \numOfIterations \right]}}^{\numOfIterations}} \left( \func{\ell_{\distDomainOfSets}^{\analyst}}{\viewSet_{\epsilon_{\left[ \numOfIterations \right]}}^{\numOfIterations}} - \epsilon_{\left[ \numOfIterations \right]} \right) + \distPFunc{\viewVal_{\numOfIterations}}{\rangeSet_{\epsilon_{\numOfIterations + 1}}^{\query_{\numOfIterations + 1}}} \left( \func{\ell_{\distPDomainOfSets{\viewVal_{\numOfIterations}}}^{\query_{\numOfIterations + 1}}}{\rangeSet_{\epsilon_{\numOfIterations + 1}}^{\query_{\numOfIterations + 1}}} - \epsilon_{\numOfIterations + 1} \right)
\\ & \le \sum_{i \in \left[ \numOfIterations \right]} \delta_{i} + \delta_{\numOfIterations + 1}
\end{align*}
Detailed proof can be found in Appendix \ref{sec:adapCompProof}.
\end{proof}

One simple case is when $\distFamily_{\mechanism_{i - 1}, \queriesFamily} = \distFamily$, and $\mechanism_{i}$ is $\tuple{\epsilon_{i}}{\delta_{i}}$-LSS with respect to $\distFamily$ and $\queriesFamily$, for all $i$.

\begin{theorem} [LSS adaptively composes sub-linearly] \label{thm:advancedAdaptiveComposition}
Under the same conditions as Theorem \ref{thm:adaptiveComposition}, given $0 \le \alpha_{1}, \ldots, \alpha_{\numOfIterations} \le 1$, such that for all $i$ and any $\distDomainOfSets \in \distFamily_{\mechanism_{i - 1}, \queriesFamily}$, and $\query \in \queriesFamily$, $\expectation{\sampleSetRV \sim \distDomainOfSets, \rangeRV \sim \func{\mechanism_{i}}{\sampleSetRV, \query}}{\func{\ell}{\rangeRV}} \le \alpha_{i}$, then for any $0 \le \delta' \le 1$, the sequence is $\tuple{\epsilon'}{\delta' + \underset{i \in \left[ \numOfIterations \right]}{\sum} \frac{\delta_{i}}{\epsilon_{i}}}$-$\numOfIterations$-LSS with respect to $\distFamily$ and any analyst $\analyst$ over $\queriesFamily \times \range$, where $\epsilon' \coloneqq \sqrt{8 \func{\ln}{\frac{1}{\delta'}}\underset{i \in \left[ \numOfIterations \right]}{\sum} \epsilon_{i}^{2}} + \underset{i \in \left[ \numOfIterations \right]}{\sum} \alpha_{i}$. 
\end{theorem}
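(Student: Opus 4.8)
The plan is to reduce the statement to a tail bound on the stability loss of the adaptive view and then obtain that tail bound by a ``clipped martingale'' argument in the spirit of advanced composition for approximate differential privacy. Fix $\distDomainOfSets\in\distFamily$ and an analyst $\analyst$ over $\queriesFamily\times\range$. First I would note that it suffices to prove $\distFunc{\viewSet_{\epsilon'}^{\numOfIterations,\analyst,\distDomainOfSets}}\le\delta'+\sum_{i\in[\numOfIterations]}\delta_i/\epsilon_i$: since every view $\viewVal$ has $\func{\ell}{\viewVal}\le 1$ and $\epsilon'\ge0$, we have $\func{\ell}{\viewVal}-\epsilon'\le\mathbbm{1}\!\left[\func{\ell}{\viewVal}>\epsilon'\right]$, so summing $\distFunc{\viewVal}\bigl(\func{\ell}{\viewVal}-\epsilon'\bigr)$ over any $\viewSet\subseteq\view_{\numOfIterations}$ is at most $\distFunc{\viewSet_{\epsilon'}^{\numOfIterations,\analyst,\distDomainOfSets}}$, which is exactly the LSS inequality with the claimed $\delta$-parameter (if $\epsilon'\ge1$ or $\delta'+\sum_i\delta_i/\epsilon_i\ge1$ the claim is vacuous). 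Next, iterating the second inequality of Lemma~\ref{lem:liniarityOfLoss} down to the coin-toss base case $\viewVal_0$ (which has zero loss, being independent of the sample) yields, for $\viewVal_{\numOfIterations}=(\rangeVal_1,\dots,\rangeVal_{\numOfIterations})$,
\[
\func{\ell_{\distDomainOfSets}^{\analyst}}{\viewVal_{\numOfIterations}}\;\le\;\sum_{i=1}^{\numOfIterations}L_i,\qquad L_i\coloneqq\func{\ell_{\distPDomainOfSets{\viewVal_{i-1}}}^{\query_i}}{\rangeVal_i},
\]
where $\query_i=\func{\analyst}{\viewVal_{i-1}}$ and $\distPDomainOfSets{\viewVal_{i-1}}$ is the view-induced posterior of Definition~\ref{def:posteriorDist}. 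Each $L_i\in[0,1]$ is a measurable function of $\viewVal_i$, so I would work relative to the filtration generated by the prefixes $\viewVal_0,\viewVal_1,\dots$ produced by $\text{Adp}_{\bar{\mechanism}}$.

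The heart of the argument is to peel off the ``large-loss'' events. Define the clipped variables $\hat L_i\coloneqq\min\{L_i,2\epsilon_i\}$ and the good event $G\coloneqq\{\,L_i\le2\epsilon_i\text{ for all }i\,\}$, so that $\sum_iL_i=\sum_i\hat L_i$ on $G$. To bound $\Pr[G^c]$: for every reachable prefix $\viewVal_{i-1}$ the conditional law of $\rangeVal_i$ is the $\range$-marginal of the posterior, $\distPDomainOfSets{\viewVal_{i-1}}\in\distFamily_{\mechanism_{i-1},\queriesFamily}$, and $\query_i\in\queriesFamily$, so $\mechanism_i$ is $\tuple{\epsilon_i}{\delta_i}$-LSS with respect to $\distPDomainOfSets{\viewVal_{i-1}}$ and $\query_i$; Lemma~\ref{lem:notToManyBad} then gives $\Pr[L_i>2\epsilon_i\mid\viewVal_{i-1}]<\delta_i/\epsilon_i$ (trivial when $\delta_i\ge\epsilon_i$), and taking expectations over $\viewVal_{i-1}$ together with a union bound yields $\Pr[G^c]<\sum_i\delta_i/\epsilon_i$. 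For the behaviour on $G$: by the same conditioning and the hypothesis on $\alpha_i$, $\mathbb{E}[\hat L_i\mid\viewVal_{i-1}]\le\mathbb{E}[L_i\mid\viewVal_{i-1}]\le\alpha_i$, so $Y_j\coloneqq\sum_{i\le j}\bigl(\hat L_i-\mathbb{E}[\hat L_i\mid\viewVal_{i-1}]\bigr)$ is a martingale with $\bigl\lvert\hat L_i-\mathbb{E}[\hat L_i\mid\viewVal_{i-1}]\bigr\rvert\le2\epsilon_i$. Azuma's inequality gives $\Pr[Y_{\numOfIterations}>\lambda]\le\exp\!\bigl(-\lambda^2/(8\sum_i\epsilon_i^2)\bigr)$, and $\lambda\coloneqq\sqrt{8\func{\ln}{1/\delta'}\sum_i\epsilon_i^2}$ makes this at most $\delta'$. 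Since $\sum_i\hat L_i\le Y_{\numOfIterations}+\sum_i\alpha_i$ and $\epsilon'=\lambda+\sum_i\alpha_i$, this gives $\Pr[\sum_i\hat L_i>\epsilon']\le\delta'$. Combining,
\[
\Pr\!\left[\func{\ell_{\distDomainOfSets}^{\analyst}}{\viewRV_{\numOfIterations}}>\epsilon'\right]\;\le\;\Pr\Bigl[\sum_iL_i>\epsilon'\Bigr]\;\le\;\Pr[G^c]+\Pr\Bigl[\sum_i\hat L_i>\epsilon'\Bigr]\;<\;\sum_i\frac{\delta_i}{\epsilon_i}+\delta',
\]
which by the first paragraph is the desired conclusion.

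I expect the main obstacle to be the measure-theoretic bookkeeping rather than any single inequality: one must verify that $L_i$ (hence $\hat L_i$) is $\viewVal_i$-measurable, that $\query_i$ is a deterministic function of $\viewVal_{i-1}$ (via the coin-toss convention of Definition~\ref{def:adapMechan}), and above all that the ``conditional'' invocations of Lemma~\ref{lem:notToManyBad} and of the $\alpha_i$-hypothesis are legitimate for \emph{every} reachable prefix --- which is precisely why the theorem is phrased in terms of the posterior families $\distFamily_{\mechanism_{i-1},\queriesFamily}$, with the stateful sub-mechanism case handled exactly as in Definition~\ref{def:posteriorDist}. The only genuinely quantitative subtlety is the clipping trade-off: clipping the losses at $2\epsilon_i$ turns the trivial linear bound $\sum_i2\epsilon_i$ into the sub-linear $\sqrt{8\ln(1/\delta')\sum_i\epsilon_i^2}$, paying for it with the additive $\sum_i\delta_i/\epsilon_i$ arising from $G^c$, and one must observe that clipping never raises the conditional means ($\hat L_i\le L_i$) so that the $\alpha_i$-bound survives the truncation.
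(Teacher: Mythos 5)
Your proposal is correct and follows essentially the same route as the paper: telescoping Lemma \ref{lem:liniarityOfLoss} to bound $\func{\ell}{\viewVal_{\numOfIterations}}$ by a sum of per-round posterior losses, centering them into a martingale, invoking Lemma \ref{lem:notToManyBad} on each posterior to get the $2\epsilon_i$ increment bound with failure probability $\delta_i/\epsilon_i$, and applying Azuma. The only cosmetic differences are that you inline the paper's ``extended Azuma'' lemma (Lemma \ref{lem:extendedAzuma}) by clipping the increments at $2\epsilon_i$ rather than freezing the martingale at the first violation, and you make explicit the final reduction from the tail bound on $\func{\ell}{\viewRV}$ to the LSS inequality via $\func{\ell}{\viewVal}\le 1$, which the paper leaves implicit.
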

The theorem provides a better bound then the previous one in case $\alpha_i \ll \epsilon_i$, in which case the dominating term is the first one, which is sub-linear in $\numOfIterations$. 

\begin{proof}
The proof is based on the fact that the sum of the stability losses is a martingale with respect to $\viewVal_{\numOfIterations}$, and invoking Lemma \ref{lem:extendedAzuma} (extended Azuma inequality).

Formally, for any given $\numOfIterations > 0$, we can define $\Omega_{0} \coloneqq \domainOfSets$ and $\forall i \in \left[ \numOfIterations \right] , \Omega_{i} \coloneqq \range$.\footnote{ If the analyst $\analyst$ is non-deterministic, $\Omega_{0} \coloneqq \domainOfSets \times C$, where $C$ is the set of all possible coin tosses of the analyst, as mentioned in Definition \ref{def:adapMechan}. If the mechanisms have some internal state not expressed by the responses, $\Omega_{i}$ will be the domain of those states, as mentioned in Definition \ref{def:posteriorDist}.} We define a probability distribution over $\Omega_{0}$ as $\distDomainOfSets$, and for any $i > 0$, define a probability distribution over $\Omega_{i}$ given $\Omega_{1}, \ldots, \Omega_{i - 1}$ as $\distPUpInd{}{\viewVal_{i - 1}}$ (see Definition \ref{def:posteriorDist}). We then define a sequence of functions, $y_{0} = 0$ and $\forall i > 0$,
\[
\func{y_{i}}{\sampleSet, \rangeVal_{1}, \ldots, \rangeVal_{i}} = \sum_{j = 1}^{i} \left( \func{\ell_{\distPUpInd{}{\viewVal_{j - 1}}}}{\rangeVal_{j}} - \expectation{\rangeRV \sim \distPRange{\viewVal_{j - 1}}}{\func{\ell_{\distPUpInd{}{\viewVal_{j - 1}}}}{\rangeRV}} \right).
\]
Intuitively $y_{i}$ is the sum of the first $i$ losses, with a correction term which zeroes the expectation.

Notice that these random variables are a martingale with respect to the random process $\sampleSetRV, \rangeRV_{1}, \ldots, \rangeRV_{\numOfIterations}$ since
\[
\expectation{\rangeRV_{i + 1}}{Y_{i + 1} \,|\, \sampleSetRV, \rangeRV_{1}, \ldots, \rangeRV_{i}} = \func{Y_{i}}{\sampleSetRV, \rangeRV_{1}, \ldots, \rangeRV_{i}}
\]

From the LSS definition (Definition \ref{def:LSS}) and Lemma \ref{lem:notToManyBad}, for any $i \in \left[ \numOfIterations \right]$ we get that 
\[
\prob{\rangeRV \sim \distPRange{\viewVal_{i - 1}}}{\func{\ell_{\distPUpInd{}{\viewVal_{i}}}}{\rangeRV_{i}} > 2 \epsilon_{i}} \le \frac{\delta_{i}}{\epsilon_{i}},
\]
so with probability greater than $\frac{\delta_{i + 1}}{\epsilon_{i + 1}}$,

\[
\left| Y_{i + 1} - Y_{i} \right| = \left| \func{\ell_{\distPUpInd{}{\viewRV_{i}}}}{\rangeRV_{i + 1}} - \expectation{\rangeRV \sim \distPUpInd{}{\viewRV_{i}}}{\func{\ell_{\distPUpInd{}{\viewRV_{i}}}}{\rangeRV}} \right| \le \func{\ell_{\distPUpInd{}{\viewRV_{i}}}}{\rangeRV_{i + 1}} \le 2 \epsilon_{i+1}.
\]

Using this fact we can invoke Lemma \ref{lem:extendedAzuma} and get that for any $0 \le \delta' \le 1$,
\[
\underset{\viewRV \sim \distInd{\view_{\numOfIterations}}}{\text{Pr}} \left[ \func{\ell_{\distDomainOfSets}}{\viewRV} > \epsilon' \right] \le \prob{\viewRV \sim \distInd{\view_{\numOfIterations}}}{Y_{\numOfIterations} - \overbrace{Y_{0}}^{= 0} > \sqrt{8 \func{\ln}{\frac{1}{\delta'}} \sum_{i = 1}^{\numOfIterations} \epsilon_{i}^{2}}} \le \delta' + \sum_{i = 1}^{\numOfIterations} \frac{\delta_{i}}{\epsilon_{i}}
\]

Detailed proof can be found in Appendix \ref{sec:adapCompProof}.
\end{proof}

\section{LSS is Necessary and Sufficient for Generalization}\label{Generalization}
Up until this point, queries and responses have been fairly abstract concepts. In order to discuss generalization and accuracy, we must make them concrete. As a result, in this section, we often consider queries in the family of functions $\query : \domainOfSets \rightarrow \range$, and consider responses which have some metric defined over them. 
We show our results for a fairly general class of functions known as bounded linear queries.\footnote{For simplicity, throughout the following section we choose $\range = \mathbb{R}$, but all results extend to any metric space, in particular $\mathbb{R}^d$.}

\begin{definition}[Linear queries]\label{def:linearqueries}
A function $\query : \domainOfSets \rightarrow \mathbb{R}$ will be called a \emph{linear query}, if it is defined by a function $\query_{1} : \domain \rightarrow \mathbb{R}$ such that $\func{\query} {\sampleSet} \coloneqq \frac{1}{\sampleSize} \stackrel[i=1]{\sampleSize}{\sum}\func{\query_{1}}{s_{i}}$ (for simplicity we will denote $\query_{1}$ simply as $\query$ throughout the paper). If $\query: \domain \rightarrow \left[-\Delta, \Delta \right]$ it will be called a \emph{$\Delta$-bounded linear query}. The set of $\Delta$-bounded linear queries will be denoted $\queriesFamily_{\Delta}$.
\end{definition}

In this context, there is a ``correct'' answer the mechanism can produce for a given query, defined as the value of the function on the sample set or distribution, and its distance from the response provided by the mechanism can be thought of as the mechanism's error.
\begin{definition}[Sample accuracy, distribution accuracy]
Given $0 \le \epsilon$, $0 \le \delta \le 1$, a distribution $\distDomainOfSets$, and a query $\query$, a mechanism $\mechanism : \domainOfSets \times \queriesFamily \rightarrow \mathbb{R}$ will be called \emph{$\tuple{\epsilon}{\delta}$-Sample Accurate with respect to $\distDomainOfSets$ and $\query$}, if 
\[
\prob{\sampleSetRV \sim \distDomainOfSets, \rangeRV \sim \mechanismFunc{\sampleSetRV, \query}}{\left| \rangeRV - \func{\query}{\sampleSetRV} \right| > \epsilon} \le \delta.
\]
Such a mechanism will be called \emph{$\tuple{\epsilon}{\delta}$-Distribution Accurate with respect to $\distDomainOfSets$ and $\query$} if 
\[
\prob{\sampleSetRV \sim \distDomainOfSets, \rangeRV \sim \mechanismFunc{\sampleSetRV, \query}}{\left| \rangeRV - \func{\query}{\distDomainOfSets} \right| > \epsilon} \le \delta,
\]
where $\func{\query}{\distDomainOfSets} \coloneqq \expectation{\sampleSetRV \sim \distDomainOfSets}{\func{\query}{\sampleSetRV}}$.
When there exists a family of distributions $\distFamily$ and a family of queries $\queriesFamily$ such that a mechanism $\mechanism$ is $\tuple{\epsilon}{\delta}$-Sample (Distribution) Accurate for all $\dist \in \distFamily$ and for all $\query \in \queriesFamily$, then $\mechanism$ will be called \emph{$\tuple{\epsilon}{\delta}$-Sample (Distribution) Accurate with respect to $\distFamily$ and $\queriesFamily$}.

A sequence of $\numOfIterations$ mechanisms $\bar{\mechanism}$ where $\forall i \in \left[ \numOfIterations \right] : \mechanism_{i} : \domainOfSets \times \queriesFamily \rightarrow \mathbb{R}$ which respond to a sequence of $\numOfIterations$ (potentially adaptively chosen) queries $\query_1, \ldots \query_{\numOfIterations}$ will be called \emph{$\tuple{\epsilon}{\delta}$-$\numOfIterations$-Sample Accurate with respect to $\distDomainOfSets$ and $q_1, \ldots q_{\numOfIterations}$} if 
\[
\prob{\sampleSetRV \sim \distDomainOfSets, \rangeRV_{i} \sim \func{\mechanism_{i}}{\sampleSetRV, \query_{i}}}{\underset{i \in \numOfIterations}{\max} \left| \rangeRV_{i} - \func{\query_i}{\sampleSetRV} \right| > \epsilon} \le \delta,
\]
and \emph{$\tuple{\epsilon}{\delta}$-$\numOfIterations$-Distribution Accurate with respect to $\distDomainOfSets$ and $q_1, \ldots q_{\numOfIterations}$} if
\[
\prob{\sampleSetRV \sim \distDomainOfSets, \rangeRV_{i} \sim \func{\mechanism_{i}}{\sampleSetRV, \query_{i}}}{\underset{i \in \numOfIterations}{\max} \left| \rangeRV_{i} - \func{\query_i}{\distDomainOfSets} \right| > \epsilon} \le \delta.
\]
When considering an adaptive process, accuracy is defined with respect to the analyst, and the probabilities are taken also over the choice of the coin tosses by the adaptive mechanism.\footnote{If the adaptive mechanism invokes a stateful sub-mechanism multiple times, we specify that the mechanism is Sample (Distribution) Accurate if it is Sample (Distribution) Accurate given any reachable internal record. Again, a somewhat more involved treatment might consider the \emph{probability} that a particular internal state of the mechanism could be reached.}
\end{definition}

We denote by $\mathbb{V}$ the set of views consisting of responses in $\mathbb{R}$.

\subsection{LSS Implies Generalization}
As a step toward showing that LS Stability implies a high probability generalization, we first show a generalization of expectation result.
We do so, as a tool, specifically for a mechanism that returns a query as its output. Intuitively, this allows us to wrap an entire adaptive process into a single mechanism. Analyzing the potential of the mechanism to generate an overfitting query is a natural way to learn about the generalization capabilities of the mechanism.

\begin{theorem}[Generalization of expectation]\label{thm:genOfExpectation}
Given $0 \le \epsilon, \delta \le 1$, a distribution $\distDomainOfSets$, a query $\query$, and a mechanism $\mechanism : \domainOfSets \times \queriesFamily \rightarrow \queriesFamily_{\Delta}$, if $\distFunc{\querySet_{\epsilon}} < \delta$, then
\[
\left| \expectation{\sampleSetRV \sim \distDomainOfSets, \queryRV' \sim \mechanismFunc{\sampleSetRV, \query}}{\func{\queryRV'}{\distDomainOfSets} - \func{\queryRV'}{\sampleSetRV}} \right| < 2 \Delta \left(\epsilon + \delta \right).
\]
\end{theorem}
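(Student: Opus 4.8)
The quantity to bound is the expected gap between the distribution value and the sample value of the output query $\queryRV'$, where $\queryRV'$ is itself drawn from $\mechanismFunc{\sampleSetRV, \query}$. The plan is to expand this expectation by writing $\func{\queryRV'}{\sampleSetRV} = \frac{1}{\sampleSize}\sum_i \func{\queryRV'}{s_i}$ and $\func{\queryRV'}{\distDomainOfSets} = \expectation{\sampleSetRV' \sim \distDomainOfSets}{\func{\queryRV'}{\sampleSetRV'}}$, and then to exploit the fact that once we have fixed the output query $\query'$, the ``distribution value'' is exactly the prior expectation $\expectation{\domainRV \sim \distFunc{\domainVal}}{\func{\query'}{\domainRV}}$ over a single element, while the ``sample value'' corresponds to evaluating $\query'$ on the sample elements — which, conditioned on having observed output $\query'$, are governed by the posterior $\distFuncDep{\domainVal}{\query'}$. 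Thus the gap is controlled by the statistical distance between prior and posterior over $\domain$, i.e. by the stability loss $\func{\ell}{\query'}$.

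**Key steps in order.** First I would condition on the value $\query' = \mechanismFunc{\sampleSetRV, \query}$ and rewrite the inner expectation: for a fixed $\query'$, $\expectation{\sampleSetRV \sim \distFuncDep{\cdot}{\query'}}{\func{\query'}{\sampleSetRV}} = \expectation{\domainRV \sim \distFuncDep{\domainVal}{\query'}}{\func{\query'}{\domainRV}}$ by linearity of the query and the fact that the induced single-element posterior is the average of the coordinate posteriors. Second, I would note that $\func{\query'}{\distDomainOfSets} = \expectation{\domainRV \sim \distFunc{\domainVal}}{\func{\query'}{\domainRV}}$, so the per-query gap is $\bigl|\sum_{\domainVal}(\distFunc{\domainVal} - \distFuncDep{\domainVal}{\query'})\func{\query'}{\domainVal}\bigr|$; since $\query'$ is $\Delta$-bounded, this is at most $2\Delta$ times the total variation distance between prior and posterior, which is exactly $\func{\ell}{\query'}$ (the statistical distance from Definition~\ref{def:stabLoss}). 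Third, I would take the expectation over $\query' \sim \distFunc{\query'}$ (the marginal over output queries) and split into the stable part $\querySet \setminus \querySet_\epsilon$, where $\func{\ell}{\query'} \le \epsilon$, contributing at most $2\Delta\epsilon$, and the unstable part $\querySet_\epsilon$, which by hypothesis has $\distFunc{\querySet_\epsilon} < \delta$ and on which $\func{\ell}{\query'} \le 1$ trivially, contributing at most $2\Delta\delta$. Adding these gives the claimed $2\Delta(\epsilon + \delta)$.

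**Main obstacle.** The subtle point — and where I would spend the most care — is the exchange of expectations in the first step: justifying that $\expectation{\sampleSetRV \sim \distDomainOfSets, \queryRV'\sim\mechanismFunc{\sampleSetRV,\query}}{\func{\queryRV'}{\sampleSetRV}}$, when reorganized by conditioning on $\queryRV' = \query'$, equals $\expectation{\queryRV'}{\expectation{\domainRV \sim \distFuncDep{\cdot}{\queryRV'}}{\func{\queryRV'}{\domainRV}}}$. This requires being precise about the joint distribution $\distJoint{\domainOfSets}{\querySet}{\query}$ and the definition of the induced distribution over $\domain$ (sampling a uniformly random coordinate of $\sampleSet$), and checking that the posterior over a single sampled element given $\query'$ is indeed $\distFuncDep{\domainVal}{\query'}$. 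Once this bookkeeping is done, the bound $|\cdot| \le 2\Delta\,\mathrm{SD}(\text{prior},\text{posterior})$ is the standard fact that integrating a function bounded in $[-\Delta,\Delta]$ against a signed measure of total mass zero is bounded by $2\Delta$ times its positive variation, and the final split is immediate from $\distFunc{\querySet_\epsilon} < \delta$.
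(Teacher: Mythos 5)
Your proposal is correct and follows essentially the same route as the paper's proof: condition on the output query $\query'$, use the identity $\func{\query'}{\sampleSet} = \sum_{\domainVal} \distFuncDep{\domainVal}{\sampleSet}\,\func{\query'}{\domainVal}$ to express the sample value against the posterior and the distribution value against the prior, bound the per-query gap by $\Delta \sum_{\domainVal}\left|\distFunc{\domainVal} - \distFuncDep{\domainVal}{\query'}\right| = 2\Delta\,\func{\ell}{\query'}$, and split the outer expectation over $\querySet_{\epsilon}$ and its complement. The ``main obstacle'' you flag (the exchange of expectations via the joint distribution over $\domainOfSets \times \queriesFamily_{\Delta}$ and the induced single-element posterior) is exactly the bookkeeping the paper relegates to its appendix via its Bayes-rule proposition.
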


\begin{proof}
First notice that,
\[
\func{\query}{\sampleSet} = \frac{1}{\sampleSize} \sum_{i=1}^{\sampleSize} \func{\query}{\sampleSet_{i}} = \sum_{\domainVal \in \domain} \distFuncDep{\domainVal}{\sampleSet} \cdot \func{\query}{\domainVal}
\]
where $\sampleSet_{1}, \ldots \sampleSet_{\sampleSize}$ denotes the elements of the sample set $\sampleSet$.
Using this identity we separately analyze the expected value of the returned query with respect to the distribution, and with respect to the sample set (detailed proof can be found in Appendix \ref{sec:genProofs}).

\[
\expectation{\sampleSetRV \sim \distDomainOfSets, \queryRV' \sim \mechanismFunc{\sampleSetRV, \query}}{\func{\queryRV'}{\distDomainOfSets}} = \sum_{\query' \in \queriesFamily_{\Delta}} \distFunc{\query'} \sum_{\domainVal \in \domain} \distFunc{\domainVal} \cdot \func{\query'}{\domainVal}
\]

\[
\expectation{\sampleSetRV \sim \distDomainOfSets, \queryRV' \sim \mechanismFunc{\sampleSetRV, \query}}{\func{\queryRV'}{\sampleSetRV}} = \sum_{\query' \in \queriesFamily_{\Delta}} \distFunc{\query'} \sum_{\domainVal \in \domain} \distFuncDep{\domainVal}{\query'} \cdot \func{\query'}{\domainVal}
\]

Now we can calculate the difference:
\begin{align*}
\left| \expectation{\sampleSetRV \sim \distDomainOfSets, \queryRV' \sim \mechanismFunc{\sampleSetRV, \query}}{\func{\queryRV'}{\distDomainOfSets} - \func{\queryRV'}{\sampleSetRV}} \right| & = \left| \sum_{\query' \in \queriesFamily_{\Delta}} \distFunc{\query'} \sum_{\domainVal \in \domain} \left( \distFunc{\domainVal} - \distFuncDep{\domainVal}{\query'} \right) \cdot \func{\query'}{\domainVal} \right|
\\ & \overset{\left( 1 \right)}{\le} \sum_{\query' \in \queriesFamily_{\Delta}} \distFunc{\query} \overbrace{\sum_{\domainVal \in \domain} \left| \distFunc{\domainVal} - \distFuncDep{\domainVal}{\query'} \right|}^{= 2 \func{\ell}{\query'}} \cdot \Delta
\\ & = 2 \Delta \cdot \left( \overbrace{\sum_{\query' \notin \querySet_{\epsilon}} \distFunc{\query'}}^{\le 1} \cdot \overbrace{\func{\ell}{\query'}}^{\le \epsilon} + \overbrace{\sum_{\query' \in \querySet_{\epsilon}} \distFunc{\query'}}^{< \delta} \cdot \overbrace{\func{\ell}{\query'}}^{\le 1} \right)
\\ & \overset{\left( 2 \right)}{<} 2 \Delta \left(\epsilon + \delta \right),
\end{align*}
where (1) results from the definition of $\queriesFamily_{\Delta}$ and the triangle inequality, and (2) from the condition that $\distFunc{\querySet_{\epsilon}} < \delta$.
\end{proof}

\begin{corollary}
Given $0 \le \epsilon, \delta \le 1$, a distribution $\distDomainOfSets$, and a query $\query$, if a mechanism $\mechanism : \domainOfSets \times \queriesFamily \rightarrow \queriesFamily_{\Delta}$ is $\tuple{\epsilon}{\delta}$-LSS with respect to $\distDomainOfSets, \query$, then
\[
\left| \expectation{\sampleSetRV \sim \distDomainOfSets, \queryRV' \sim \mechanismFunc{\sampleSetRV, \query}}{\func{\queryRV'}{\distDomainOfSets} - \func{\queryRV'}{\sampleSetRV}} \right| < 2 \Delta \left(2 \epsilon + \frac{\delta}{\epsilon} \right).
\]
\end{corollary}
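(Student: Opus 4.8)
The plan is to read this off directly from the two results just established, taking the range of $\mechanism$ to be $\queriesFamily_{\Delta}$, so that the stability-loss machinery of Definition~\ref{def:stabLoss} applies with the ``responses'' being queries and the set $\querySet_{2\epsilon}$ of $2\epsilon$-unstable queries playing the role of $\rangeSet_{2\epsilon}$.

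First I would invoke Lemma~\ref{lem:notToManyBad}: since $\mechanism$ is $\tuple{\epsilon}{\delta}$-LSS with respect to $\distDomainOfSets$ and $\query$, the lemma yields $\distFunc{\querySet_{2\epsilon}} < \frac{\delta}{\epsilon}$. Then I would feed this into Theorem~\ref{thm:genOfExpectation}, applied with the parameter pair $\tuple{2\epsilon}{\frac{\delta}{\epsilon}}$ in place of $\tuple{\epsilon}{\delta}$: its hypothesis ``$\distFunc{\querySet_{2\epsilon}} < \frac{\delta}{\epsilon}$'' is exactly the conclusion of the previous step, and its conclusion is precisely the claimed bound $2\Delta\left(2\epsilon + \frac{\delta}{\epsilon}\right)$ on $\left| \expectation{\sampleSetRV \sim \distDomainOfSets, \queryRV' \sim \mechanismFunc{\sampleSetRV, \query}}{\func{\queryRV'}{\distDomainOfSets} - \func{\queryRV'}{\sampleSetRV}} \right|$.

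There is essentially no hard step here; the only thing that needs care is the bookkeeping on parameter ranges. Lemma~\ref{lem:notToManyBad} requires $\delta \le \epsilon$ (equivalently $\frac{\delta}{\epsilon} \le 1$), which is the standing hypothesis, and Theorem~\ref{thm:genOfExpectation} requires its second argument to lie in $[0,1]$, again guaranteed by $\frac{\delta}{\epsilon} \le 1$. In the remaining degenerate regime $2\epsilon > 1$, the asserted bound already exceeds $2\Delta$, and since every $\queryRV' \in \queriesFamily_{\Delta}$ is $\Delta$-bounded the left-hand side is at most $2\Delta$, so the inequality holds trivially. Thus the entire proof is: chain Lemma~\ref{lem:notToManyBad} into Theorem~\ref{thm:genOfExpectation} and observe that the substituted constants collapse to the stated expression.
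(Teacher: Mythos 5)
Your proposal is correct and is exactly the paper's argument: the paper proves this corollary by the same two-step chain, applying Lemma~\ref{lem:notToManyBad} to get $\distFunc{\querySet_{2\epsilon}} < \frac{\delta}{\epsilon}$ and then invoking Theorem~\ref{thm:genOfExpectation} with parameters $\tuple{2\epsilon}{\frac{\delta}{\epsilon}}$. Your extra bookkeeping on the parameter ranges (including the trivial regime where the bound exceeds $2\Delta$) is sound and, if anything, slightly more careful than the paper's one-line justification.
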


\begin{proof}
This is a direct result of combining Theorem \ref{thm:genOfExpectation} with Lemma \ref{lem:notToManyBad}.
\end{proof}

We proceed to lift this guarantee from expectation to high probability, using a thought experiment known as the Monitor Mechanism, which was introduced by \cite{BNSSSU16}. Intuitively, it runs a large number of independent copies of an underlying mechanism, and exposes the results of the least-distribution-accurate copy as its output.

\begin{definition}[The Monitor Mechanism]
The \emph{Monitor Mechanism} is a function $\text{Mon}_{\bar{\mechanism}}: \left(\domainOfSets \right)^{\numOfMetaIterations} \times \analystsFamily \rightarrow \queriesFamily \times \mathbb{R} \times \left[\numOfMetaIterations \right]$ which is parametrized by a sequence of $\numOfIterations$ mechanisms $\bar{\mechanism}$ where $\forall i \in \left[ \numOfIterations \right]$, $\mechanism_{i} : \domainOfSets \times \queriesFamily \rightarrow \mathbb{R}$. Given a series of sample sets $\bar{\sampleSet} \in \left(\domainOfSets \right)^{\numOfMetaIterations}$ and analyst $\analyst$ as input, it runs the adaptive mechanism between $\bar{\mechanism}$ and $\analyst$ for $\numOfMetaIterations$ independent times (which in particular means neither of them share state across those iterations) and outputs a query $\query \in \queriesFamily$, response $\rangeVal \in \mathbb{R}$ and index $i \in \numOfMetaIterations$, based on the following process: 
\begin{center}
\fbox{
 \begin{minipage}{\linewidth - 15pt}
Monitor Mechanism $\text{Mon}_{\bar{\mechanism}}$
\hrule

\textbf{Input:} $\bar{\sampleSet} \in \left(\domainOfSets \right)^{\numOfMetaIterations}, \analyst \in \analystsFamily$

\textbf{Output:} $\query \in \queriesFamily, \rangeVal \in \mathbb{R}, i \in \numOfMetaIterations$

\textbf{for} $i = 1,...,\numOfMetaIterations$ :

~~~$\viewVal^{i} \gets \func{\text{Adp}_{\bar{\mechanism}}}{\sampleSet_{i}, \analyst}$

~~~$\tuple{\tilde{\query}^{i}}{\tilde{\rangeVal}^{i}} \gets \underset{\tuple{\query}{\rangeVal} \in \viewVal^{i}}{\arg\max} \left|\func{\query}{\distDomainOfSets} - \rangeVal \right|$\footnote{We slightly abuse notation since $\query$ is not part of $\viewVal^{i}$, but since it can be recovered from it, this term is well defined.}

~~~\textbf{if} $\func{\tilde{\query}^{i}}{\distDomainOfSets} \ge \tilde{\rangeVal}^{i}$:\footnote{
The addition of this condition ensures that $\func{\query}{\distDomainOfSets} \ge \rangeVal$ for the output of the mechanism, a fact that will be used later in the proof of Claim \ref{clm:Mondisterror}.}

~~~~~~$\query^{i} \gets \tilde{\query}^{i}$

~~~~~~$\rangeVal^{i} \gets \tilde{\rangeVal}^{i}$

~~~\textbf{else}:

~~~~~~$\query^{i} \gets -\tilde{\query}^{i}$

~~~~~~$\rangeVal^{i} \gets -\tilde{\rangeVal}^{i}$

$i^{*} \gets \underset{i \in \left[\numOfMetaIterations \right]}{\arg\max} \left(\func{\query^{i}}{\distDomainOfSets} - \rangeVal^{i} \right)$

\textbf{return} $\left(\query^{i^{*}}, \rangeVal^{i^{*}}, i^{*} \right)$
 \end{minipage}
}
\end{center}
\end{definition}

Notice that the monitor mechanism makes use of the ability to evaluate queries according to the true underlying distribution.\footnote{Of course, no realistic mechanism would have such an ability; the monitor mechanism is simply a thought experiment used as a proof technique.} 

We begin by proving a few properties of the monitor mechanism. In this claim and the following ones, the probabilities and expectations are taken over the randomness of the choice of $\bar{\sampleSet} \in \left(\domainOfSets \right)^{\numOfMetaIterations}$ (which is assumed to be drawn iid from $\distDomainOfSets$) and the internal probability of $\text{Adp}_{\bar{\mechanism}}$. Proofs can be found in Appendix \ref{sec:genProofs}.

\begin{claim}\label{clm:monLSS}
Given $0 \le \epsilon, \delta \le 1$, $\numOfMetaIterations \in \mathbb{N}$, a distribution $\distDomainOfSets$, and an analyst $\analyst : \mathbb{V} \rightarrow \queriesFamily_{\Delta}$, if a sequence of $\numOfIterations$ mechanisms $\bar{\mechanism}$ where $\forall i \in \left[ \numOfIterations \right]$, $\mechanism_{i} : \domainOfSets \times \queriesFamily_{\Delta} \rightarrow \mathbb{R}$ is $\tuple{\epsilon}{\delta}$-$\numOfIterations$-LSS with respect to $\distDomainOfSets, \analyst$, then
\[
\left| \expectation{\bar{\sampleSetRV} \sim \distDomainOfSets^{\numOfMetaIterations}, \left( \queryRV, \rangeRV, I \right) \sim \func{\text{Mon}_{\bar{\mechanism}}}{\bar{\sampleSetRV}, \analyst}}{\func{\query}{\distDomainOfSets} - \func{\queryRV}{\sampleSetRV_{I}}} \right| < 2 \Delta \left(2 \epsilon + \frac{\numOfMetaIterations \delta}{\epsilon} \right).
\]
\end{claim}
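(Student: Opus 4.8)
The plan is to run the argument of Theorem~\ref{thm:genOfExpectation} with $\text{Mon}_{\bar{\mechanism}}$ in the role of the query-returning mechanism, and to reduce the desired bound to the stability loss of a single run of $\text{Adp}_{\bar{\mechanism}}$. First, exactly as in Theorem~\ref{thm:genOfExpectation}, I would expand the two terms using linearity of the returned query: $\func{\queryRV}{\distDomainOfSets} = \sum_{\domainVal}\distFunc{\domainVal}\func{\queryRV}{\domainVal}$ and $\func{\queryRV}{\sampleSetRV_{I}} = \sum_{\domainVal}\func{\queryRV}{\domainVal}\cdot\tfrac{1}{\sampleSize}\sum_{l}\mathbbm{1}[(\sampleSetRV_{I})_{l}=\domainVal]$. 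Taking the expectation conditioned on the monitor's output $(\query,\rangeVal,i)$, the empirical-frequency vector of $\sampleSetRV_{i}$ becomes the posterior distribution $\distPUpInd{}{\query,\rangeVal,i}$ over $\domain$ induced by a uniformly random element of $\sampleSetRV_{i}$ given this output, so that
\[
\expectation{\bar{\sampleSetRV},(\queryRV,\rangeRV,I)}{\func{\queryRV}{\distDomainOfSets} - \func{\queryRV}{\sampleSetRV_{I}}} = \expectation{\bar{\sampleSetRV},(\queryRV,\rangeRV,I)}{\sum_{\domainVal}\bigl(\distFunc{\domainVal} - \distPFunc{\queryRV,\rangeRV,I}{\domainVal}\bigr)\func{\queryRV}{\domainVal}},
\]
and since $\queryRV\in\queriesFamily_{\Delta}$ the absolute value of the inner sum is at most $2\Delta$ times the statistical distance between the prior $\distFunc{\cdot}$ over $\domain$ and the posterior $\distPUpInd{}{\queryRV,\rangeRV,I}$.

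The heart of the proof is to identify this posterior. Let $\mechanism'$ denote the single-run mechanism that, given $\sampleSet$ and $\analyst$, runs $\text{Adp}_{\bar{\mechanism}}$ and then deterministically post-processes its view into the (sign-corrected) maximally-distribution-inaccurate pair $\tuple{\query}{\rangeVal}$, exactly as $\text{Mon}_{\bar{\mechanism}}$ does inside one iteration of its loop; by Theorem~\ref{thm:postprocessing}, $\mechanism'$ is $\tuple{\epsilon}{\delta}$-LSS with respect to $\distDomainOfSets$ and $\analyst$. I claim that, conditioned on $I=i_{0}$, the posterior over $\sampleSetRV_{i_{0}}$ (hence over $\domain$) given the monitor's output equals the posterior over $\sampleSetRV_{i_{0}}$ given only the output $\tuple{\query}{\rangeVal}$ of the $i_{0}$-th (independent) copy of $\mechanism'$. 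Indeed, $\{I=i_{0}\}$ is the event $\{\func{\query^{i_{0}}}{\distDomainOfSets}-\rangeVal^{i_{0}}\ge\max_{j\ne i_{0}}(\func{\query^{j}}{\distDomainOfSets}-\rangeVal^{j})\}$, and once $\tuple{\query^{i_{0}}}{\rangeVal^{i_{0}}}$ is fixed to $\tuple{\query}{\rangeVal}$ the left-hand side is a determined number, so the indicator of $\{I=i_{0}\}$ is a function of the remaining $\numOfMetaIterations-1$ copies only, which are independent of $\sampleSetRV_{i_{0}}$; it therefore enters the Bayes update for $\sampleSetRV_{i_{0}}$ as a factor that does not depend on $\sampleSetRV_{i_{0}}$ and cancels upon normalization. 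Hence the statistical distance above is exactly the stability loss of $\mechanism'$ on $\tuple{\query}{\rangeVal}$, and averaging over which index is selected gives
\[
\Bigl|\,\expectation{\bar{\sampleSetRV},(\queryRV,\rangeRV,I)}{\func{\queryRV}{\distDomainOfSets} - \func{\queryRV}{\sampleSetRV_{I}}}\Bigr| \le 2\Delta\cdot\expectation{\bar{\sampleSetRV},(\queryRV,\rangeRV,I)}{\func{\ell_{\distDomainOfSets}^{\analyst}}{\tuple{\queryRV}{\rangeRV}}},
\]
where $\func{\ell_{\distDomainOfSets}^{\analyst}}{\cdot}$ is the stability loss of $\mechanism'$.

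It remains to bound this expected loss. The stability loss is always at most $1$ and at most $2\epsilon$ on every output outside the set $\rangeSet_{2\epsilon}^{\distDomainOfSets,\analyst}$ of $2\epsilon$-unstable outputs of $\mechanism'$, so $\func{\ell_{\distDomainOfSets}^{\analyst}}{\tuple{\queryRV}{\rangeRV}}\le 2\epsilon + \mathbbm{1}\bigl[\tuple{\queryRV}{\rangeRV}\in\rangeSet_{2\epsilon}^{\distDomainOfSets,\analyst}\bigr]$. Taking expectations and union-bounding over the $\numOfMetaIterations$ copies, $\Pr\bigl[\tuple{\queryRV}{\rangeRV}\in\rangeSet_{2\epsilon}^{\distDomainOfSets,\analyst}\bigr]\le\sum_{i=1}^{\numOfMetaIterations}\Pr\bigl[\tuple{\query^{i}}{\rangeVal^{i}}\in\rangeSet_{2\epsilon}^{\distDomainOfSets,\analyst}\bigr]$, and since each copy is marginally distributed as the output of $\mechanism'$, Lemma~\ref{lem:notToManyBad} bounds each term by $\delta/\epsilon$. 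Therefore the expected loss is strictly below $2\epsilon + \numOfMetaIterations\delta/\epsilon$, and combining with the previous display gives the claimed bound $2\Delta(2\epsilon+\numOfMetaIterations\delta/\epsilon)$.

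The step I expect to be the main obstacle is the posterior-identification argument: one must see that selecting the worst of the $\numOfMetaIterations$ runs reveals nothing about the winning run's sample set beyond what its reported query and response already reveal. This is precisely where the independence of the $\numOfMetaIterations$ runs (including their internal coins) is used, and it is what confines the $\numOfMetaIterations$-fold blow-up to the $\delta$ term while leaving the $\epsilon$ term unchanged. A related subtlety is that this conditional-independence argument needs the reported response $\rangeVal$ to be conditioned on, not merely $\query$ and $i$, which is why it matters that $\text{Mon}_{\bar{\mechanism}}$ returns $\rangeVal$ as part of its output.
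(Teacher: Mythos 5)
Your proof is correct and follows essentially the same route as the paper's: view the per-copy query as a post-processing of the view (hence $\tuple{\epsilon}{\delta}$-LSS by Theorem~\ref{thm:postprocessing}), apply Lemma~\ref{lem:notToManyBad} and a union bound over the $\numOfMetaIterations$ independent copies to confine the blow-up to the $\delta$ term, and then run the expectation computation of Theorem~\ref{thm:genOfExpectation}. Your explicit posterior-identification step --- that conditioning on the selection event $I=i_{0}$ contributes only a factor independent of $\sampleSetRV_{i_{0}}$ once $\tuple{\query}{\rangeVal}$ is fixed, so the relevant posterior is exactly that of a single run --- is precisely the justification the paper leaves implicit when it invokes Theorem~\ref{thm:genOfExpectation} for the monitor, and it is correctly argued.
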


\begin{claim}\label{clm:Monsampleerror}
Given $0 \le \epsilon, \delta \le 1$, $\numOfMetaIterations \in \mathbb{N}$, a distribution $\distDomainOfSets$, and an analyst $\analyst : \mathbb{V} \rightarrow \queriesFamily_{\Delta}$, if a sequence of $\numOfIterations$ mechanisms $\bar{\mechanism}$ where $\forall i \in \left[ \numOfIterations \right] , \mechanism_{i} : \domainOfSets \times \queriesFamily_{\Delta} \rightarrow \mathbb{R}$ is $\tuple{\epsilon}{\delta}$-$\numOfIterations$-Sample Accurate with respect to $\distDomainOfSets, \analyst$, then
\[
\expectation{\bar{\sampleSetRV} \sim \distDomainOfSets^{\numOfMetaIterations}, \left( \queryRV, \rangeRV, I \right) \sim \func{\text{Mon}_{\bar{\mechanism}}}{\bar{\sampleSetRV}, \analyst}}{\func{\queryRV}{\sampleSetRV_{I}} - \rangeRV} \le \epsilon + 2 \numOfMetaIterations \delta \Delta.
\]
\end{claim}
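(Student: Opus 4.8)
The plan is to split the target expectation according to whether \emph{every} one of the $\numOfMetaIterations$ independent runs of the adaptive mechanism carried out inside the monitor is sample accurate. Before doing so I would record a normalization: for any $\query \in \queriesFamily_{\Delta}$ and any sample set $\sampleSet$, the correct answer $\func{\query}{\sampleSet}$ lies in $\left[ -\Delta, \Delta \right]$, so we may assume without loss of generality that each sub-mechanism $\mechanism_{i}$ outputs values in $\left[ -\Delta, \Delta \right]$ --- clipping a response to this interval can only decrease $\left| \rangeVal - \func{\query}{\sampleSet} \right|$ and therefore preserves $\tuple{\epsilon}{\delta}$-$\numOfIterations$-sample accuracy. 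Since the monitor's output query $\queryRV$ is either a query issued in one of the runs (hence in $\queriesFamily_{\Delta}$) or the negation of one (still $\Delta$-bounded), and $\rangeRV$ is $\pm$ one of the (now bounded) responses, this yields the crude bound $\left| \func{\queryRV}{\sampleSet} - \rangeRV \right| \le 2\Delta$ for every sample set $\sampleSet$.

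Let $G$ be the event that in each of the $\numOfMetaIterations$ runs all $\numOfIterations$ responses are within $\epsilon$ of their queries' values on that run's sample set. Each run is an independent instance of a process that is $\tuple{\epsilon}{\delta}$-$\numOfIterations$-sample accurate with respect to $\distDomainOfSets$ and $\analyst$, so it fails with probability at most $\delta$; by a union bound $\text{Pr}\left[ G^{c} \right] \le \numOfMetaIterations \delta$. On $G$ I would argue as follows: the monitor outputs $\left( \queryRV, \rangeRV, I \right)$ where $\tuple{\queryRV}{\rangeRV} = \pm \tuple{\query}{\rangeVal}$ for some query--response pair $\tuple{\query}{\rangeVal}$ that was actually produced in run $I$ on the sample set $\sampleSetRV_{I}$; since run $I$ is in $G$ we have $\left| \func{\query}{\sampleSetRV_{I}} - \rangeVal \right| \le \epsilon$, and flipping the sign of both coordinates does not change this absolute difference, so $\func{\queryRV}{\sampleSetRV_{I}} - \rangeRV \le \left| \func{\queryRV}{\sampleSetRV_{I}} - \rangeRV \right| \le \epsilon$.

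Combining the two cases through the law of total expectation gives
\[
\mathbb{E}\left[ \func{\queryRV}{\sampleSetRV_{I}} - \rangeRV \right] \le \epsilon \cdot \text{Pr}\left[ G \right] + 2\Delta \cdot \text{Pr}\left[ G^{c} \right] \le \epsilon + 2 \numOfMetaIterations \delta \Delta ,
\]
using $\epsilon, \Delta \ge 0$ and $\text{Pr}\left[ G \right] \le 1$. The one point that needs care is the contribution of $G^{c}$: without a uniform bound on $\func{\queryRV}{\sampleSetRV_{I}} - \rangeRV$ a single low-probability run with an arbitrarily inaccurate response could blow up the expectation, so the clipping normalization in the first step is precisely what converts the bad event into the controlled term $2\Delta\,\text{Pr}\left[ G^{c} \right]$. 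The remaining ingredients --- the union bound over the independent runs, the invariance of the relevant absolute difference under the sign-flip step, and the fact that the negation of a $\Delta$-bounded linear query is again $\Delta$-bounded --- are routine.
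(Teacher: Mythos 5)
Your proof follows essentially the same route as the paper's: a union bound over the $\numOfMetaIterations$ independent runs bounds the failure probability by $\numOfMetaIterations\delta$, the good event contributes at most $\epsilon$, and the bad event is charged $2\Delta$. The only substantive difference is that you explicitly justify the $2\Delta$ bound on the bad event via a clipping normalization of the responses, a point the paper's one-line proof leaves implicit by asserting that the maximal error is $2\Delta$ merely because query values lie in $\left[-\Delta,\Delta\right]$.
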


\begin{claim}\label{clm:Mondisterror}
Given $0 \le \epsilon, \delta \le 1$, $\numOfMetaIterations \in \mathbb{N}$, a distribution $\distDomainOfSets$, and an analyst $\analyst : \mathbb{V} \rightarrow \queriesFamily_{\Delta}$, if a a sequence of $\numOfIterations$ mechanisms $\bar{\mechanism}$ where $\forall i \in \left[ \numOfIterations \right] , \mechanism_{i} : \domainOfSets \times \queriesFamily_{\Delta} \rightarrow \mathbb{R}$ is \textbf{not} $\tuple{\epsilon}{\delta}$-$\numOfIterations$-Distribution Accurate with respect to $\distDomainOfSets, \analyst$, then
\[
\expectation{\bar{\sampleSetRV} \sim \distDomainOfSets^{\numOfMetaIterations}, \left( \queryRV, \rangeRV, I \right) \sim \func{\text{Mon}_{\bar{\mechanism}}}{\bar{\sampleSetRV}, \analyst}}{\func{\queryRV}{\distDomainOfSets} - \rangeRV} > \epsilon \left(1 - \left(1 -\delta \right)^{\numOfMetaIterations} \right).
\]
\end{claim}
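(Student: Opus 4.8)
The plan is to combine two facts via a Markov-type inequality: (i) by construction the Monitor's output always has $\func{\queryRV}{\distDomainOfSets} - \rangeRV \ge 0$, and it exceeds $\epsilon$ whenever at least one of the $\numOfMetaIterations$ internal runs produces a view that is distribution-inaccurate; and (ii) the failure of $\tuple{\epsilon}{\delta}$-$\numOfIterations$-Distribution Accuracy means each internal run is inaccurate with probability strictly more than $\delta$, so by independence across runs at least one of them is inaccurate with probability strictly more than $1 - (1-\delta)^{\numOfMetaIterations}$.

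First I would unwind the sign-flipping step of the Monitor. For run $i$, let $\tuple{\tilde\query^i}{\tilde\rangeVal^i}$ be the pair in $\viewVal^i$ maximizing $\left| \func{\query}{\distDomainOfSets} - \rangeVal \right|$, so that $\left| \func{\tilde\query^i}{\distDomainOfSets} - \tilde\rangeVal^i \right| = \max_{j \in \left[\numOfIterations\right]} \left| \rangeVal^i_j - \func{\query^i_j}{\distDomainOfSets} \right|$, where $\rangeVal^i_j,\query^i_j$ denote the $j$-th response and query of run $i$. Because distribution-evaluation is linear, $\func{-\tilde\query^i}{\distDomainOfSets} = -\func{\tilde\query^i}{\distDomainOfSets}$, so in either branch of the $\textbf{if}$ statement we obtain $\func{\query^i}{\distDomainOfSets} - \rangeVal^i = \left| \func{\tilde\query^i}{\distDomainOfSets} - \tilde\rangeVal^i \right| \ge 0$. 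Consequently the final $\arg\max$ over $i$ yields an output $\tuple{\query}{\rangeVal} = \tuple{\query^{i^*}}{\rangeVal^{i^*}}$ with $\func{\query}{\distDomainOfSets} - \rangeVal = \max_{i \in \left[\numOfMetaIterations\right]} \left( \func{\query^i}{\distDomainOfSets} - \rangeVal^i \right) \ge 0$.

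Next, let $E_i$ be the event that run $i$ is distribution-inaccurate, i.e. $\max_{j \in \left[\numOfIterations\right]} \left| \rangeVal^i_j - \func{\query^i_j}{\distDomainOfSets} \right| > \epsilon$; by the identity above, $E_i$ is exactly $\left\{ \func{\query^i}{\distDomainOfSets} - \rangeVal^i > \epsilon \right\}$. Since the $\numOfMetaIterations$ runs use independent sample sets drawn iid from $\distDomainOfSets$ and independent internal randomness, $E_1, \ldots, E_{\numOfMetaIterations}$ are mutually independent, and since $\bar{\mechanism}$ is \emph{not} $\tuple{\epsilon}{\delta}$-$\numOfIterations$-Distribution Accurate with respect to $\distDomainOfSets, \analyst$, each satisfies $\prob{}{E_i} > \delta$. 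Hence the probability that at least one $E_i$ occurs is $1 - \prod_{i}\left(1 - \prob{}{E_i}\right) > 1 - (1-\delta)^{\numOfMetaIterations}$, and on that event $\func{\query}{\distDomainOfSets} - \rangeVal \ge \func{\query^{i_0}}{\distDomainOfSets} - \rangeVal^{i_0} > \epsilon$ for any $i_0$ with $E_{i_0}$. Since $\func{\query}{\distDomainOfSets} - \rangeVal$ is nonnegative in general,
\[
\expectation{}{\func{\queryRV}{\distDomainOfSets} - \rangeRV} \ge \epsilon \cdot \prob{}{\exists i : E_i} > \epsilon\left(1 - (1-\delta)^{\numOfMetaIterations}\right),
\]
as claimed. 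The only delicate points are the sign-flipping bookkeeping — in particular that $\func{-\query}{\distDomainOfSets} = -\func{\query}{\distDomainOfSets}$, which is where linearity of the queries enters — and verifying that the internal runs are genuinely independent so that the $(1-\delta)^{\numOfMetaIterations}$ factor is valid; neither is a real obstacle.
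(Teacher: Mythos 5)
Your proof is correct and follows essentially the same route as the paper's: nonnegativity of $\func{\queryRV}{\distDomainOfSets} - \rangeRV$ from the Monitor's sign-flipping step, probability greater than $\delta$ of an $\epsilon$-sized distribution error in each of the $\numOfMetaIterations$ independent runs, hence probability greater than $1 - (1-\delta)^{\numOfMetaIterations}$ that the maximizing run exceeds $\epsilon$, and then the Markov-type lower bound on the expectation. Your explicit verification that $\func{-\tilde\query^i}{\distDomainOfSets} = -\func{\tilde\query^i}{\distDomainOfSets}$ via linearity is a detail the paper leaves implicit, but the argument is the same.
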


Finally, we combine these claims to show that LSS implies generalization with high probability.

\begin{theorem}[LSS implies generalization with high probability] \label{thm:highprob}
Given $0 \le \epsilon \le \Delta$, $0 \le \delta \le 1$, a distribution $\distDomainOfSets$, and an analyst $\analyst : \mathbb{V} \rightarrow \queriesFamily_{\Delta}$, if a sequence of $\numOfIterations$ mechanisms $\bar{\mechanism}$ where $\forall i \in \left[ \numOfIterations \right] , \mechanism_{i} : \domainOfSets \times \queriesFamily_{\Delta} \rightarrow \mathbb{R}$ is both $\tuple{\frac{\epsilon}{8 \Delta}}{\frac{\epsilon^{2} \delta}{4800 \Delta^{2}}}$-$\numOfIterations$-LSS and $\left(\frac{\epsilon}{8},\frac{\epsilon \delta}{600 \Delta}\right)$-$\numOfIterations$-Sample Accurate with respect to $\distDomainOfSets$ and $\analyst$, then it is $\tuple{\epsilon}{\delta}$-$\numOfIterations$-Distribution Accurate with respect to $\distDomainOfSets$ and $\analyst$.
\end{theorem}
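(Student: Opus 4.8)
The plan is to argue by contradiction using the Monitor Mechanism $\text{Mon}_{\bar{\mechanism}}$ run for $\numOfMetaIterations$ meta-iterations, tracking a single scalar: the monitor's expected distribution error $G \coloneqq \mathbb{E}\big[\func{\queryRV}{\distDomainOfSets} - \rangeRV\big]$, where $(\queryRV,\rangeRV,I) \sim \func{\text{Mon}_{\bar{\mechanism}}}{\bar{\sampleSetRV},\analyst}$ and $\bar{\sampleSetRV}$ is drawn iid from $\distDomainOfSets$. Suppose, for contradiction, that $\bar{\mechanism}$ is \emph{not} $\tuple{\epsilon}{\delta}$-$\numOfIterations$-Distribution Accurate with respect to $\distDomainOfSets$ and $\analyst$. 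Then Claim~\ref{clm:Mondisterror}, applied with the theorem's own $\epsilon,\delta$, gives $G > \epsilon\big(1-(1-\delta)^{\numOfMetaIterations}\big)$; choosing $\numOfMetaIterations = \lceil 2/\delta\rceil$ makes $(1-\delta)^{\numOfMetaIterations} \le e^{-2}$, so $G > (1-e^{-2})\epsilon > 0.86\,\epsilon$.

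For the matching upper bound, I would write $\func{\queryRV}{\distDomainOfSets} - \rangeRV = \big(\func{\queryRV}{\distDomainOfSets} - \func{\queryRV}{\sampleSetRV_I}\big) + \big(\func{\queryRV}{\sampleSetRV_I} - \rangeRV\big)$ and take expectations. The generalization term is bounded by Claim~\ref{clm:monLSS}, fed the hypothesis's LSS parameters $\epsilon' = \epsilon/(8\Delta)$ and $\delta' = \epsilon^{2}\delta/(4800\Delta^{2})$ (both in $[0,1]$ because $\epsilon \le \Delta$, with $\delta' \le \epsilon'$ since $\epsilon\delta \le \Delta \le 600\Delta$), yielding $\big|\mathbb{E}[\func{\queryRV}{\distDomainOfSets} - \func{\queryRV}{\sampleSetRV_I}]\big| < 2\Delta\big(2\epsilon' + \numOfMetaIterations\delta'/\epsilon'\big) = \tfrac{\epsilon}{2} + \tfrac{\numOfMetaIterations\epsilon\delta}{300}$. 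The sample-error term is bounded by Claim~\ref{clm:Monsampleerror}, fed the sample-accuracy parameters $\epsilon/8$ and $\epsilon\delta/(600\Delta)$, yielding $\mathbb{E}[\func{\queryRV}{\sampleSetRV_I} - \rangeRV] \le \tfrac{\epsilon}{8} + \tfrac{\numOfMetaIterations\epsilon\delta}{300}$. Since $\numOfMetaIterations\delta \le 3$, summing these gives $G < \tfrac{5\epsilon}{8} + \tfrac{\epsilon}{50} < 0.65\,\epsilon$, which contradicts the lower bound; hence $\bar{\mechanism}$ must be $\tuple{\epsilon}{\delta}$-$\numOfIterations$-Distribution Accurate.

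Essentially all the work is packaged inside Claims~\ref{clm:monLSS}--\ref{clm:Mondisterror}; at this level the one delicate choice is $\numOfMetaIterations$. It has to be large enough that $(1-\delta)^{\numOfMetaIterations}$ shrinks to a constant small enough that the Claim~\ref{clm:Mondisterror} lower bound exceeds $\tfrac{5}{8}\epsilon$, yet the contributions $\numOfMetaIterations\epsilon\delta/300$ from the other two claims grow linearly in $\numOfMetaIterations$, so one is forced to $\numOfMetaIterations = \Theta(1/\delta)$ — and then must check that the explicit constants $4800$ and $600$ in the hypotheses are calibrated precisely so that, with $\numOfMetaIterations = \lceil 2/\delta\rceil$, the total upper bound $\tfrac{5}{8}+\tfrac{1}{50}$ stays strictly below $1-e^{-2}$. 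The remaining things to verify are routine: that ``$\numOfIterations$-LSS'' and ``$\numOfIterations$-Sample Accurate'' are literally the hypotheses the two monitor claims need (both are statements about $\text{Adp}_{\bar{\mechanism}}$, which is exactly what those notions quantify over), that the $\numOfMetaIterations$ independent runs inside the monitor share no state, and that the degenerate cases $\epsilon=0$ or $\delta=0$ are excluded or trivial.
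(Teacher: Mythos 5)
Your proposal is correct and follows essentially the same route as the paper's proof: run the Monitor Mechanism for $\Theta(1/\delta)$ independent rounds and combine Claims~\ref{clm:monLSS}, \ref{clm:Monsampleerror}, and \ref{clm:Mondisterror} to derive a contradiction between the lower bound on the monitor's expected distribution error and the upper bound coming from LSS plus sample accuracy. The only differences are cosmetic: you take $\numOfMetaIterations=\lceil 2/\delta\rceil$ rather than the paper's $\lfloor 1/\delta\rfloor$ (which leaves you more slack, $1-e^{-2}$ versus $1-e^{-1}$, at the harmless cost of $\numOfMetaIterations\delta\le 3$) and you apply the triangle inequality in the opposite direction, while the paper first states the argument for general constants $a,b,c,d$ before specializing to $a=c=\nicefrac{1}{8}$, $b=d=\nicefrac{1}{600}$.
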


\begin{proof}
We will prove a slightly more general claim. For every $0 < a, b, c, d$ such that $4 a + 2 b + c + 2 d < 1 - e^{-1} $, say $\mechanism$ is both $\left(a \frac{\epsilon}{\Delta}, a b \frac{\epsilon^{2} \delta}{\Delta^{2}} \right)$-$\numOfIterations$-LSS and $\left(c \epsilon, d \frac{\epsilon \delta}{\Delta} \right)$-$\numOfIterations$-Sample Accurate and assume $\mechanism$ is not $\tuple{\epsilon}{\delta}$-$\numOfIterations$-Distribution Accurate.

Setting $\numOfMetaIterations = \left \lfloor \frac{1}{\delta} \right \rfloor $, we see
\begin{align*}
\left| \expectation{\bar{\sampleSetRV} \sim \distDomainOfSets^{\numOfMetaIterations}, \left( \queryRV, \rangeRV, I \right) \sim \func{\text{Mon}_{\bar{\mechanism}}}{\bar{\sampleSetRV}, \analyst}}{\func{\queryRV}{\distDomainOfSets} - \func{\queryRV}{\sampleSetRV_{I}}} \right| & \overset{\left( 1 \right)}{<} 2 \Delta \left( 2 \frac{a \epsilon}{\Delta} + \frac{\numOfMetaIterations \Delta}{a \epsilon} \cdot \frac{a b \epsilon^{2} \delta}{\Delta^{2}} \right)
\\ & \overset{\left( 2 \right)}{\le} \left( 4 a + 2 b \right) \epsilon,
\end{align*}
where (1) is a direct result of Claim \ref{clm:monLSS} and (2) uses the definition of $\numOfMetaIterations$.

But on the other hand,
\begin{align*}
\left| \expectation{\bar{\sampleSetRV} \sim \distDomainOfSets^{\numOfMetaIterations}, \left( \queryRV, \rangeRV, I \right) \sim \func{\text{Mon}_{\bar{\mechanism}}}{\bar{\sampleSetRV}, \analyst}}{\func{\queryRV}{\distDomainOfSets} - \func{\queryRV}{\sampleSetRV_{I}}} \right| & \overset{\left( 1 \right)}{\ge} \left| \expectation{\bar{\sampleSetRV} \sim \distDomainOfSets^{\numOfMetaIterations}, \left( \queryRV, \rangeRV, I \right) \sim \func{\text{Mon}_{\bar{\mechanism}}}{\bar{\sampleSetRV}, \analyst}}{\func{\queryRV}{\distDomainOfSets} - \rangeRV} \right| 
\\ &~~~ - \left| \expectation{\bar{\sampleSetRV} \sim \distDomainOfSets^{\numOfMetaIterations}, \left( \queryRV, \rangeRV, I \right) \sim \func{\text{Mon}_{\bar{\mechanism}}}{\bar{\sampleSetRV}, \analyst}}{\func{\queryRV}{\sampleSetRV_{I}} - \rangeRV} \right|
\\ & \overset{\left( 2 \right)}{>} \epsilon \left(1 - \left(1 - \delta \right)^{\numOfMetaIterations} \right) - \left(c \epsilon + 2 \numOfMetaIterations \cdot \frac{d \epsilon \delta}{\Delta} \Delta \right)
\\ & \overset{\left( 3 \right)}{>} \epsilon \left(1 - e^{-\delta \left\lfloor \frac{1}{\delta} \right\rfloor} \right) - \left(c + 2 d \right) \epsilon
\\ & \overset{\left( 4 \right)}{\ge} \epsilon \left(1 - e^{-1} \right) - \left(c + 2 d \right) \epsilon
\\ & \overset{\left( 5 \right)}{>} \left( 4 a + 2 b \right) \epsilon,
\end{align*}
where (1) is the triangle inequality, (2) uses Claims \ref{clm:Monsampleerror} and \ref{clm:Mondisterror}, (3) the definition of $\numOfMetaIterations$, (4) the inequality $1 - \delta \le e^{-\delta}$, and (5) the definition of $a, b, c, d$.
Since combining all of the above leads to a contradiction, we know that $\bar{\mechanism}$ must be $\tuple{\epsilon}{\delta}$-Distribution Accurate, which concludes the proof. The theorem was stated choosing $a = c = \frac{1}{8}, b = d = \frac{1}{600}$.
\end{proof}

\subsection{LSS is Necessary for Generalization}

We next show that a mechanism that is not LSS cannot be both Sample Accurate and Distribution Accurate.
In order to prove this theorem, we show how to construct a ``bad'' query.
\begin{definition}[Loss assessment query]\label{Loss assessment query} 
Given a query $\query$ and a response $\rangeVal$, we will define the \emph{Loss assessment query} $\tilde{\query}_{\rangeVal}$ as
\[
\func{\tilde{\query}_{\rangeVal}}{\domainVal} = 
\begin{cases}
\Delta & \distFunc{\domainVal} > \distFuncDep{\domainVal}{\rangeVal}
\\-\Delta & \distFunc{\domainVal} \le \distFuncDep{\domainVal}{\rangeVal}
\end{cases}.
\]
Intuitively, this function maximizes the difference between $\expectation{\domainRV \sim \distDomain}{\func{\tilde{\query}_{\rangeVal}}{\domainRV}}$ and $\expectation{\domainRV \sim \distDep{\domain}{\range}{\query}}{\func{\tilde{\query}_{\rangeVal}}{\domainRV} \,|\, \rangeVal}$, and as a result, the potential to overfit.\footnote{The fact that we are able to define such a query is a result of the way the distance measure of LSS treats the $\domainVal$'s and the fact that it is defined over $\domain$ and not $\domainOfSets$.}
\end{definition}

This function is used to lower bound the effect of the stability loss on the expected overfitting.

\begin{lemma} [Loss assessment query overfits in expectation] \label{lem:lossOverfits}
Given $0 \le \epsilon, \delta \le 1$, a distribution $\distDomainOfSets$, a query $\query$, and a mechanism $\mechanism$, if $\func{\dist}{\rangeSet_{\epsilon}} > \delta$, then there is a function $\postProcessing : \range \rightarrow \queriesFamily_{\Delta}$ such that,
\[
\left| \expectation{\sampleSetRV \sim \distDomainOfSets, \queryRV' \sim \postProcessing \circ \func{\mechanism}{\sampleSetRV, \query}}{\func{\queryRV'}{\distDomainOfSets} - \func{\queryRV'}{\sampleSetRV}} \right|
> 2 \epsilon \Delta \delta.
\]
\end{lemma}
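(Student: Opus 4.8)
The plan is to use the loss assessment query $\tilde{\query}_\rangeVal$ as the post-processing map: define $\postProcessing(\rangeVal) \coloneqq \tilde{\query}_\rangeVal$, so that $\postProcessing \circ \mechanism(\sampleSet, \query)$ returns the query $\tilde{\query}_{\rangeVal}$ whenever $\mechanism(\sampleSet,\query)$ returns $\rangeVal$. The quantity to bound is then an expectation over $\rangeVal \sim \distFunc{\cdot}$ of the overfitting of $\tilde{\query}_\rangeVal$. First I would, exactly as in the proof of Theorem~\ref{thm:genOfExpectation}, rewrite $\expectation{\sampleSetRV, \queryRV'}{\func{\queryRV'}{\distDomainOfSets} - \func{\queryRV'}{\sampleSetRV}}$ as $\sum_{\rangeVal \in \range} \distFunc{\rangeVal} \sum_{\domainVal \in \domain} \left( \distFunc{\domainVal} - \distFuncDep{\domainVal}{\rangeVal} \right) \func{\tilde{\query}_\rangeVal}{\domainVal}$, using the identity $\func{\query'}{\sampleSet} = \sum_{\domainVal} \distFuncDep{\domainVal}{\sampleSet} \func{\query'}{\domainVal}$ and the fact that $\tilde{\query}_\rangeVal$ depends only on $\rangeVal$, not on $\sampleSet$.

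The key point is that $\tilde{\query}_\rangeVal$ was designed precisely to make the inner sum large and of a fixed sign: since $\func{\tilde{\query}_\rangeVal}{\domainVal} = \Delta$ exactly on the set $\func{\domainSet_{+}}{\rangeVal}$ where $\distFunc{\domainVal} > \distFuncDep{\domainVal}{\rangeVal}$ — wait, one must be careful with the sign conventions in Definition~\ref{def:stabLoss} versus Definition~\ref{Loss assessment query}. In Definition~\ref{def:stabLoss}, $\func{\domainSet_{+}}{\rangeVal}$ is where the posterior exceeds the prior, and $\func{\ell}{\rangeVal} = \sum_{\domainVal \in \func{\domainSet_+}{\rangeVal}} \left( \distFuncDep{\domainVal}{\rangeVal} - \distFunc{\domainVal} \right)$, which also equals $\frac{1}{2}\sum_{\domainVal} \left| \distFuncDep{\domainVal}{\rangeVal} - \distFunc{\domainVal} \right|$. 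In Definition~\ref{Loss assessment query}, $\tilde{\query}_\rangeVal$ takes value $\Delta$ where prior exceeds posterior and $-\Delta$ otherwise, so $\sum_{\domainVal} \left( \distFunc{\domainVal} - \distFuncDep{\domainVal}{\rangeVal} \right) \func{\tilde{\query}_\rangeVal}{\domainVal} = \Delta \sum_{\domainVal}\left| \distFunc{\domainVal} - \distFuncDep{\domainVal}{\rangeVal} \right| = 2\Delta \func{\ell}{\rangeVal}$, which is nonnegative for every $\rangeVal$. Thus the whole expression equals $\sum_{\rangeVal \in \range} \distFunc{\rangeVal} \cdot 2\Delta \func{\ell}{\rangeVal}$, a sum of nonnegative terms, so no cancellation can occur across different $\rangeVal$ — this is the crucial feature that the generic post-processing bound of Theorem~\ref{thm:postprocessing} cannot exploit.

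Then I would lower bound this sum by restricting to $\rangeVal \in \rangeSet_\epsilon$: $\sum_{\rangeVal \in \range} \distFunc{\rangeVal} \cdot 2\Delta \func{\ell}{\rangeVal} \ge 2\Delta \sum_{\rangeVal \in \rangeSet_\epsilon} \distFunc{\rangeVal} \func{\ell}{\rangeVal} \ge 2\Delta \cdot \epsilon \cdot \sum_{\rangeVal \in \rangeSet_\epsilon} \distFunc{\rangeVal} = 2\Delta \epsilon \distFunc{\rangeSet_\epsilon} > 2\Delta\epsilon\delta$, using $\func{\ell}{\rangeVal} > \epsilon$ on $\rangeSet_\epsilon$ and the hypothesis $\distFunc{\rangeSet_\epsilon} > \delta$. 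Taking absolute value (which does nothing, since the quantity is already positive) gives the claim. The main obstacle is purely bookkeeping: getting the sign conventions of the two definitions consistent so that the inner sum is manifestly $+2\Delta\func{\ell}{\rangeVal}$ rather than $-2\Delta\func{\ell}{\rangeVal}$, and confirming that $\tilde{\query}_\rangeVal \in \queriesFamily_\Delta$ (it is $\Delta$-bounded by construction, and $\postProcessing$ is allowed to be an arbitrary, even randomized, function $\range \to \queriesFamily_\Delta$). There is no analytic difficulty — the content is entirely in the clever choice of $\postProcessing$, which is already handed to us by Definition~\ref{Loss assessment query}.
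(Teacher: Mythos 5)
Your proposal is correct and follows essentially the same route as the paper's proof: choose $\postProcessing(\rangeVal)=\tilde{\query}_{\rangeVal}$, reuse the expectation identity from Theorem~\ref{thm:genOfExpectation}, observe that the inner sum equals $2\Delta\,\func{\ell}{\rangeVal}\ge 0$ so the terms never cancel, and restrict to $\rangeSet_{\epsilon}$ to get the bound $>2\epsilon\Delta\delta$. Your explicit sign-convention check is in fact slightly more careful than the paper's one-line justification of the corresponding step.
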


\begin{proof}
Choosing $\func{\postProcessing}{\rangeVal} = \query_{\rangeVal}$ we get that,
\begin{align*}
\left| \expectation{\sampleSetRV \sim \distDomainOfSets, \queryRV' \sim \postProcessing \circ \func{\mechanism}{\sampleSetRV, \query}}{\func{\queryRV'}{\distDomainOfSets} - \func{\queryRV'}{\sampleSetRV}} \right| & \overset{\left( 1 \right)}{=} \left| \sum_{\query' \in \queriesFamily_{\Delta}} \distFunc{\query'} \cdot \sum_{\domainVal \in \domain} \left( \distFunc{\domainVal} - \distFuncDep{\domainVal}{\query'} \right) \cdot \func{\query'}{\domainVal} \right|
\\ & = \left| \sum_{\rangeVal \in \range} \distFunc{\rangeVal} \cdot \sum_{\domainVal \in \domain} \left( \distFunc{\domainVal} - \distFuncDep{\domainVal}{\rangeVal} \right) \cdot \func{\tilde{\query}_{\rangeVal}}{\domainVal} \right|
\\ & \overset{\left( 2 \right)}{\ge} \overbrace{\sum_{\rangeVal \in \rangeSet_{\epsilon}} \distFunc{\rangeVal}}^{ \ge \delta} \cdot \overbrace{\sum_{\domainVal \in \domain} \left| \distFunc{\domainVal} - \distFuncDep{\domainVal}{\rangeVal} \right|}^{= 2 \func{\ell}{\rangeVal} > 2 \epsilon} \cdot \Delta
\\ & \overset{\left( 3 \right)}{>} 2 \epsilon \Delta \delta
\end{align*}
where (1) is justified in the proof of Theorem \ref{thm:genOfExpectation}, (2) results from the definition of the loss assessment query, and (3) from the definition of $\rangeSet_{\epsilon}$. \end{proof}

We use this method for constructing an overfitting query for non-LSS mechanism, in a slight modification of the Monitor Mechanism.

\begin{definition}[The Second Monitor Mechanism]
The \emph{Second Monitor Mechanism} is a function $\text{Mon2}_{\bar{\mechanism}}: \left(\domainOfSets \right)^{\numOfMetaIterations} \times \analystsFamily \rightarrow \queriesFamily \times \mathbb{R} \times \left[\numOfMetaIterations \right]$ which is parametrized by a sequence of $\numOfIterations$ mechanisms $\bar{\mechanism}$ where $\forall i \in \left[ \numOfIterations \right] , \mechanism_{i} : \domainOfSets \times \queriesFamily \rightarrow \mathbb{R}$. Given a series of sample sets $\bar{\sampleSet} \in \left(\domainOfSets \right)^{\numOfMetaIterations}$ and analyst $\analyst$ as input, it runs the adaptive mechanism between $\bar{\mechanism}$ and $\analyst$ for $\numOfMetaIterations$ independent times and outputs a query $\query \in \queriesFamily$, response $\rangeVal \in \mathbb{R}$ and index $i \in \numOfMetaIterations$, based on the following process:

\begin{center}
\fbox{
 \begin{minipage}{\linewidth - 15pt}
Second Monitor Mechanism $\text{Mon2}_{\bar{\mechanism}}$
\hrule

\textbf{Input:} $\bar{\sampleSet} \in \left(\domainOfSets \right)^{\numOfMetaIterations}, \analyst \in \analystsFamily$

\textbf{Output:} $\query \in \queriesFamily, \rangeVal \in \mathbb{R}, i \in \numOfMetaIterations$

\textbf{for} $i = 1,...,\numOfMetaIterations$ :

~~~$\viewVal^{i} \gets \func{\text{Adp}_{\bar{\mechanism}}}{\sampleSet_{i}, \analyst}$

~~~$\query^{i} \gets \tilde{\query}_{\viewVal^{i}}$

~~~$\rangeVal^{i} \gets \mechanismFunc{\sampleSet, \query^{i}}$

$i^{*} \gets \underset{i \in \left[\numOfMetaIterations \right]}{\arg\max} \left(\func{\ell_{\distDomainOfSets}^{\analyst}}{\viewVal^{i}} \right)$

\textbf{return} $\left(\query^{i^{*}}, \rangeVal^{i^{*}}, i^{*} \right)$
 \end{minipage}
}
\end{center}
\end{definition}

Using this mechanism, we show that LSS is necessary in order for a mechanism to be both sample accurate and distribution accurate.

\begin{theorem}[Necessity of LSS for Generalization]~\label{thm:necessary}
Given $0 \le \epsilon \le \Delta$, $0 \le \delta \le 1$, a distribution $\distDomainOfSets$, and an analyst $\analyst : \mathbb{V} \rightarrow \queriesFamily_{\Delta}$, if a sequence of $\numOfIterations$ mechanisms $\bar{\mechanism}$ where $\forall i \in \left[ \numOfIterations \right] , \mechanism_{i} : \domainOfSets \times \queriesFamily_{\Delta} \rightarrow \mathbb{R}$ is not $\tuple{\frac{\epsilon}{\Delta}}{\delta}$-$\numOfIterations$-LSS, then it cannot be both $\tuple{\frac{\epsilon}{5}}{\frac{\epsilon \delta}{5 \Delta}}$ $\left(\numOfIterations + 1\right)$-Distribution Accurate and $\tuple{\frac{\epsilon}{5}}{\frac{\epsilon \delta}{5 \Delta}}$ $\left(\numOfIterations + 1\right)$-Sample Accurate.
\end{theorem}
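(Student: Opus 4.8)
The plan is to establish the (equivalent) statement by contradiction, through a Monitor-style argument that mirrors the proof of Theorem~\ref{thm:highprob} but with the roles of accuracy and stability reversed: failure of LSS will let us feed the process an adaptively chosen $(\numOfIterations+1)$-st query---precisely the loss assessment query of Definition~\ref{Loss assessment query}---that no mechanism can answer while being accurate with respect to both the sample set and the distribution. First, unwinding the hypothesis: if $\bar{\mechanism}$ is not $(\tfrac{\epsilon}{\Delta},\delta)$-$\numOfIterations$-LSS with respect to $\distDomainOfSets$ and $\analyst$, then, since by the remark following Definition~\ref{def:LSS} the quantity $\distFunc{\viewSet}\bigl(\func{\ell}{\viewSet}-\tfrac{\epsilon}{\Delta}\bigr)$ is maximized over $\viewSet\subseteq\mathbb{V}$ by the $\tfrac{\epsilon}{\Delta}$-unstable set $\viewSet_{\epsilon/\Delta}$, we get $\distFunc{\viewSet_{\epsilon/\Delta}}\bigl(\func{\ell}{\viewSet_{\epsilon/\Delta}}-\tfrac{\epsilon}{\Delta}\bigr)>\delta$ and hence $\distFunc{\viewSet_{\epsilon/\Delta}}>\delta$, using $\func{\ell}{\viewSet_{\epsilon/\Delta}}\le 1$. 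Now suppose, toward a contradiction, that there is a mechanism $\mechanism_{\numOfIterations+1}$ so that the $(\numOfIterations+1)$-round process which runs $\bar{\mechanism}$ with $\analyst$ for $\numOfIterations$ rounds and then answers the loss assessment query $\tilde{\query}_{\viewVal_{\numOfIterations}}$ (which lies in $\queriesFamily_{\Delta}$ and is a deterministic function of $\viewVal_{\numOfIterations}$) with $\mechanism_{\numOfIterations+1}$ is both $\bigl(\tfrac{\epsilon}{5},\tfrac{\epsilon\delta}{5\Delta}\bigr)$-$(\numOfIterations+1)$-Sample Accurate and $\bigl(\tfrac{\epsilon}{5},\tfrac{\epsilon\delta}{5\Delta}\bigr)$-$(\numOfIterations+1)$-Distribution Accurate with respect to $\distDomainOfSets$. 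As usual we may assume all responses lie in $[-\Delta,\Delta]$, since clipping can only improve accuracy on a $\Delta$-bounded query. Run the Second Monitor Mechanism with $\numOfMetaIterations=\lfloor 1/\delta\rfloor$ independent copies, producing views $\viewVal^{1},\dots,\viewVal^{\numOfMetaIterations}$, selecting $i^{*}=\arg\max_{i}\func{\ell}{\viewVal^{i}}$, forming $\query^{i^{*}}=\tilde{\query}_{\viewVal^{i^{*}}}$, and outputting $\rangeVal^{i^{*}}=\func{\mechanism_{\numOfIterations+1}}{\sampleSet_{i^{*}},\query^{i^{*}}}$.

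The crux is an overfitting-in-expectation bound analogous to Claim~\ref{clm:monLSS} but with a definite sign:
\[
\mathbb{E}\!\left[\func{\tilde{\query}_{\viewVal^{i^{*}}}}{\distDomainOfSets}-\func{\tilde{\query}_{\viewVal^{i^{*}}}}{\sampleSet_{i^{*}}}\right]=2\Delta\,\mathbb{E}\!\left[\max_{i}\func{\ell}{\viewVal^{i}}\right]>2\epsilon\bigl(1-(1-\delta)^{\numOfMetaIterations}\bigr).
\]
For the equality, the key point---in the spirit of Lemma~\ref{lem:liniarityOfLoss} and Definition~\ref{def:posteriorDist}---is that, conditioned on the full tuple $(\viewVal^{1},\dots,\viewVal^{\numOfMetaIterations})$, the index $i^{*}$ is determined and, because the copies are independent, $\sampleSet_{i^{*}}$ is still distributed according to the view-induced posterior of $\viewVal^{i^{*}}$; hence $\mathbb{E}[\func{\tilde{\query}_{\viewVal^{i^{*}}}}{\sampleSet_{i^{*}}}\mid(\viewVal^{i})_{i}]=\sum_{\domainVal\in\domain}\distFuncDep{\domainVal}{\viewVal^{i^{*}}}\func{\tilde{\query}_{\viewVal^{i^{*}}}}{\domainVal}$, and subtracting this from $\func{\tilde{\query}_{\viewVal^{i^{*}}}}{\distDomainOfSets}=\sum_{\domainVal\in\domain}\distFunc{\domainVal}\func{\tilde{\query}_{\viewVal^{i^{*}}}}{\domainVal}$, together with the fact that $\tilde{\query}_{\viewVal^{i^{*}}}$ equals $+\Delta$ exactly on those $\domainVal$ with $\distFunc{\domainVal}>\distFuncDep{\domainVal}{\viewVal^{i^{*}}}$, gives $\Delta\sum_{\domainVal\in\domain}\bigl|\distFunc{\domainVal}-\distFuncDep{\domainVal}{\viewVal^{i^{*}}}\bigr|=2\Delta\func{\ell}{\viewVal^{i^{*}}}$---the same manipulation used in the proofs of Theorem~\ref{thm:genOfExpectation} and Lemma~\ref{lem:lossOverfits}. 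For the strict inequality, since each copy has $\Pr[\func{\ell}{\viewVal^{i}}>\tfrac{\epsilon}{\Delta}]=\distFunc{\viewSet_{\epsilon/\Delta}}>\delta$ and the copies are i.i.d., the probability that some copy's loss exceeds $\tfrac{\epsilon}{\Delta}$ is at least $1-(1-\delta)^{\numOfMetaIterations}$, so $\mathbb{E}[\max_{i}\func{\ell}{\viewVal^{i}}]>\tfrac{\epsilon}{\Delta}\bigl(1-(1-\delta)^{\numOfMetaIterations}\bigr)$. I expect this step to be the \emph{main obstacle}: getting the conditional-independence accounting for $\sampleSet_{i^{*}}$ correct, confirming that the loss assessment query realizes the full statistical distance $2\Delta\func{\ell}{\viewVal}$, and handling the $\arg\max$-selection case analysis carefully, exactly as in the original Monitor arguments.

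The two remaining ingredients are routine and parallel Claims~\ref{clm:Monsampleerror} and~\ref{clm:Mondisterror}. By $(\numOfIterations+1)$-Sample Accuracy, each copy satisfies $\Pr[\bigl|\func{\tilde{\query}_{\viewVal^{i}}}{\sampleSet_{i}}-\rangeVal^{i}\bigr|>\tfrac{\epsilon}{5}]\le\tfrac{\epsilon\delta}{5\Delta}$; union-bounding over the $\numOfMetaIterations$ copies and bounding the bad-event contribution by $2\Delta$ (responses and $\tilde{\query}_{\viewVal^{i}}$ all being $\Delta$-bounded) gives $\bigl|\mathbb{E}[\func{\tilde{\query}_{\viewVal^{i^{*}}}}{\sampleSet_{i^{*}}}-\rangeVal^{i^{*}}]\bigr|\le\tfrac{\epsilon}{5}+\tfrac{2\numOfMetaIterations\epsilon\delta}{5}$, and the same bound with $\func{\tilde{\query}_{\viewVal^{i^{*}}}}{\distDomainOfSets}$ replacing $\func{\tilde{\query}_{\viewVal^{i^{*}}}}{\sampleSet_{i^{*}}}$ follows identically from $(\numOfIterations+1)$-Distribution Accuracy. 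Adding the two and applying the triangle inequality yields
\[
\mathbb{E}\!\left[\func{\tilde{\query}_{\viewVal^{i^{*}}}}{\distDomainOfSets}-\func{\tilde{\query}_{\viewVal^{i^{*}}}}{\sampleSet_{i^{*}}}\right]\le\frac{2\epsilon}{5}+\frac{4\numOfMetaIterations\epsilon\delta}{5}.
\]

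Finally, combine the two displays. With $\numOfMetaIterations=\lfloor 1/\delta\rfloor$ we have $\numOfMetaIterations\delta\le 1$, and the bounds contradict each other as soon as $2\bigl(1-(1-\delta)^{\lfloor 1/\delta\rfloor}\bigr)>\tfrac{2}{5}+\tfrac{4\lfloor 1/\delta\rfloor\delta}{5}$, i.e.\ $5(1-\delta)^{\lfloor 1/\delta\rfloor}<4-2\lfloor 1/\delta\rfloor\delta$. This elementary inequality holds for every $\delta\in(0,1]$: when $\lfloor 1/\delta\rfloor=1$ (i.e.\ $\delta>\tfrac12$) it reduces to $\delta>\tfrac13$, and when $\lfloor 1/\delta\rfloor\ge 2$ it follows from $1-\delta\le e^{-\delta}$ together with the fact that $5e^{-x}<4-2x$ for $x=\lfloor 1/\delta\rfloor\delta$ in its range $(\tfrac{2}{3},1]$. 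The resulting contradiction rules out the existence of such a $\mechanism_{\numOfIterations+1}$, which is the claim; the constant $\tfrac15$ in the statement is precisely what makes this last numerical inequality close.
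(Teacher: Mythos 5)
Your proposal is correct and follows essentially the same route as the paper's proof: unwind non-LSS into $\distFunc{\viewSet_{\epsilon/\Delta}}>\delta$, run the Second Monitor Mechanism with $\numOfMetaIterations=\lfloor 1/\delta\rfloor$ copies selecting the maximal-loss view, lower-bound the expected overfitting of the loss assessment query by $2\Delta\,\mathbb{E}[\max_i \func{\ell}{\viewVal^i}]$ (the content of Lemma~\ref{lem:lossOverfits}), upper-bound it via the two accuracy hypotheses as in Claim~\ref{clm:Monsampleerror}, and derive a contradiction. The one substantive deviation is in the endgame: by retaining the $\numOfMetaIterations\delta$ term on the accuracy side and splitting on $\lfloor 1/\delta\rfloor=1$ versus $\lfloor 1/\delta\rfloor\ge 2$, you close the final numerical inequality for all $\delta$, including $\delta$ slightly above $\tfrac12$ where the paper's looser chain $1-(1-\delta)^{\lfloor 1/\delta\rfloor}> 1-e^{-1}$ does not literally hold (since $\delta\lfloor 1/\delta\rfloor\le 1$), so your version is actually the more careful one.
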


\begin{proof}
Again we will prove a slightly more general claim. For every $0 < a, b, c, d$ such that $a + 2 b + c + 2 d < 2 \left( 1 - e^{-1} \right) $, say $\mechanism$ is both $\tuple{a \epsilon}{b \frac{\epsilon \delta}{\Delta}}$ $\left(\numOfIterations + 1 \right)$-Sample Accurate and $\tuple{c \epsilon}{d \frac{\epsilon \delta}{\Delta}}$ $\left(\numOfIterations + 1 \right)$-Distribution Accurate and assume $\mechanism$ is not $\tuple{\frac{\epsilon}{\Delta}}{\delta}$-$\numOfIterations$-LSS.

First notice that if $\bar{\mechanism}$ is not $\tuple{\frac{\epsilon}{\Delta}}{\delta}$-$\numOfIterations$-LSS with respect to $\distDomainOfSets, \analyst$, then in particular
$\distFunc{\viewSet_{\left( \frac{\epsilon}{\Delta} \right)}^{\numOfIterations}} \ge \delta$. Since the $\numOfMetaIterations$ rounds of the second monitor mechanism are independent and $i^{*}$ is the index of the round with the maximal stability loss of the calculated query, we get that
\[
\prob{\bar{\sampleSetRV} \sim \distDomainOfSets^{\numOfMetaIterations}, \left( \queryRV, \rangeRV, I \right) \sim \func{\text{Mon2}_{\bar{\mechanism}}}{\bar{\sampleSetRV}, \analyst}}{\viewVal^{I} \in \viewSet_{\left( \frac{\epsilon}{\Delta} \right)}^{\numOfIterations}} > 1 - \left(1 - \delta \right)^{\numOfMetaIterations}.
\]

Combining this fact with Lemma \ref{lem:lossOverfits}, and setting $\numOfMetaIterations = \left \lfloor \frac{1}{\delta} \right \rfloor $ we get on one hand,
\begin{align*}
\left| \expectation{\bar{\sampleSetRV} \sim \distDomainOfSets^{\numOfMetaIterations}, \left( \queryRV, \rangeRV, I \right) \sim \func{\text{Mon2}_{\bar{\mechanism}}}{\bar{\sampleSetRV}, \analyst}}{\func{\queryRV}{\distDomainOfSets} - \func{\queryRV}{\sampleSetRV_{I}}} \right| & \overset{\left( 1 \right)}{\ge} 2 \frac{\epsilon}{\Delta} \Delta \left(1 - \left(1 - \delta \right)^{\numOfMetaIterations} \right)
\\ & \overset{\left( 2 \right)}{>} 2 \epsilon \left(1 - e^{-\delta \left\lfloor \frac{1}{\delta} \right\rfloor} \right)
\\ & \overset{\left( 3 \right)}{>} 2 \epsilon \left(1 - e^{-1} \right),
\end{align*}
where (1) is a direct result of invoking Lemma \ref{lem:lossOverfits} with $1 - \left(1 - \delta \right)^{\numOfMetaIterations}$ for $\delta$, (2) uses the definition of $\numOfMetaIterations$ and (3) uses the inequality $1 - \delta \le e^{-\delta}$.

But on the other hand,
\begin{align*}
\left| \expectation{\bar{\sampleSetRV} \sim \distDomainOfSets^{\numOfMetaIterations}, \left( \queryRV, \rangeRV, I \right) \sim \func{\text{Mon2}_{\bar{\mechanism}}}{\bar{\sampleSetRV}, \analyst}}{\func{\queryRV}{\distDomainOfSets} - \func{\queryRV}{\sampleSetRV_{I}}} \right| & \overset{\left( 1 \right)}{\le} \left| \expectation{\bar{\sampleSetRV} \sim \distDomainOfSets^{\numOfMetaIterations}, \left( \queryRV, \rangeRV, I \right) \sim \func{\text{Mon2}_{\bar{\mechanism}}}{\bar{\sampleSetRV}, \analyst}}{\func{\queryRV}{\distDomainOfSets} - \rangeRV} \right|
\\ &~~~ + \left| \expectation{\bar{\sampleSetRV} \sim \distDomainOfSets^{\numOfMetaIterations}, \left( \queryRV, \rangeRV, I \right) \sim \func{\text{Mon2}_{\bar{\mechanism}}}{\bar{\sampleSetRV}, \analyst}}{\func{\queryRV}{\sampleSetRV_{I}} - \rangeRV} \right|
\\ & \overset{\left( 2 \right)}{<} \left(a \epsilon + 2 \numOfMetaIterations \cdot \frac{b \epsilon \delta}{\Delta} \Delta \right) + \left(c \epsilon + 2 \numOfMetaIterations \cdot \frac{d \epsilon \delta}{\Delta} \Delta \right)
\\ & \overset{\left( 3 \right)}{\le} \left(a + 2 b + c + 2 d \right) \epsilon
\\ & \overset{\left( 4 \right)}{<} 2 \epsilon \left(1 - e^{-1} \right),
\end{align*}
where (1) is the triangle inequality, (2) uses Claim \ref{clm:Monsampleerror} which was mentioned with relation to the original monitor mechanism (this time for the distribution error as well), (3) uses the definition of $\numOfMetaIterations$, and (4) the definition of $a, b, c, d$.

Since combining all of the above leads to a contradiction, we know that $\bar{\mechanism}$ cannot be $\tuple{\frac{\epsilon}{\Delta}}{\delta}$-$\numOfIterations$-LSS, which concludes the proof. The theorem was stated choosing $a = b = c = d = \frac{1}{5}$.
\end{proof}

\section{Relationship to other notions of stability} \label{sec:relToNotions}

In this section, we discuss the relationship between LSS and  a few common notions of stability.

\subsection{Definitions}

In the following definitions, $\domain, \distDomain, \queriesFamily, \range, \mechanism, \epsilon, \delta$ and $\sampleSize$ are used in a similar manner as for the definitions leading to LSS.
\begin{definition} [Differential Privacy \cite{DMNS06}] \label{def:diffPrivacy} 
Given $0 \le \epsilon$, $0 \le \delta \le 1$, and a query $\query$, a mechanism $\mechanism : \domainOfSets \times \queriesFamily \rightarrow \range$ will be called \emph{$\tuple{\epsilon}{\delta}$-differentially-private with respect to $\query$} (or \emph{DP}, for short) if for any $\sampleSet_1, \sampleSet_1 \in \domainOfSets$ that differ only in one element, the two distributions defined over $\range$ by $\func{\mechanism}{\sampleSet_1, \query}$ and $\func{\mechanism}{\sampleSet_2, \query}$ are $\tuple{\epsilon}{\delta}$-indistinguishable (Definition \ref{def:indist}). In other words, for any $\rangeSet \subseteq \range$,
\[
\distFuncDep{\rangeSet}{\sampleSet_1} \le e^\epsilon \cdot \distFuncDep{\rangeSet}{\sampleSet_2} + \delta,
\]
where the probability is taken over the internal randomness of $\mechanism$.
Notice that in this definition, there is no probabilistic aspect in the choice of $\sampleSet$, and the bound is defined on the worst case.
\end{definition}

\begin{definition} [Max Information \cite{DFHPRR15a}] \label{def:maxInfo} 
Given $0 \le \epsilon$, $0 \le \delta \le 1$, a distribution $\distDomainOfSets$, and a query $\query$, we say a mechanism $\mechanism : \domainOfSets \times \queriesFamily \rightarrow \range$ \emph{has $\delta$-approximate max-information of $\epsilon$ with respect to $\distDomainOfSets, \query$} (or \emph{MI}, for short) if the two distributions $\distJoint{\domainOfSets}{\range}{}$ and $\distProd{\domainOfSets}{\range}{}$ over $\domainOfSets \times \range$ are $\tuple{\epsilon}{\delta}$-indistinguishable. In other words, for any $\mathbf{b} \subseteq \domainOfSets \times \range$,
\[
\func{\distJoint{\domainOfSets}{\range}{}}{\mathbf{b}} \le e^{\epsilon} \cdot \func{\distProd{\domainOfSets}{\range}{}}{\mathbf{b}} + \delta \,\,\,\, \text{and} \,\,\,\,
\func{\distProd{\domainOfSets}{\range}{}}{\mathbf{b}} \le e^{\epsilon} \cdot \func{\distJoint{\domainOfSets}{\range}{}}{\mathbf{b}} + \delta.
\]
Some definitions replace $e$ with $2$ as the base of $\epsilon$.
\end{definition}

\begin{definition}[Typical Stability, based on Definition 2.3. of \cite{BF16}] \label{def:typStab} 
Given $0 \le \epsilon$, $0 \le \delta, \eta \le 1$, a distribution $\distDomainOfSets$, and a query $\query$, a mechanism $\mechanism : \domainOfSets \times \queriesFamily \rightarrow \range$ will be called \emph{$\left(\epsilon, \delta, \eta \right)$-Typically-Stable with respect to $\distDomainOfSets, \query$} (or \emph{TS}, for short) if with probability at least $1 - \eta$ over the sampling of $\sampleSet_{1}, \sampleSet_{2} \in \domainOfSets$, the conditional distributions induced by the mechanism given the two sets is $\tuple{\epsilon}{\delta}$-indistinguishable. Formally,
\[
\prob{\sampleSetRV_{1}, \sampleSetRV_{2} \sim \distDomainOfSets}{\exists \rangeSet \subseteq \range \,|\, \distFuncDep{\rangeSet}{\sampleSetRV_{1}} > e^{\epsilon} \distFuncDep{\rangeSet}{\sampleSetRV_{2}} + \delta } < \eta
\]
An equivalent definition requires the existence of a subset $\sampleSetSet \in \domainOfSets$, such that (1) $\distFunc{\sampleSetSet} \ge 1 - \eta$, and (2) for any $\sampleSet_{1}, \sampleSet_{2} \in \sampleSetSet$ 
\[
\distFuncDep{\rangeSet}{\sampleSet_1} \le e^\epsilon \cdot \distFuncDep{\rangeSet}{\sampleSet_2} + \delta
\]
\end{definition}

Notice that in a way, MI and TS are a natural relaxation of DP, where instead of considering only the probability which is induced by the mechanism, we also consider the underlying distribution.

\begin{definition} [Local Max Information] \label{def:lclMaxInf}
Given $0 \le \epsilon$, $0 \le \delta \le 1$, a distribution $\distDomainOfSets$ and a query $\query$, a mechanism $\mechanism$ will be said to satisfy \emph{$\tuple{\epsilon}{\delta}$-Local-Max-Information with respect to $\distDomainOfSets$ and $\query$} (or \emph{LMI}, for short), if the joint distributions $\distJoint{\domain}{\range}{}$ and the product distribution $\distProd{\domain}{\range}{}$ over $\domain \times \range$ are $\tuple{\epsilon}{\delta}$-indistinguishable. In other words, for any $\mathbf{b} \subseteq \domain \times \range$,
\[
\func{\distJoint{\domain}{\range}{}}{\mathbf{b}} \le e^{\epsilon} \cdot \func{\distProd{\domain}{\range}{}}{\mathbf{b}} + \delta \,\,\,\, \text{and} \,\,\,\,
\func{\distProd{\domain}{\range}{}}{\mathbf{b}} \le e^{\epsilon} \cdot \func{\distJoint{\domain}{\range}{}}{\mathbf{b}} + \delta.
\]
\end{definition}

In a way, this definition takes the relaxation one step further, by moving from the distribution over the sample sets to distribution over the sample elements. Unlike the previous definitions, this one was not presented yet as far as we know. This holds for definition \ref{def:bndLclMaxLkg} as well.

These definition can be extended to apply to a family of queries and/or a family of possible distributions, just like the LSS definition.

\begin{definition}[Bounded Maximal Leakage \cite{EGI19}] \label{def:bndMaxLkg} 
Given $0 \le \epsilon$, a distribution $\distDomainOfSets$, and a query $\query$, a mechanism $\mechanism : \domainOfSets \times \queriesFamily \rightarrow \range$ will be called \emph{$\epsilon$-Bounded-Maximal-Leaking with respect to $\distDomainOfSets, \query$} (or \emph{ML}, for short) if $\func{\mathcal{L}}{\distDomainOfSets \rightarrow \distRange{}} \le \epsilon$, where $\mathcal{L}$ is the Maximal Leakage (Definition \ref{dfn:maxLkg}).
\end{definition}

Similarly to MI, this definition can also be relaxed to the local version.

\begin{definition}[Bounded Local Maximal Leakage] \label{def:bndLclMaxLkg} 
Given $0 \le \epsilon$, a distribution $\distDomainOfSets$, and a query $\query$, a mechanism $\mechanism : \domainOfSets \times \queriesFamily \rightarrow \range$ will be called \emph{$\epsilon$-Bounded-Local-Maximal-Leaking with respect to $\distDomainOfSets, \query$} (or \emph{ML}, for short) if $\func{\mathcal{L}}{\distDomain \rightarrow \distRange{}} \le \epsilon$, where $\mathcal{L}$ is the Maximal Leakage (Definition \ref{dfn:maxLkg}).
\end{definition}

\begin{definition}[Compression Scheme \cite{LW86}] \label{def:ComprSchm} 
Given an integer $m < \frac{\sampleSize}{2}$ and a query $\query$, a mechanism $\mechanism$ will be said to have a \emph{compression scheme} of size $m$ with respect to $\query$ (or \emph{CS}, for short), if $\mechanism$ can be described as the composition $f_{\query} \circ g_{\query}$ where the \emph{compression function} $g_{\query} : \domainOfSets \rightarrow \domain^{m}$ has the property that $\func{g_{\query}}{\sampleSet} \subset \sampleSet$ and $f_{\query} : \domain^{m} \rightarrow \range$ is some arbitrary function which will be called the \emph{encoding function}. Both functions might be non deterministic. We will denote $w \coloneqq \func{g}{\sampleSet}$ and $r_{w} \coloneqq \func{f}{w}$.\footnote{some versions include the option of receiving some side information, i.e. the coin tosses of $g$.}

One simple case is when $f$ is the identity function, and the mechanism releases $m$ sample elements.
\end{definition}

\subsection{Implications}
Prior work (\cite{DFHPRR15a} and \cite{RRST16}) showed that bounded DP implies bounded MI. In the case of $\delta > 0$, this holds only if the underlying distribution is a product distribution~\cite{De12}). Bounded MI is also implied by TS \cite{BF16}, and ML \cite{EGI19}. We  prove that DP, MI and TS imply LMI (in the case of DP, only for product distributions). All proofs for this subsection can be found in Appendix \ref{sec:implicationsProofs}.

\begin{theorem}[Differential Privacy implies Local Max Information]\label{thm:DPimpLMI}
Given $0 \le \epsilon$, $0 \le \delta \le 1$, a distribution $\distDomain$, and a query $\query$, if a mechanism $\mechanism$ is $\tuple{\epsilon}{\delta}$-DP with respect to $\query$ then it is $\tuple{\epsilon}{\delta}$-LMI with respect to the same $\query$ and the product distribution over $\domainOfSets$ induced by $\distDomain$.
\end{theorem}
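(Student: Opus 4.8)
The plan is to reduce $\tuple{\epsilon}{\delta}$-LMI to $\tuple{\epsilon}{\delta}$-DP by exploiting the product structure of the distribution: conditioning a draw $\sampleSetRV\sim\distDomainOfSets=\distDomain^{\otimes\sampleSize}$ on a single coordinate $\sampleSetRV_i=\domainVal$ leaves the other coordinates distributed exactly as before, so the conditioned database is a single-element neighbour of a generic draw, and DP then controls how much the response distribution can move. Averaging over the conditioned value $\domainVal$ (and over the coordinate index $i$) recovers the marginal over $\range$, which is all we need.

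First I would fix notation. Recall from the footnote in Section~\ref{sec:defns} that the marginal over $\domain$ induced by $\distDomainOfSets=\distDomain^{\otimes\sampleSize}$ is exactly $\distDomain$; in particular $\prob{\sampleSetRV\sim\distDomainOfSets}{\sampleSetRV_i=\domainVal}=\distFunc{\domainVal}$ for every $i$, and $\func{\distProd{\domain}{\range}{}}{\tuple{\domainVal}{\rangeVal}}=\distFunc{\domainVal}\cdot\distFunc{\rangeVal}$, where $\distFunc{\rangeVal}$ denotes the marginal law of $\mechanismFunc{\sampleSetRV,\query}$ under $\sampleSetRV\sim\distDomainOfSets$. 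For $i\in\left[\sampleSize\right]$ and $\domainVal\in\domain$, write $\func{p_i^{\domainVal}}{\mathbf{r}}\coloneqq\prob{\sampleSetRV\sim\distDomainOfSets,\,\rangeRV\sim\mechanismFunc{\sampleSetRV,\query}}{\rangeRV\in\mathbf{r}\mid\sampleSetRV_i=\domainVal}$ for $\mathbf{r}\subseteq\range$. Unwinding the definitions of the two distributions over $\domain\times\range$, for any $\mathbf{b}\subseteq\domain\times\range$ with slices $\mathbf{b}_{\domainVal}\coloneqq\left\{\rangeVal\in\range:\tuple{\domainVal}{\rangeVal}\in\mathbf{b}\right\}$ one gets
\[
\func{\distJoint{\domain}{\range}{}}{\mathbf{b}}=\sum_{\domainVal\in\domain}\distFunc{\domainVal}\cdot\frac{1}{\sampleSize}\sum_{i=1}^{\sampleSize}\func{p_i^{\domainVal}}{\mathbf{b}_{\domainVal}},\qquad \func{\distProd{\domain}{\range}{}}{\mathbf{b}}=\sum_{\domainVal\in\domain}\distFunc{\domainVal}\cdot\distFunc{\mathbf{b}_{\domainVal}}.
\]

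The heart of the argument is the claim that, for every $i$ and $\domainVal$, the distributions $p_i^{\domainVal}$ and the marginal $\distFunc{\cdot}$ over $\range$ are $\tuple{\epsilon}{\delta}$-indistinguishable. To prove it, fix $i$ and condition additionally on the remaining coordinates $\sampleSetRV_{-i}=\sampleSet_{-i}$: for any two values $\domainVal,\domainVal'$ the databases $\tuple{\sampleSet_{-i}}{\domainVal}$ and $\tuple{\sampleSet_{-i}}{\domainVal'}$ differ in at most one element, so $\tuple{\epsilon}{\delta}$-DP gives $\distFuncDep{\mathbf{r}}{\tuple{\sampleSet_{-i}}{\domainVal}}\le e^{\epsilon}\distFuncDep{\mathbf{r}}{\tuple{\sampleSet_{-i}}{\domainVal'}}+\delta$. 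Because $\sampleSetRV_{-i}$ is independent of $\sampleSetRV_i$ and has the same law under either conditioning, taking the expectation over $\sampleSetRV_{-i}\sim\distDomain^{\otimes(\sampleSize-1)}$ of both sides (legitimate by linearity of expectation) yields $\func{p_i^{\domainVal}}{\mathbf{r}}\le e^{\epsilon}\func{p_i^{\domainVal'}}{\mathbf{r}}+\delta$; averaging this over $\domainVal'\sim\distDomain$ and using $\distFunc{\mathbf{r}}=\expectation{\domainVal'\sim\distDomain}{\func{p_i^{\domainVal'}}{\mathbf{r}}}$ gives $\func{p_i^{\domainVal}}{\mathbf{r}}\le e^{\epsilon}\distFunc{\mathbf{r}}+\delta$. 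The reverse inequality follows by the same computation with the roles of the two DP-neighbours exchanged.

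Finally I would substitute the claim into the first display: averaging $\func{p_i^{\domainVal}}{\mathbf{b}_{\domainVal}}\le e^{\epsilon}\distFunc{\mathbf{b}_{\domainVal}}+\delta$ over $i$, multiplying by $\distFunc{\domainVal}$, and summing over $\domainVal$ --- where $\sum_{\domainVal}\distFunc{\domainVal}=1$ makes the additive terms collapse to a single $\delta$ --- gives $\func{\distJoint{\domain}{\range}{}}{\mathbf{b}}\le e^{\epsilon}\func{\distProd{\domain}{\range}{}}{\mathbf{b}}+\delta$, and the symmetric substitution gives the reverse LMI inequality; since $\mathbf{b}$ was arbitrary this is exactly $\tuple{\epsilon}{\delta}$-LMI. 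The step I expect to be the crux, and the reason the theorem is restricted to product distributions, is the claim: if $\distDomainOfSets$ is not a product distribution, conditioning on $\sampleSetRV_i=\domainVal$ also reshapes the law of $\sampleSetRV_{-i}$, so $\tuple{\sampleSet_{-i}}{\domainVal}$ is no longer a neighbour of a typical draw, the averaging over $\sampleSetRV_{-i}$ is no longer between matched distributions, and $\prob{\sampleSetRV\sim\distDomainOfSets}{\sampleSetRV_i=\domainVal}$ need not equal $\distFunc{\domainVal}$ --- each of which breaks the reduction.
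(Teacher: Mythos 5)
Your proposal is correct and follows essentially the same route as the paper's proof: both decompose the joint distribution over $\domain \times \range$ into slices, use the product structure of $\distDomainOfSets$ to write the conditional law given one coordinate as an average over neighbouring databases, and apply the DP guarantee before re-averaging over $\domainVal' \sim \distDomain$ to recover the marginal over $\range$. The only cosmetic difference is that you isolate the indistinguishability of $p_i^{\domainVal}$ from the marginal as an explicit intermediate claim, whereas the paper performs the same averaging inline within a single chain of inequalities.
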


\begin{theorem}[Max Information implies Local Max Information]\label{thm:MIimpLMI}
Given $0 \le \epsilon$, $0 \le \delta \le 1$, a distribution $\distDomainOfSets$ and a query $\query$, if a mechanism $\mechanism$ has $\delta$-approximate max-information of $\epsilon$ with respect to $\distDomainOfSets$ and $\query$ then it is $\tuple{\epsilon}{\delta}$-LMI with respect to the same $\distDomainOfSets$ and $\query$.
\end{theorem}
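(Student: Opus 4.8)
The plan is to realize $\distJoint{\domain}{\range}{}$ and $\distProd{\domain}{\range}{}$ as the images of $\distJoint{\domainOfSets}{\range}{}$ and $\distProd{\domainOfSets}{\range}{}$ under one and the same randomized post-processing map, and then to invoke the data-processing inequality for $\tuple{\epsilon}{\delta}$-indistinguishability (Definition~\ref{def:indist}). Concretely, define the randomized map $T : \domainOfSets \times \range \rightarrow \domain \times \range$ which, on input $\tuple{\sampleSet}{\rangeVal}$ with $\sampleSet = \tuple{s_{1}}{\ldots, s_{\sampleSize}}$, draws an index $J$ uniformly from $\left[\sampleSize\right]$ independently of everything else, and returns $\tuple{s_{J}}{\rangeVal}$. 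This is the same coordinate-selection device that defines the induced distribution over $\domain$ from $\distDomainOfSets$, now applied jointly with the response.

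First I would verify the two pushforward identities. That $T$ carries $\distJoint{\domainOfSets}{\range}{}$ to $\distJoint{\domain}{\range}{}$ is immediate from the construction in Definition~\ref{def:distsOfDomain}: the induced distribution over $\domain$ is obtained from $\distDomainOfSets$ by selecting a uniformly random sample coordinate, and $T$ leaves the $\range$-coordinate untouched while preserving its correlation with the sample through $s_{J}$. The substantive step is that $T$ also carries $\distProd{\domainOfSets}{\range}{}$ to $\distProd{\domain}{\range}{}$: in the product distribution the $\domainOfSets$-marginal and the $\range$-marginal are independent; since $T$ acts only on the $\domainOfSets$-coordinate (replacing $\sampleSet$ by $s_{J}$, whose law is exactly the induced marginal over $\domain$) and does nothing to the $\range$-coordinate, independence is preserved and the image is precisely the product of the $\domain$-marginal with the $\range$-marginal, i.e. $\distProd{\domain}{\range}{}$.

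It then remains to record the data-processing inequality: if $P$ and $Q$ are $\tuple{\epsilon}{\delta}$-indistinguishable distributions over a common domain $\Omega$ and $T$ is any (possibly randomized) map out of $\Omega$, then $T_{\#}P$ and $T_{\#}Q$ are $\tuple{\epsilon}{\delta}$-indistinguishable. This follows by writing, for a target event $\mathbf{b}$, $T_{\#}P(\mathbf{b}) = \mathbb{E}_{P}\left[g\right]$ where $g(\omega) = \Pr\left[T(\omega) \in \mathbf{b}\right] \in \left[0,1\right]$, then using the identity $\mathbb{E}_{P}\left[g\right] = \int_{0}^{1} P(g > t)\,dt$ together with the pointwise bound $P(g > t) \le e^{\epsilon} Q(g > t) + \delta$ under the integral, and symmetrically for the reverse direction. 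Instantiating this with $P = \distJoint{\domainOfSets}{\range}{}$ and $Q = \distProd{\domainOfSets}{\range}{}$, which are $\tuple{\epsilon}{\delta}$-indistinguishable by the max-information hypothesis (Definition~\ref{def:maxInfo}), and with the map $T$ above, yields that $\distJoint{\domain}{\range}{}$ and $\distProd{\domain}{\range}{}$ are $\tuple{\epsilon}{\delta}$-indistinguishable, which is exactly $\tuple{\epsilon}{\delta}$-LMI (Definition~\ref{def:lclMaxInf}). The only delicate point is the bookkeeping in the second pushforward identity: it is the compatibility of coordinate-selection with the \emph{product} structure (and not merely with the joint distribution) that lets a single post-processing step transport both sides of the indistinguishability at once; everything else is routine.
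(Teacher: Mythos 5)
Your proposal is correct and follows essentially the same route as the paper, which also deduces the result from closure of $\tuple{\epsilon}{\delta}$-indistinguishability under post-processing applied to the uniform coordinate-selection map; you simply make explicit the two pushforward identities (in particular that the same map sends $\distProd{\domainOfSets}{\range}{}$ to $\distProd{\domain}{\range}{}$) and the layer-cake proof of the data-processing inequality, both of which the paper leaves implicit.
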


\begin{theorem}[Typical Stability implies Local Max Information]\label{thm:TSimpLMI}
Given $0 \le \epsilon$, $0 \le \delta, \eta \le 1$, a distribution $\distDomainOfSets$ and a query $\query$, if a mechanism $\mechanism$ is $\left( \epsilon, \delta, \eta \right)$-Typically Stable with respect to $\distDomainOfSets$ and $\query$ then it is $\tuple{\epsilon}{\delta + 2 \eta}$-LMI with respect to the same $\distDomainOfSets$ and $\query$.
\end{theorem}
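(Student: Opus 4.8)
The plan is to reduce the statement to two facts already in hand: a reduction from Typical Stability to bounded Max Information (with a slightly inflated additive parameter), followed by Theorem~\ref{thm:MIimpLMI}, which turns Max Information into Local Max Information for free. So I would first prove the intermediate claim that an $\left(\epsilon,\delta,\eta\right)$-Typically Stable mechanism is $\tuple{\epsilon}{\delta+2\eta}$-MI with respect to the same $\distDomainOfSets$ and $\query$, and then invoke Theorem~\ref{thm:MIimpLMI} with parameter $\delta+2\eta$ to obtain the desired $\tuple{\epsilon}{\delta+2\eta}$-LMI conclusion. (One could instead run the decomposition below directly at the level of single elements of $\domain$, but routing through MI is cleaner.)

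For the intermediate claim I would use the equivalent ``good set'' formulation of TS in Definition~\ref{def:typStab}: there is a set $\sampleSetSet\subseteq\domainOfSets$ with $\distFunc{\sampleSetSet}\ge 1-\eta$ such that every pair $\sampleSet_{1},\sampleSet_{2}\in\sampleSetSet$ is $\tuple{\epsilon}{\delta}$-indistinguishable. Fix $\mathbf{b}\subseteq\domainOfSets\times\range$ and write $\mathbf{b}_{\sampleSet}:=\{\rangeVal:\tuple{\sampleSet}{\rangeVal}\in\mathbf{b}\}$, so that $\func{\distJoint{\domainOfSets}{\range}{}}{\mathbf{b}}=\sum_{\sampleSet}\distFunc{\sampleSet}\,\distFuncDep{\mathbf{b}_{\sampleSet}}{\sampleSet}$ and $\func{\distProd{\domainOfSets}{\range}{}}{\mathbf{b}}=\sum_{\sampleSet}\distFunc{\sampleSet}\,\distFunc{\mathbf{b}_{\sampleSet}}$. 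In each sum I split the contribution of $\sampleSet\in\sampleSetSet$ from that of $\sampleSet\notin\sampleSetSet$; the bad part is at most $1-\distFunc{\sampleSetSet}\le\eta$ in both, since each summand is a probability bounded by $1$. On the good part, for $\sampleSet\in\sampleSetSet$ I compare $\distFuncDep{\mathbf{b}_{\sampleSet}}{\sampleSet}$ with $\distFunc{\mathbf{b}_{\sampleSet}}=\sum_{\sampleSet'}\distFunc{\sampleSet'}\,\distFuncDep{\mathbf{b}_{\sampleSet}}{\sampleSet'}$ by restricting the latter sum to $\sampleSet'\in\sampleSetSet$ and then invoking the pairwise $\tuple{\epsilon}{\delta}$-indistinguishability of $\sampleSet$ and $\sampleSet'$. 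For the direction $\func{\distProd{\domainOfSets}{\range}{}}{\mathbf{b}}\le e^{\epsilon}\func{\distJoint{\domainOfSets}{\range}{}}{\mathbf{b}}+\cdots$, this is clean: the two good/bad splits each cost one additive $\eta$, and the manipulation directly delivers $\delta+2\eta$. For the direction $\func{\distJoint{\domainOfSets}{\range}{}}{\mathbf{b}}\le e^{\epsilon}\func{\distProd{\domainOfSets}{\range}{}}{\mathbf{b}}+\cdots$, the restriction instead produces the factor $e^{\epsilon}/\distFunc{\sampleSetSet}$ in place of $e^{\epsilon}$ (a lower bound on $\distFunc{\mathbf{b}_{\sampleSet}}$ cannot ``add back'' the omitted terms), so I split into two cases: if $e^{\epsilon}\func{\distProd{\domainOfSets}{\range}{}}{\mathbf{b}}\ge 1-\eta$ the inequality is trivial because its right-hand side then already exceeds $1\ge\func{\distJoint{\domainOfSets}{\range}{}}{\mathbf{b}}$, and otherwise $\tfrac{e^{\epsilon}}{\distFunc{\sampleSetSet}}\func{\distProd{\domainOfSets}{\range}{}}{\mathbf{b}}\le\tfrac{e^{\epsilon}}{1-\eta}\func{\distProd{\domainOfSets}{\range}{}}{\mathbf{b}}\le e^{\epsilon}\func{\distProd{\domainOfSets}{\range}{}}{\mathbf{b}}+\eta$, which again yields $\delta+2\eta$.

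The main obstacle is getting this constant exactly right. A naive route---passing to the conditional distribution $\distDomainOfSets(\cdot\mid\sampleSetSet)$, noting it is within total variation $\eta$ of $\distDomainOfSets$ over $\domainOfSets$, observing $\mechanism$ is $\tuple{\epsilon}{\delta}$-MI with respect to it (every possible input lies in $\sampleSetSet$), and transferring back---leaks an extra $e^{\epsilon}$ onto the variation term, because the two product distributions can move apart by $2\eta$ and that shift is then multiplied by $e^{\epsilon}$, giving only $\delta+\eta\left(1+2e^{\epsilon}\right)$. Obtaining the clean $\delta+2\eta$ forces one to peel off the bad-set mass \emph{additively}, before any $e^{\epsilon}$ factor is applied, and to exploit the trivial regime in the one direction where the $1/\distFunc{\sampleSetSet}$ factor is genuinely unavoidable. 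The remaining ingredients---the good/bad split, the elementary algebra, and the final appeal to Theorem~\ref{thm:MIimpLMI}---are routine.
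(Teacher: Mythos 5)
Your proposal is correct, and it reaches the stated $\tuple{\epsilon}{\delta + 2\eta}$ bound, but it takes a somewhat different route from the paper. The shared core is the ``good set'' formulation of Typical Stability and the insistence on charging the bad-set mass additively (costing $\eta$ per restriction) \emph{before} any $e^{\epsilon}$ factor is applied; that is exactly how the paper also obtains $\delta + 2\eta$ rather than something with an $e^{\epsilon}\eta$ term. The differences are structural. First, the paper proves LMI directly over $\domain \times \range$ rather than passing through MI: it expands $\func{\distJoint{\domain}{\range}{}}{\mathbf{b}}$ as a sum over sample sets and then multiplies by $1 = \sum_{\sampleSet'} \distFunc{\sampleSet'}$, turning both the joint and the product into double sums over pairs $\tuple{\sampleSet}{\sampleSet'}$ that differ only in which sample set the response is conditioned on. Restricting both indices to $\sampleSetSet$ costs $2\eta$, the pairwise indistinguishability swaps the conditioning at cost $\delta$, and extending the sums back to all of $\domainOfSets$ only adds nonnegative terms; crucially, this manipulation is \emph{symmetric} in the two directions, so the paper never needs your case split. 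Your asymmetry (the $1/\distFunc{\sampleSetSet}$ factor and the ``trivial regime'' argument) arises only because you lower-bound $\distFunc{\mathbf{b}_{\sampleSet}}$ by its restriction to $\sampleSetSet$; had you inserted the same dummy factor $\sum_{\sampleSet'}\distFunc{\sampleSet'} = 1$ into the joint before restricting, both directions of your MI claim would go through identically and the case split would disappear. Second, your route yields a genuinely stronger intermediate statement --- TS implies $\tuple{\epsilon}{\delta+2\eta}$-MI over $\domainOfSets \times \range$, essentially the result the paper attributes to \cite{BF16} --- and then gets LMI for free from Theorem~\ref{thm:MIimpLMI} via closure of indistinguishability under post-processing. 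That is a perfectly valid and arguably more modular derivation; the paper's direct argument is shorter and avoids the extra reduction, but proves only the local statement.
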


These three theorems follow naturally from the fact that LMI is a fairly direct relaxation of DP, MI and TS. We also show that LML implies LMI.

\begin{theorem} [Local Bounded Maximal Leakage implies Local Max Information] \label{thm:LMLimpLMI}
Given $0 \le \epsilon$, $0 < \delta \le 1$ a distribution $\distDomainOfSets$ and a query $\query$, if a mechanism $\mechanism$ is $\epsilon$-LML with respect to $\distDomainOfSets$ and $\query$, then it is $\tuple{\epsilon + \func{\ln}{\frac{1}{\delta}}}{\delta}$-LMI with respect to the same $\distDomainOfSets$ and $\query$.
\end{theorem}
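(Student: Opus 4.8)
The plan is to express the indistinguishability between the joint distribution $\distJoint{\domain}{\range}{}$ and the product distribution $\distProd{\domain}{\range}{}$ over $\domain \times \range$ as a tail bound on the likelihood ratio $\rho(\domainVal, \rangeVal) \coloneqq \distFuncDep{\rangeVal}{\domainVal} / \distFunc{\rangeVal}$ (for $\distFunc{\rangeVal} > 0$), which is exactly the density of $\distJoint{\domain}{\range}{}$ with respect to $\distProd{\domain}{\range}{}$, and then to control that tail using maximal leakage. First I would record the closed form $\func{\mathcal{L}}{\distDomain \rightarrow \distRange{}} = \func{\ln}{\sum_{\rangeVal} \max_{\domainVal} \distFuncDep{\rangeVal}{\domainVal}}$ (the inner maximum over $\domainVal$ with $\distFunc{\domainVal} > 0$, the outer sum over $\rangeVal$ with $\distFunc{\rangeVal} > 0$), so that the $\epsilon$-LML hypothesis becomes $\sum_{\rangeVal} \max_{\domainVal} \distFuncDep{\rangeVal}{\domainVal} \le e^{\epsilon}$; deriving this closed form from the operational Definition \ref{dfn:maxLkg} is itself a small but standard step.

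The heart of the argument is the tail estimate: for every $t > 0$, $\func{\distJoint{\domain}{\range}{}}{\{\rho > t\}} < e^{\epsilon}/t$. Indeed, fix a response $\rangeVal$; if some $\domainVal$ has $\rho(\domainVal, \rangeVal) > t$, i.e. $\distFuncDep{\rangeVal}{\domainVal} > t \cdot \distFunc{\rangeVal}$, then $\max_{\domainVal} \distFuncDep{\rangeVal}{\domainVal} > t \cdot \distFunc{\rangeVal}$, so $\distFunc{\rangeVal} < \tfrac{1}{t} \max_{\domainVal} \distFuncDep{\rangeVal}{\domainVal}$. Since the joint mass the bad set puts on this $\rangeVal$-slice is $\sum_{\domainVal : \rho > t} \distFunc{\domainVal} \distFuncDep{\rangeVal}{\domainVal} \le \sum_{\domainVal} \distFunc{\domainVal} \distFuncDep{\rangeVal}{\domainVal} = \distFunc{\rangeVal}$, summing over the responses that contain at least one bad $\domainVal$ and applying the previous inequality bounds the total by $\tfrac{1}{t} \sum_{\rangeVal} \max_{\domainVal} \distFuncDep{\rangeVal}{\domainVal} \le e^{\epsilon}/t$. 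Taking $t = e^{\epsilon}/\delta$, the set $\mathbf{b}^{*} \coloneqq \{\rho > e^{\epsilon}/\delta\}$ satisfies $\func{\distJoint{\domain}{\range}{}}{\mathbf{b}^{*}} < \delta$; for an arbitrary $\mathbf{b} \subseteq \domain \times \range$, split $\mathbf{b} = (\mathbf{b} \setminus \mathbf{b}^{*}) \cup (\mathbf{b} \cap \mathbf{b}^{*})$, bound the first piece by $\tfrac{e^{\epsilon}}{\delta} \func{\distProd{\domain}{\range}{}}{\mathbf{b}}$ (since $\rho \le e^{\epsilon}/\delta$ there and $\distProd{\domain}{\range}{}$ is the base measure) and the second piece by $\func{\distJoint{\domain}{\range}{}}{\mathbf{b}^{*}} < \delta$, and conclude $\func{\distJoint{\domain}{\range}{}}{\mathbf{b}} \le e^{\epsilon + \ln(1/\delta)} \func{\distProd{\domain}{\range}{}}{\mathbf{b}} + \delta$, which is one of the two inequalities of $\tuple{\epsilon + \ln(1/\delta)}{\delta}$-indistinguishability (Definition \ref{def:indist}).

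The main obstacle is the reverse inequality, $\func{\distProd{\domain}{\range}{}}{\mathbf{b}} \le e^{\epsilon + \ln(1/\delta)} \func{\distJoint{\domain}{\range}{}}{\mathbf{b}} + \delta$: maximal leakage bounds a sum of maxima of the conditionals $\distFuncDep{\rangeVal}{\domainVal}$, hence controls the upper tail of $\rho$, but says nothing about where $\rho$ is close to $0$, so the symmetric argument does not transfer directly. This is the step I would scrutinize most carefully — either by finding the extra ingredient it requires, or by checking that the downstream implication to LSS only invokes the one-sided ``joint $\le$ product'' direction (the stability loss of Definition \ref{def:stabLoss} is itself one-sided, counting only $\domainVal$ with posterior exceeding prior), in which case the half proved above already suffices. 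The remaining details — the support technicalities in the maximal-leakage formula and the measure splitting — are routine.
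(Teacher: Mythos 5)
The paper's entire proof of this theorem is the single sentence that it is ``identical to the one used by \cite{EGI19} when proving that ML implies MI (Theorem 7),'' so there is no written argument to compare against; your tail-bound derivation is precisely the argument behind that citation. Writing $\rho(\domainVal,\rangeVal) = \distFuncDep{\rangeVal}{\domainVal}/\distFunc{\rangeVal}$, your chain --- $\func{\distJoint{\domain}{\range}{}}{\left\{\rho > t\right\}} \le \frac{1}{t}\sum_{\rangeVal}\max_{\domainVal}\distFuncDep{\rangeVal}{\domainVal} \le e^{\epsilon}/t$, followed by splitting an arbitrary $\mathbf{b}$ at the threshold $t = e^{\epsilon}/\delta$ --- is correct and establishes the direction $\func{\distJoint{\domain}{\range}{}}{\mathbf{b}} \le e^{\epsilon + \ln(1/\delta)}\func{\distProd{\domain}{\range}{}}{\mathbf{b}} + \delta$ exactly as intended.

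Your refusal to claim the reverse inequality is not a gap in your proof but a genuine defect in the statement as written. Under the two-sided Definition \ref{def:lclMaxInf} the theorem is false: take $\sampleSize = 1$, $\domain = \left\{0,1\right\}$ with the uniform distribution, and let $\mechanism$ output its sample element. Then $\sum_{\rangeVal}\max_{\domainVal}\distFuncDep{\rangeVal}{\domainVal} = 2$, so the mechanism is $\left(\ln 2\right)$-LML, yet for $\mathbf{b} = \left\{\tuple{0}{1}\right\}$ we have $\func{\distProd{\domain}{\range}{}}{\mathbf{b}} = \frac{1}{4}$ while $\func{\distJoint{\domain}{\range}{}}{\mathbf{b}} = 0$, so $\func{\distProd{\domain}{\range}{}}{\mathbf{b}} \le e^{\epsilon'}\func{\distJoint{\domain}{\range}{}}{\mathbf{b}} + \delta$ fails for every $\epsilon'$ once $\delta < \frac{1}{4}$. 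Maximal leakage controls only the upper tail of $\rho$, exactly as you say, and no hypothesis here rules out responses that exclude some $\domainVal$ outright. The resolution is the one you anticipate: approximate max-information as originally defined in \cite{DFHPRR15a}, and as used in \cite{EGI19}, is the one-sided quantity (only ``joint at most product''), and that is also the only direction of LMI invoked downstream in the proof of Theorem \ref{thm:LMIimpLSS}. So the half you proved is all the paper actually needs; the other half should be read out of the statement rather than sought.
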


We next show that LMI implies LSS.

\begin{theorem} [Local Max Information implies Local Statistical Stability] \label{thm:LMIimpLSS}
Given $0 \le \delta \le \epsilon \le \frac{1}{3}$, a distribution $\distDomainOfSets$ and a query $\query$, if a mechanism $\mechanism$ is $\tuple{\epsilon}{\delta}$-LMI with respect to $\distDomainOfSets$ and $\query$, then it is $\tuple{\epsilon'}{\frac{\delta}{\epsilon}}$-LSS with respect to the same $\distDomainOfSets$ and $\query$, where $\epsilon' = e^{\epsilon} - 1 + \epsilon$.
\end{theorem}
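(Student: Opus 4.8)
The plan is to recognize the quantity $\distFunc{\rangeSet}\func{\ell}{\rangeSet}$ that governs LS stability as the ``joint minus product'' mass of a natural subset of $\domain \times \range$, and then feed that subset into the LMI hypothesis. Fix an arbitrary $\rangeSet \subseteq \range$ and set $\mathbf{b} \coloneqq \left\{ \tuple{\domainVal}{\rangeVal} \in \domain \times \range \,:\, \rangeVal \in \rangeSet \text{ and } \distFuncDep{\domainVal}{\rangeVal} > \distFunc{\domainVal} \right\}$. Unwinding the definitions, $\distFunc{\rangeSet}\func{\ell}{\rangeSet} = \sum_{\rangeVal \in \rangeSet} \distFunc{\rangeVal} \sum_{\domainVal \in \func{\domainSet_{+}}{\rangeVal}} \left( \distFuncDep{\domainVal}{\rangeVal} - \distFunc{\domainVal} \right)$; since $\distFunc{\rangeVal}\distFuncDep{\domainVal}{\rangeVal}$ and $\distFunc{\rangeVal}\distFunc{\domainVal}$ are exactly the masses that $\distJoint{\domain}{\range}{}$ and $\distProd{\domain}{\range}{}$ place on $\tuple{\domainVal}{\rangeVal}$, and since (for $\distFunc{\rangeVal} > 0$) the condition $\distFuncDep{\domainVal}{\rangeVal} > \distFunc{\domainVal}$ is equivalent to $\distFunc{\rangeVal}\distFuncDep{\domainVal}{\rangeVal} > \distFunc{\rangeVal}\distFunc{\domainVal}$, this sum equals $\func{\distJoint{\domain}{\range}{}}{\mathbf{b}} - \func{\distProd{\domain}{\range}{}}{\mathbf{b}}$.

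Now I would invoke the $\tuple{\epsilon}{\delta}$-indistinguishability of $\distJoint{\domain}{\range}{}$ and $\distProd{\domain}{\range}{}$ on the set $\mathbf{b}$, in the direction $\func{\distJoint{\domain}{\range}{}}{\mathbf{b}} \le e^{\epsilon}\func{\distProd{\domain}{\range}{}}{\mathbf{b}} + \delta$, so that $\distFunc{\rangeSet}\func{\ell}{\rangeSet} \le \left( e^{\epsilon} - 1 \right)\func{\distProd{\domain}{\range}{}}{\mathbf{b}} + \delta$. Because $\mathbf{b} \subseteq \domain \times \rangeSet$ and $\distProd{\domain}{\range}{}$ is the product of the marginals over $\domain$ and $\range$, we have $\func{\distProd{\domain}{\range}{}}{\mathbf{b}} \le \distFunc{\rangeSet}$, and so $\distFunc{\rangeSet}\func{\ell}{\rangeSet} \le \left( e^{\epsilon} - 1 \right)\distFunc{\rangeSet} + \delta$. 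Rearranging with $\epsilon' \coloneqq e^{\epsilon} - 1 + \epsilon$ gives $\distFunc{\rangeSet}\left( \func{\ell}{\rangeSet} - \epsilon' \right) \le \delta - \epsilon\distFunc{\rangeSet} \le \delta \le \delta/\epsilon$ (the last inequality since $\epsilon \le 1$), and this holds for every $\rangeSet$, which is precisely the $\tuple{\epsilon'}{\delta/\epsilon}$-LSS condition; the hypotheses $\delta \le \epsilon \le \frac{1}{3}$ only serve to keep $\epsilon'$ and $\delta/\epsilon$ inside $\left[0,1\right]$.

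The only place requiring care is the identification in the first paragraph: one must check that the index set $\func{\domainSet_{+}}{\rangeVal}$ of the statistical distance matches the event cut out by $\mathbf{b}$ fiber-by-fiber (the $\distFunc{\rangeVal} = 0$ fibers contributing nothing), and that LMI is applied in the direction bounding $\func{\distJoint{\domain}{\range}{}}{\mathbf{b}}$ by $e^{\epsilon}\func{\distProd{\domain}{\range}{}}{\mathbf{b}} + \delta$. After that the argument is a two-line calculation. (As a side remark, this route actually yields the slightly stronger $\tuple{e^{\epsilon} - 1}{\delta}$-LSS; if one instead argues that $\sum_{\rangeVal} \distFunc{\rangeVal} \func{L}{\rangeVal} \le \delta$ for the $e^{\epsilon}$-inflated loss $\func{L}{\rangeVal} \coloneqq \sum_{\domainVal \,:\, \distFuncDep{\domainVal}{\rangeVal} > e^{\epsilon}\distFunc{\domainVal}}\left( \distFuncDep{\domainVal}{\rangeVal} - e^{\epsilon}\distFunc{\domainVal} \right)$ -- again a consequence of LMI -- and then applies Markov's inequality together with the pointwise estimate $\func{\ell}{\rangeVal} \le \func{L}{\rangeVal} + \left( e^{\epsilon} - 1 \right)$, one recovers exactly the stated $\tuple{e^{\epsilon} - 1 + \epsilon}{\delta/\epsilon}$.)
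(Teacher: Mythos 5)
Your proof is correct and is built on the same key device as the paper's: the set $\mathbf{b}=\bigcup_{\rangeVal\in\rangeSet}\left(\func{\domainSet_{+}}{\rangeVal}\times\left\{\rangeVal\right\}\right)$, on which the gap between $\distJoint{\domain}{\range}{}$ and $\distProd{\domain}{\range}{}$ equals $\sum_{\rangeVal\in\rangeSet}\distFunc{\rangeVal}\cdot\func{\ell}{\rangeVal}$ (via Proposition~\ref{lem:depDist}), followed by the LMI inequality in the direction $\func{\distJoint{\domain}{\range}{}}{\mathbf{b}}\le e^{\epsilon}\cdot\func{\distProd{\domain}{\range}{}}{\mathbf{b}}+\delta$. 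The two arguments diverge only in how $\delta$ is handled. The paper argues by contradiction: it takes $\rangeSet=\rangeSet_{\epsilon'}$, assumes $\distFunc{\rangeSet_{\epsilon'}}>\delta/\epsilon$, and uses that assumption to absorb $\delta$ into an $\epsilon\cdot\distFunc{\rangeSet_{\epsilon'}}$ term --- which is exactly where both the extra $+\epsilon$ in $\epsilon'$ and the inflation of $\delta$ to $\delta/\epsilon$ enter the statement. You instead use the trivial bound $\func{\distProd{\domain}{\range}{}}{\mathbf{b}}\le\distFunc{\rangeSet}$ for an arbitrary $\rangeSet$ and conclude directly, which, as you observe, yields the strictly stronger $\tuple{e^{\epsilon}-1}{\delta}$-LSS; the stated $\tuple{\epsilon'}{\frac{\delta}{\epsilon}}$-LSS then follows since $\epsilon'\ge e^{\epsilon}-1$ and $\delta\le\frac{\delta}{\epsilon}$. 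So your route is both more elementary (no contradiction, no appeal to the maximality of $\rangeSet_{\epsilon'}$) and quantitatively sharper; the hypotheses $\delta\le\epsilon\le\frac{1}{3}$ are needed only to keep the stated parameters in $\left[0,1\right]$ (and, in the paper's version, to run the contradiction). Your closing remark also correctly identifies the alternative ``inflated loss plus Markov'' route that reproduces the paper's exact constants.
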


We also prove that Compression Schemes (Definition \ref{def:ComprSchm}) imply LSS. This results from the fact that releasing information based on a restricted number of sample elements has a limited effect on the posterior distribution on one element of the sample set.

\begin{theorem}[Compressibility implies Local Statistical Stability] \label{thm:CSimpLSS}
Given $0 \le \delta \le 1$, an integer $m \le \frac{\sampleSize}{9 \func{\ln}{2 \sampleSize / \delta}}$, a distribution $\distDomain$, and a query $\query \in \queriesFamily$, if a mechanism $\mechanism$ has a compression scheme of size $m$ then it is $\tuple{\epsilon}{\delta}$-LSS with respect to the same $\query$ and the product distribution over $\domainOfSets$ induced by $\distDomain$, for any $\epsilon > 11 \sqrt{\frac{m \func{\ln}{2 \sampleSize / \delta}}{\sampleSize}}$.\footnote{In case $g$ releases some side information, the number of bits required to describe this information is added to the $m$ factor in the bound on $\epsilon$.}
\end{theorem}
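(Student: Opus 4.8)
The plan is to reduce everything to a bound of the form $\mathbb{E}_{R}\big[(\ell(R)-\epsilon)_{+}\big]\le\delta$, which already implies $(\epsilon,\delta)$‑LSS: for any $\rangeSet$ one has $\distFunc{\rangeSet}\big(\func{\ell}{\rangeSet}-\epsilon\big)=\sum_{\rangeVal\in\rangeSet}\distFunc{\rangeVal}\big(\func{\ell}{\rangeVal}-\epsilon\big)\le\sum_{\rangeVal}\distFunc{\rangeVal}\big(\func{\ell}{\rangeVal}-\epsilon\big)_{+}$. The first step is to rewrite $\func{\ell}{\rangeVal}$ through the loss‑assessment query $\tilde{\query}_{\rangeVal}$ of Definition~\ref{Loss assessment query}: exactly as in the computation inside the proof of Lemma~\ref{lem:lossOverfits} (together with the identity $\func{\query}{\sampleSet}=\sum_{\domainVal}\distFuncDep{\domainVal}{\sampleSet}\func{\query}{\domainVal}$ from the proof of Theorem~\ref{thm:genOfExpectation}, and $\distFuncDep{\domainVal}{\rangeVal}=\expectation{}{\distFuncDep{\domainVal}{\sampleSetRV}\mid\rangeRV=\rangeVal}$), one gets $2\Delta\,\func{\ell}{\rangeVal}=\func{\tilde{\query}_{\rangeVal}}{\distDomainOfSets}-\expectation{}{\func{\tilde{\query}_{\rangeVal}}{\sampleSetRV}\mid\rangeRV=\rangeVal}$, and this quantity is automatically non‑negative.

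Next I would push the $(\cdot)_{+}$ through the conditional expectation by convexity: $\big(\func{\ell}{\rangeVal}-\epsilon\big)_{+}=\tfrac1{2\Delta}\big(\expectation{}{\func{\tilde{\query}_{\rangeVal}}{\distDomainOfSets}-\func{\tilde{\query}_{\rangeVal}}{\sampleSetRV}-2\Delta\epsilon\mid\rangeRV=\rangeVal}\big)_{+}\le\tfrac1{2\Delta}\expectation{}{\big(\func{\tilde{\query}_{\rangeVal}}{\distDomainOfSets}-\func{\tilde{\query}_{\rangeVal}}{\sampleSetRV}-2\Delta\epsilon\big)_{+}\mid\rangeRV=\rangeVal}$. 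Taking expectation over $\rangeRV$ and using that $\rangeRV$ is a function of $\sampleSetRV$ (and coins) turns the right‑hand side into $\tfrac1{2\Delta}\,\expectation{\sampleSetRV}{\big(\func{\tilde{\query}_{\rangeRV}}{\distDomainOfSets}-\func{\tilde{\query}_{\rangeRV}}{\sampleSetRV}-2\Delta\epsilon\big)_{+}}$ where $\rangeRV=\mechanismFunc{\sampleSetRV,\query}$. In other words, it suffices to control how much the output's loss‑assessment query \emph{overfits on the very sample it was computed from}. This is the point at which compressibility is used: because $\mechanismFunc{\sampleSet,\query}=\func{f_{\query}}{\func{g_{\query}}{\sampleSet}}$ depends on $\sampleSet$ only through the $m$ selected elements $\func{g_{\query}}{\sampleSet}\subset\sampleSet$, the function $\tilde{\query}_{\rangeVal}$ is likewise a deterministic function of those $m$ elements and the coins.

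I would then run the classical compression‑scheme union bound. Fix an index set $I\in\binom{[\sampleSize]}{m}$; the query $\tilde{\query}_{f_{\query}(\sampleSetRV_{I})}$ depends only on $(\sampleSetRV_{I},\text{coins})$, while the remaining $\sampleSize-m$ coordinates $\sampleSetRV_{\bar I}$ are i.i.d.\ from $\distDomain$ and independent of $(\sampleSetRV_{I},\text{coins})$; hence, conditionally, $\tfrac1{\sampleSize-m}\sum_{j\notin I}\tilde{\query}_{f_{\query}(\sampleSetRV_{I})}(\sampleSetRV_{j})$ is an average of $\sampleSize-m$ i.i.d.\ $[-\Delta,\Delta]$ variables with mean $\tilde{\query}_{f_{\query}(\sampleSetRV_{I})}(\distDomain)$, so Hoeffding gives a one‑sided deviation bound $e^{-(\sampleSize-m)t^{2}/(2\Delta^{2})}$, and the full‑sample average differs from this held‑out average by at most $2m\Delta/\sampleSize$. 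Union‑bounding over all $\binom{\sampleSize}{m}$ index sets and specializing to $I=$ the index set realized by $g_{\query}$, I get $\prob{\sampleSetRV}{\func{\tilde{\query}_{\rangeRV}}{\distDomain}-\func{\tilde{\query}_{\rangeRV}}{\sampleSetRV}>\tfrac{2m\Delta}{\sampleSize}+t}\le\binom{\sampleSize}{m}e^{-(\sampleSize-m)t^{2}/(2\Delta^{2})}$. Integrating this tail into the truncated expectation above, using $m/\sampleSize\le\epsilon/11$ (which follows from $m\le\sampleSize/(9\func{\ln}{2\sampleSize/\delta})$ and $\epsilon>11\sqrt{m\func{\ln}{2\sampleSize/\delta}/\sampleSize}$), so that $\sampleSize-m\ge\tfrac89\sampleSize$ and $\sampleSize\epsilon^{2}\ge121\,m\func{\ln}{2\sampleSize/\delta}$, one finds $\tfrac1{2\Delta}\expectation{\sampleSetRV}{(\cdots)_{+}}\lesssim\binom{\sampleSize}{m}e^{-\Omega(\sampleSize\epsilon^{2})}$, and since $\func{\ln}{\binom{\sampleSize}{m}}\le m\func{\ln}{e\sampleSize/m}\lesssim m\func{\ln}{2\sampleSize/\delta}$ and $\func{\ln}{1/\delta}\le\func{\ln}{2\sampleSize/\delta}$, the chosen constants make this at most $\delta$, with substantial slack.

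The hard part, and the reason the naive argument fails, is that $\func{\ell}{\rangeVal}$ is a statement about the \emph{posterior} $\distFuncDep{\cdot}{\rangeVal}$, which is a mixture over all samples that could have produced $\rangeVal$, so it is not itself "a function of $m$ sample points" and the compression union bound does not apply to it directly. The two devices that get around this are: (i) the identity $2\Delta\func{\ell}{\rangeVal}=\func{\tilde{\query}_{\rangeVal}}{\distDomain}-\expectation{}{\func{\tilde{\query}_{\rangeVal}}{\sampleSetRV}\mid\rangeVal}$ combined with the fact that $\rangeRV$ is a function of $\sampleSetRV$, which reduces generalization on the posterior to overfitting on the actual realized sample; and (ii) the observation that, because the selection $g_{\query}$ is data‑dependent and may be adversarial, one must union‑bound over all $\binom{\sampleSize}{m}$ possible index sets rather than the realized one — which is affordable exactly because $m$ is capped at $\sampleSize/(9\func{\ln}{2\sampleSize/\delta})$, so $\func{\ln}{\binom{\sampleSize}{m}}$ is swamped by the $\Omega(\sampleSize\epsilon^{2})$ in the Hoeffding exponent. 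One should also check that randomized $f_{\query},g_{\query}$ cause no trouble (fold their coins into the conditioning), and that if $g_{\query}$ releases side information of $b$ bits, the union‑bound factor becomes $\binom{\sampleSize}{m}2^{b}$, matching the footnote.
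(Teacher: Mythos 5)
Your proposal is correct and follows essentially the same route as the paper: both reduce the stability loss $\func{\ell}{\rangeVal}$ to the sample-versus-distribution gap of a data-dependent $\pm$-valued query determined by the output (your $\tilde{\query}_{\rangeVal}$ is, up to an affine rescaling, the paper's indicator $h_{w}^{+}$ of $\func{\domainSet_{+}}{w}$), observe that this query factors through the $m$ compressed elements, and close with the compression-scheme generalization bound. The only cosmetic differences are that the paper first proves LSS for the compression function $g$ alone and then invokes post-processing (Theorem \ref{thm:postprocessing}), and that it cites the bound of \cite{SSBD14} (Lemma \ref{lem:CSnoOverFit}) as a black box where you re-derive a Hoeffding-plus-union-bound version inline.
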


\subsection{Separations}
Finally, we show that MI is a strictly stronger requirement than LMI, and LMI is a strictly stronger requirement then LSS. Proofs of these theorems appear in Appendix~\ref{sec:separationsProofs}.

\begin{theorem} [Max Information is strictly stronger than Local Max Information] \label{thm:MIgrLMI}
For any $0 < \epsilon$, $\sampleSize \ge 3$, the mechanism which outputs the parity function of the sample set is $\tuple{\epsilon}{0}$-LMI but not $\tuple{1}{\frac{1}{5}}$-MI.
\end{theorem}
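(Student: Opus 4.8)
The plan is to analyze the parity mechanism directly, computing the relevant joint and product distributions over $\domain \times \range$ for the LMI direction, and exhibiting a single bad event $\mathbf{b} \subseteq \domainOfSets \times \range$ for the MI direction. Set $\domain = \{0,1\}$, let $\query$ be trivial (or absorbed into the mechanism), and let $\mechanism$ on input $\sampleSet = (s_1,\dots,s_{\sampleSize})$ output the deterministic bit $\rangeVal = \bigoplus_{i} s_i$. I will take $\distDomainOfSets$ to be the uniform product distribution on $\{0,1\}^{\sampleSize}$ (the separation only needs \emph{some} distribution witnessing the gap); then the induced distribution on $\domain$ is uniform, so $\distFunc{\domainVal} = 1/2$ for $\domainVal \in \{0,1\}$.

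First I would establish the LMI upper bound. The key computation is the posterior $\distFuncDep{\domainVal}{\rangeVal}$, i.e.\ the probability that a \emph{uniformly chosen} sample element equals $\domainVal$ given that the parity is $\rangeVal$. By symmetry across coordinates, this equals the probability that $s_1 = \domainVal$ conditioned on $\bigoplus_i s_i = \rangeVal$. For $\sampleSize \ge 2$, conditioning on the parity of $\sampleSize$ independent uniform bits leaves each individual bit uniform (the other $\sampleSize - 1$ bits can still take either parity), so $\distFuncDep{\domainVal}{\rangeVal} = 1/2 = \distFunc{\domainVal}$. Hence the joint distribution $\distJoint{\domain}{\range}{}$ on $\domain \times \range$ \emph{equals} the product distribution $\distProd{\domain}{\range}{}$ exactly, so $\func{\distJoint{\domain}{\range}{}}{\mathbf{b}} = \func{\distProd{\domain}{\range}{}}{\mathbf{b}}$ for every $\mathbf{b}$, and the mechanism is $\tuple{0}{0}$-LMI, hence $\tuple{\epsilon}{0}$-LMI for every $\epsilon > 0$. (This is the place to double-check the definitions in the excerpt: LMI is about $\domain \times \range$, and the single-element marginal is exactly what washes out the correlation.)

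Next I would show the mechanism is not $\tuple{1}{1/5}$-MI, which lives on $\domainOfSets \times \range$ where the correlation is total. Take the bad event $\mathbf{b} = \{(\sampleSet, \rangeVal) : \rangeVal = \bigoplus_i s_i\}$ — the graph of the parity map. Under the joint distribution $\distJoint{\domainOfSets}{\range}{}$ this event has probability $1$, since the mechanism is deterministic. Under the product distribution $\distProd{\domainOfSets}{\range}{}$, $\sampleSet$ and $\rangeVal$ are independent with the same marginals; the marginal on $\range$ puts mass $1/2$ on each bit (parity of uniform bits is uniform), so $\func{\distProd{\domainOfSets}{\range}{}}{\mathbf{b}} = \sum_{\sampleSet} \distFunc{\sampleSet} \cdot \Pr[\rangeRV = \bigoplus_i s_i] = 1/2$. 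Then the MI inequality would require $1 = \func{\distJoint{\domainOfSets}{\range}{}}{\mathbf{b}} \le e^{1} \cdot \tfrac12 + \tfrac15$, which fails since $e/2 + 1/5 \approx 1.559 > 1$ — wait, that does \emph{not} fail; I need a tighter bad event or a smaller slack. The fix: the statement uses $\epsilon = 1$ with base $e$, so I should instead exploit that $\distJoint{} = 1$ while $\distProd{} = 1/2$ still leaves room only if the base were $2$; re-examining, $e^1 \cdot \tfrac12 = e/2 \approx 1.359 \ge 1$, so the event $\mathbf{b}$ alone is \emph{not} a violation.

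The real obstacle, then, is finding an event that genuinely violates $\tuple{1}{1/5}$-MI, and I expect this to be the crux of the proof. The natural candidate is to restrict to a subset of responses: fix $\rangeVal = 0$ and let $\mathbf{b}_0 = \{(\sampleSet, 0) : \bigoplus_i s_i = 0\}$. Then $\func{\distJoint{}}{\mathbf{b}_0} = \Pr[\bigoplus s_i = 0] = 1/2$, while $\func{\distProd{}}{\mathbf{b}_0} = \Pr_{\sampleSet}[\bigoplus s_i = 0]\cdot \Pr_{\rangeRV}[\rangeRV=0] = 1/2 \cdot 1/2 = 1/4$; the required inequality $1/2 \le e \cdot 1/4 + 1/5 \approx 0.879$ still holds. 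This suggests base-$2$ MI is what cleanly separates, or that one must use a non-uniform (e.g.\ biased) product distribution so that the parity bit is far from uniform and a low-probability response forces a large multiplicative gap. Concretely, I would take $\distPDomain{}$ to be $\mathrm{Bernoulli}(p)$ for small $p$ so that $\Pr[\rangeRV = 1] = \tfrac12(1 - (1-2p)^{\sampleSize})$ is tiny while the conditional distribution of $\sampleSet$ given $\rangeRV = 1$ is far from its marginal; choosing $\mathbf{b}$ to be the graph restricted to $\rangeVal = 1$ then gives $\func{\distJoint{}}{\mathbf{b}} = \Pr[\rangeRV=1]$ on both sides of one inequality but makes the \emph{reverse} inequality $\func{\distProd{}}{\mathbf{b}} \le e^\epsilon \func{\distJoint{}}{\mathbf{b}} + \delta$ the binding one — and here the LMI bound must be re-verified for the biased distribution, which by the same parity-conditioning symmetry still gives posterior $= $ prior on a single coordinate, hence LMI with parameter $0$. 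I would pin down the exact $p$ and $\sampleSize \ge 3$ for which $e^1 \cdot (\text{small}) + 1/5 < (\text{marginal gap})$, completing the separation; the arithmetic is routine once the right event and the right biased distribution are fixed.
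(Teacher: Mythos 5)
Your LMI direction is fine: with the uniform product distribution, conditioning on the parity leaves each individual coordinate uniform, so the joint and product distributions on $\domain \times \range$ coincide and the mechanism is $\tuple{0}{0}$-LMI. This is exactly the paper's argument specialized to the unbiased case (the paper works with a general bias $p = \tfrac12 + \alpha$, $\alpha \le \epsilon/7$, and bounds the ratios by $e^{\epsilon}$ rather than getting equality, but the unbiased case already witnesses the theorem).

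The MI direction, however, has a genuine gap, and the salvage you sketch at the end is a dead end. You only ever test events contained in the graph of the parity map, on which the joint measure \emph{dominates} the product measure; with multiplicative slack $e^{1}$ and additive slack $\tfrac15$, no such event can violate MI (for an event with joint mass $q$ and product mass $q^{2}$, the forward inequality fails only if $q - e q^{2} > \tfrac15$, but $\max_{q}(q - eq^{2}) = \tfrac{1}{4e} < \tfrac15$, so your biased-distribution plan cannot work either, and the reverse inequality on such events is trivially satisfied). The missing idea is to use an \emph{off-diagonal} event: take $\mathbf{b} = \mathbf{\sampleSet}_{1} \times \{\rangeVal_{0}\}$, the set of sample sets with parity $1$ paired with the response $0$. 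Since the mechanism is deterministic, $\func{\distJoint{\domainOfSets}{\range}{}}{\mathbf{b}} = 0$, while $\func{\distProd{\domainOfSets}{\range}{}}{\mathbf{b}} = \distFunc{\mathbf{\sampleSet}_{1}} \cdot \distFunc{\rangeVal_{0}} = \tfrac14$ in your uniform setting. The \emph{reverse} indistinguishability inequality then demands $\tfrac14 \le e^{1}\cdot 0 + \tfrac15$, which is false; this is precisely how the paper concludes, and it works for any $\epsilon$ because the joint mass is exactly zero. With this event your uniform construction closes the proof with no need for a biased distribution.
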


\begin{theorem} [Local Max Information is strictly stronger than Local Statistical Stability] \label{thm:LMIgrLSS}
For any $0 \le \delta \le 1$, $\sampleSize > \max \left\{ 2 \func{\ln}{\frac{2}{\delta}}, 6\right\}$, a mechanism which uniformly samples and outputs one sample element is $\tuple{11 \sqrt{\frac{\func{\ln}{2 \sampleSize / \delta}}{\sampleSize}}}{\delta}$-LSS but is not $\tuple{1}{\frac{1}{2 \sampleSize}}$-LMI.
\end{theorem}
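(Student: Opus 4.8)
This statement has two halves — that the mechanism $M$ which returns $s_I$ for a uniformly random $I\in[\sampleSize]$ is LS Stable, and that it is not LMI, each with explicit parameters — and it is naturally read over the family of product distributions $\distDomainOfSets=\distDomain^{\otimes\sampleSize}$ (the regime of Theorem~\ref{thm:CSimpLSS}, since $M$ is a compression scheme of size $m=1$); over correlated distributions even $M$ can have constant stability loss, so product distributions are the right setting for both halves. Each half reduces to one short probability expansion, so I expect no conceptual obstacle: the content is simply that releasing one random element moves the single-element posterior by only $1/\sampleSize$ in statistical distance (which gives LSS immediately) yet moves the joint law on $\domain\times\range$ by a full additive $1/\sampleSize$ relative to the product law (which defeats LMI's $e^\epsilon$ comparison). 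The only care needed is landing the constants exactly at the stated thresholds.

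For the LSS half, the plan is to compute the response-induced posterior on $\domain$ directly. Writing $P\coloneqq\distDomain$, the prior on $\domain$ is $P$; conditioning on whether the output index coincides with the index used to draw the uniformly random sample element (probability $1/\sampleSize$) and using that distinct coordinates of $\distDomain^{\otimes\sampleSize}$ are independent, one gets
\[
\distFuncDep{\domainVal}{\rangeVal}=\tfrac1{\sampleSize}\,\mathbbm{1}[\domainVal=\rangeVal]+\bigl(1-\tfrac1{\sampleSize}\bigr)P(\domainVal).
\]
Hence $\distFuncDep{\domainVal}{\rangeVal}-\distFunc{\domainVal}$ is $\tfrac1{\sampleSize}(1-P(\rangeVal))\ge0$ when $\domainVal=\rangeVal$ and is $\le0$ otherwise, so $\func{\domainSet_{+}}{\rangeVal}=\{\rangeVal\}$ and every response has loss $\func{\ell}{\rangeVal}=\tfrac1{\sampleSize}(1-P(\rangeVal))\le\tfrac1{\sampleSize}$; thus $\func{\ell}{\rangeSet}\le\tfrac1{\sampleSize}$ for every $\rangeSet\subseteq\range$. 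Since $11\sqrt{\ln(2\sampleSize/\delta)/\sampleSize}\ge\tfrac1{\sampleSize}$ (equivalently $121\,\sampleSize\ln(2\sampleSize/\delta)\ge1$, which holds because $\ln(2\sampleSize/\delta)\ge\ln2$), we get $\distFunc{\rangeSet}\bigl(\func{\ell}{\rangeSet}-\epsilon\bigr)\le0\le\delta$ for $\epsilon=11\sqrt{\ln(2\sampleSize/\delta)/\sampleSize}$; in fact $M$ is $\tuple{\tfrac1{\sampleSize}}{0}$-LSS. This is in spirit the $m=1$ instance of Theorem~\ref{thm:CSimpLSS}, but the direct computation matches the closed inequality in the statement and sidesteps that theorem's hypotheses.

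For the not-LMI half, the plan is to exhibit one product distribution on which the LMI comparison fails on the diagonal event $\mathbf{b}=\{(\domainVal,\domainVal):\domainVal\in\domain\}\subseteq\domain\times\range$. Take $\domain$ large enough to choose $P$ with collision probability $c\coloneqq\sum_{\domainVal}P(\domainVal)^2$ as small as desired (possible since $\domain$ is countable). The same conditioning gives $\func{\distJoint{\domain}{\range}{}}{\mathbf{b}}=\tfrac1{\sampleSize}+\bigl(1-\tfrac1{\sampleSize}\bigr)c$ and $\func{\distProd{\domain}{\range}{}}{\mathbf{b}}=c$, hence
\[
\func{\distJoint{\domain}{\range}{}}{\mathbf{b}}-e\cdot\func{\distProd{\domain}{\range}{}}{\mathbf{b}}=\tfrac1{\sampleSize}-\bigl(e-1+\tfrac1{\sampleSize}\bigr)c>\tfrac1{2\sampleSize}
\]
as soon as $c<\tfrac{1/(2\sampleSize)}{e-1+1/\sampleSize}=\tfrac1{2(e-1)\sampleSize+2}$, which for $\sampleSize>6$ holds e.g.\ with $P$ uniform on a set of size $4\sampleSize$. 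This shows $\distJoint{\domain}{\range}{}$ and $\distProd{\domain}{\range}{}$ are not $\tuple{1}{\tfrac1{2\sampleSize}}$-indistinguishable, so $M$ is not $\tuple{1}{\tfrac1{2\sampleSize}}$-LMI with respect to this distribution (and any query). The hypothesis $\sampleSize>\max\{2\ln(2/\delta),6\}$ is more than enough here: $\sampleSize>6$ powers the concrete choice of $P$ above, while the $2\ln(2/\delta)$ condition is inherited from the compression-scheme route and is slack for the direct argument.
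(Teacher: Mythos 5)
Your proposal is correct, and both halves land at the stated parameters; the overall skeleton (product distribution, one-element sampler, a diagonal-type bad event for LMI) matches the paper's, but the execution differs in a way worth noting. For the LSS half, the paper simply observes that the mechanism is a compression scheme of size $m=1$ and invokes Theorem~\ref{thm:CSimpLSS}, which is where the $11\sqrt{\ln(2\sampleSize/\delta)/\sampleSize}$ and the $\sampleSize>2\ln(2/\delta)$ hypothesis come from; you instead compute the posterior exactly, $\distFuncDep{\domainVal}{\rangeVal}=\frac1{\sampleSize}\mathbbm{1}[\domainVal=\rangeVal]+(1-\frac1{\sampleSize})\distFunc{\domainVal}$, and conclude that every response has loss at most $\frac1{\sampleSize}$, i.e.\ the mechanism is in fact $\tuple{\frac1{\sampleSize}}{0}$-LSS. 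This is a genuinely sharper statement (and makes the $\delta$-dependence of the hypothesis visibly vacuous), at the cost of being specific to this mechanism rather than flowing from the general compression bound. For the not-LMI half, your event $\mathbf{b}$ is the diagonal, which coincides with the paper's $\bigcup_{\rangeVal}\tuple{\func{\domainVal}{\rangeVal}}{\rangeVal}$ since here the response \emph{is} the sample element; the difference is only in how the product-measure term is controlled. The paper imposes the pointwise condition $\distFunc{\domainVal}\le\frac1{\sampleSize^2}$ and bounds $\func{\distProd{\domain}{\range}{}}{\mathbf{b}}\le\frac1{\sampleSize^2}$, needing $\sampleSize>2e$; you bound it by the collision probability $c$ and take $\distFunc{}$ uniform on $4\sampleSize$ elements, needing $\sampleSize(3-e)>1$. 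These are interchangeable (note your uniform-on-$4\sampleSize$ choice does \emph{not} satisfy the paper's $\frac1{\sampleSize^2}$ condition for $\sampleSize>6$, but your argument never needs it), and both comfortably fit under the stated hypothesis $\sampleSize>6$. No gaps.
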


\section{Applications and Discussion} \label{sec:appAndDisc}
In order to make the LSS notion useful, we must identify mechanisms which manages to remain stable while maintaining sample accuracy. Fortunately, many such mechanisms have been introduced in the context of Differential Privacy. Two of the most basic Differentially Private mechanisms are based on noise addition, of either a Laplace or a Gaussian random variable. Careful tailoring of their parameters allows ``masking'' the effect of changing one element, while maintaining a limited effect on the sample accuracy. By Theorems \ref{thm:DPimpLMI} and \ref{thm:LMIimpLSS}, these mechanisms are guaranteed to be LSS as well. The definitions and properties of these mechanisms can be found in Appendix \ref{apd:appAndDisc}.

In moving away from the study of worst-case data sets (as is common in previous stability notions) to averaging over sample sets and over data elements of those sets, we hope that the Local Statistical Stability notion will enable new progress in the study of generalization under adaptive data analysis. This averaging, potentially leveraging a sort of ``natural noise'' from the data sampling process, may enable the development of new algorithms to preserve generalization, and may also support tighter bounds on the implications of existing algorithms. One possible way this might be achieved is by limiting the family of distributions and queries, such that the empirical mean of the query lies within some confidence interval around population mean, which would allow scaling the noise to the interval rather than the full range (see, e.g. , \emph{Concentrated Queries}, as proposed by \cite{BF16}).

One might also hope that realistic adaptive learning settings are not adversarial, and might therefore enjoy even better generalization guarantees.
LSS may be a tool for understanding the generalization properties of algorithms of interest (as opposed to worst-case queries or analysts; see e.g. \cite{GK16}, \cite{ZH19}).

\paragraph{Acknowledgements}
This work was supported in part by Israel Science Foundation (ISF) grant 1044/16, the United States Air Force and DARPA under contract FA8750-16-C-0022, and the Federmann Cyber Security Center in conjunction with the Israel national cyber directorate. Part of this work was done while the authors were visiting the Simons Institute for the Theory of Computing.
Any opinions,
findings and conclusions or recommendations expressed in this material
are those of the authors and do not necessarily reflect the views of
the United States Air Force and DARPA.

\bibliographystyle{alpha}
\bibliography{bibliography}

\newcommand{\etalchar}[1]{$^{#1}$}
\begin{thebibliography}{DFH{\etalchar{+}}15b}

\bibitem[Ala17]{Al17}
Ibrahim Alabdulmohsin.
\newblock An information-theoretic route from generalization in expectation to
  generalization in probability.
\newblock In {\em Artificial Intelligence and Statistics}, pages 92--100, 2017.

\bibitem[BE02]{BE02}
Olivier Bousquet and Andr{\'e} Elisseeff.
\newblock Stability and generalization.
\newblock {\em Journal of machine learning research}, 2(Mar):499--526, 2002.

\bibitem[BF16]{BF16}
Raef Bassily and Yoav Freund.
\newblock Typical stability.
\newblock {\em arXiv preprint arXiv:1604.03336}, 2016.

\bibitem[BMN{\etalchar{+}}17]{BMNSY17}
Raef Bassily, Shay Moran, Ido Nachum, Jonathan Shafer, and Amir Yehudayoff.
\newblock Learners that use little information.
\newblock {\em arXiv preprint arXiv:1710.05233}, 2017.

\bibitem[BNS{\etalchar{+}}16]{BNSSSU16}
Raef Bassily, Kobbi Nissim, Adam Smith, Thomas Steinke, Uri Stemmer, and
  Jonathan Ullman.
\newblock Algorithmic stability for adaptive data analysis.
\newblock In {\em Proceedings of the forty-eighth annual ACM symposium on
  Theory of Computing}, pages 1046--1059. ACM, 2016.

\bibitem[CLN{\etalchar{+}}16]{CLNRW16}
Rachel Cummings, Katrina Ligett, Kobbi Nissim, Aaron Roth, and Zhiwei~Steven
  Wu.
\newblock Adaptive learning with robust generalization guarantees.
\newblock In {\em Conference on Learning Theory}, pages 772--814, 2016.

\bibitem[De12]{De12}
Anindya De.
\newblock Lower bounds in differential privacy.
\newblock In {\em Theory of cryptography conference}, pages 321--338. Springer,
  2012.

\bibitem[DFH{\etalchar{+}}15a]{DFHPRR15a}
Cynthia Dwork, Vitaly Feldman, Moritz Hardt, Toni Pitassi, Omer Reingold, and
  Aaron Roth.
\newblock Generalization in adaptive data analysis and holdout reuse.
\newblock In {\em Advances in Neural Information Processing Systems}, pages
  2350--2358, 2015.

\bibitem[DFH{\etalchar{+}}15b]{DFHPRR15b}
Cynthia Dwork, Vitaly Feldman, Moritz Hardt, Toniann Pitassi, Omer Reingold,
  and Aaron~Leon Roth.
\newblock Preserving statistical validity in adaptive data analysis.
\newblock In {\em Proceedings of the forty-seventh annual ACM symposium on
  Theory of computing}, pages 117--126. ACM, 2015.

\bibitem[DMNS06]{DMNS06}
Cynthia Dwork, Frank McSherry, Kobbi Nissim, and Adam Smith.
\newblock Calibrating noise to sensitivity in private data analysis.
\newblock In {\em Theory of cryptography conference}, pages 265--284. Springer,
  2006.

\bibitem[DR{\etalchar{+}}14]{DR14}
Cynthia Dwork, Aaron Roth, et~al.
\newblock The algorithmic foundations of differential privacy.
\newblock {\em Foundations and Trends{\textregistered} in Theoretical Computer
  Science}, 9(3--4):211--407, 2014.

\bibitem[EGI19]{EGI19}
Amedeo~Roberto Esposito, Michael Gastpar, and Ibrahim Issa.
\newblock A new approach to adaptive data analysis and learning via maximal
  leakage.
\newblock {\em arXiv preprint arXiv:1903.01777}, 2019.

\bibitem[FS17]{FS17}
Vitaly Feldman and Thomas Steinke.
\newblock Calibrating noise to variance in adaptive data analysis.
\newblock {\em arXiv preprint arXiv:1712.07196}, 2017.

\bibitem[GK16]{GK16}
Arpita Ghosh and Robert Kleinberg.
\newblock Inferential privacy guarantees for differentially private mechanisms.
\newblock {\em arXiv preprint arXiv:1603.01508}, 2016.

\bibitem[GL14]{GL14}
Andrew Gelman and Eric Loken.
\newblock The statistical crisis in science.
\newblock {\em American scientist}, 102(6):460, 2014.

\bibitem[Ioa05]{Ioannidis05}
John~PA Ioannidis.
\newblock Why most published research findings are false.
\newblock {\em PLoS medicine}, 2(8):e124, 2005.

\bibitem[IWK18]{IWK18}
Ibrahim Issa, Aaron~B Wagner, and Sudeep Kamath.
\newblock An operational approach to information leakage.
\newblock {\em arXiv preprint arXiv:1807.07878}, 2018.

\bibitem[KS14]{kS14}
Shiva~P Kasiviswanathan and Adam Smith.
\newblock On the'semantics' of differential privacy: A bayesian formulation.
\newblock {\em Journal of Privacy and Confidentiality}, 6(1), 2014.

\bibitem[LW86]{LW86}
Nick Littlestone and Manfred Warmuth.
\newblock Relating data compression and learnability.
\newblock 1986.

\bibitem[NSS{\etalchar{+}}18]{NSSSU18}
Kobbi Nissim, Adam Smith, Uri Stemmer, Thomas Steinke, and Jonathan Ullman.
\newblock The limits of post-selection generalization.
\newblock In {\em Advances in Neural Information Processing Systems}, pages
  6402--6411, 2018.

\bibitem[RRST16]{RRST16}
Ryan Rogers, Aaron Roth, Adam Smith, and Om~Thakkar.
\newblock Max-information, differential privacy, and post-selection hypothesis
  testing.
\newblock In {\em 2016 IEEE 57th Annual Symposium on Foundations of Computer
  Science (FOCS)}, pages 487--494. IEEE, 2016.

\bibitem[RRT{\etalchar{+}}16]{RRTWX16}
Maxim Raginsky, Alexander Rakhlin, Matthew Tsao, Yihong Wu, and Aolin Xu.
\newblock Information-theoretic analysis of stability and bias of learning
  algorithms.
\newblock In {\em Information Theory Workshop (ITW), 2016 IEEE}, pages 26--30.
  IEEE, 2016.

\bibitem[RZ16]{RZ16}
Daniel Russo and James Zou.
\newblock Controlling bias in adaptive data analysis using information theory.
\newblock In {\em Artificial Intelligence and Statistics}, pages 1232--1240,
  2016.

\bibitem[SSBD14]{SSBD14}
Shai Shalev-Shwartz and Shai Ben-David.
\newblock {\em Understanding machine learning: From theory to algorithms}.
\newblock Cambridge university press, 2014.

\bibitem[TV{\etalchar{+}}15]{TV15}
Terence Tao, Van Vu, et~al.
\newblock Random matrices: universality of local spectral statistics of
  non-hermitian matrices.
\newblock {\em The Annals of Probability}, 43(2):782--874, 2015.

\bibitem[XR17]{XR17}
Aolin Xu and Maxim Raginsky.
\newblock Information-theoretic analysis of generalization capability of
  learning algorithms.
\newblock In {\em Advances in Neural Information Processing Systems}, pages
  2524--2533, 2017.

\bibitem[ZH19]{ZH19}
Tijana Zrnic and Moritz Hardt.
\newblock Natural analysts in adaptive data analysis.
\newblock {\em arXiv preprint arXiv:1901.11143}, 2019.

\end{thebibliography}

\newpage
\appendix

\section{Distributions: Formal Definitions} \label{apd:distDef}

\begin{definition} [Distributions over $\domainOfSets$ and $\range$] \label{def:distsOfSets}
A distribution $\distDomainOfSets$, a query $\query$, and a mechanism $\mechanism : \domainOfSets \times \queriesFamily \rightarrow \range$, together induce a set of distributions over $\domainOfSets$, $\range$, and $\domainOfSets \times \range$.

The \emph{conditional distribution} $\distDep{\range}{\domainOfSets}{\query}$ over $\range$ represents the probability to get $\rangeVal$ as the output of $\mechanismFunc{\sampleSet, \query}$. That is, $\forall \sampleSet \in \domainOfSets, \rangeVal\in\range,$
\[
\func{\distDep{\range}{\domainOfSets}{\query}}{\rangeVal\,|\,\sampleSet} \coloneqq \prob{\rangeRV \sim \mechanismFunc{\sampleSet, \query}}{\rangeRV = \rangeVal \,|\, \sampleSet},
\]
where the probability is taken over the internal randomness of $\mechanism$.

The \emph{joint distribution} $\distJoint{\domainOfSets}{\range}{\query}$ over $\domainOfSets \times \range$ represents the probability to sample a particular $\sampleSet$ and get $\rangeVal$ as the output of $\mechanismFunc{\sampleSet, \query}$. That is, $\forall \sampleSet \in \domainOfSets, \rangeVal \in \range$,
\[
\func{\distJoint{\domainOfSets}{\range}{\query}}{\sampleSet, \rangeVal} \coloneqq \func{\distDomainOfSets}{\sampleSet} \cdot \func{\distDep{\range}{\domainOfSets}{\query}}{\rangeVal \,|\, \sampleSet}.
\]

The \emph{marginal distribution} $\distRange{\query}$ over $\range$ represents the prior probability to get output $\rangeVal$ without any knowledge of $\sampleSet$. That is, $\forall \rangeVal \in \range,$
\[
\func{\distRange{\query}}{\rangeVal} \coloneqq
\sum_{\sampleSet \in \domainOfSets} \func{\distJoint{\domainOfSets}{\range}{\query}}{\sampleSet, \rangeVal}.
\]

The \emph{product distribution} $\distProd{\domainOfSets}{\range}{\query}$ over $\domainOfSets \times \range$ represents the probability to sample $\sampleSet$ and get $\rangeVal$ as the output of $\mechanismFunc{\cdot, \query}$ independently. That is, $\forall \sampleSet \in \domain, \rangeVal \in \range,$
\[
\func{\distProd{\domainOfSets}{\range}{\query}}{\sampleSet, \rangeVal} \coloneqq \func{\distDomainOfSets}{\sampleSet} \cdot \func{\distRange{\query}}{\rangeVal}.
\]

The \emph{conditional distribution} $\distDep{\domainOfSets}{\range}{\query}$ over $\domainOfSets$ represents the posterior probability that the sample set was $\sampleSet$ given that $\mechanismFunc{\cdot, \query}$ returns $\rangeVal$. That is, $\forall \sampleSet \in \domainOfSets, \rangeVal\in\range,$
\[
\func{\distDep{\domainOfSets}{\range}{\query}}{\sampleSet \,|\, \rangeVal} \coloneqq \frac{\func{\distJoint{\domainOfSets}{\range}{\query}}{\sampleSet, \rangeVal}}{\func{\distRange{q}}{\rangeVal}}.
\]
\end{definition}

\begin{definition} [Distributions over $\domain$ and $\range$] \label{def:distsOfDomain}
The \emph{marginal distribution} $\distDomain$ over $\domain$ represents the probability to get $\domainVal$ by sampling a sample set uniformly at random without any knowledge of $\sampleSet$. That is, $\forall \domainVal \in \domain,$
\[
\func{\distDomain}{\domainVal} \coloneqq \sum_{\sampleSet \in \domainOfSets} \func{\distDomainOfSets}{\sampleSet} \cdot \func{\distDep{\domain}{\domainOfSets}{}}{\domainVal \,|\, \sampleSet},
\]
where $\func{\distDep{\domain}{\domainOfSets}{}}{\domainVal \,|\, \sampleSet}$ denotes the probability to get $\domainVal$ by sampling $\sampleSet$ uniformly at random.

The \emph{joint distribution} $\distJoint{\domain}{\range}{\query}$ over $\domain\times\range$ represents the probability to get $\domainVal$ by sampling a sample set uniformly at random and also get $\rangeVal$ as the output of $\mechanismFunc{\cdot, \query}$ from the same sample set. That is, $\forall \domainVal\in\domain,\rangeVal\in\range,$
\[
\func{\distJoint{\domain}{\range}{\query}}{\domainVal,\rangeVal} \coloneqq \sum_{\sampleSet \in \domainOfSets} \func{\distDomainOfSets}{\sampleSet} \cdot \func{\distDep{\domain}{\domainOfSets}{}}{\domainVal \,|\, \sampleSet} \cdot \func{\distDep{\range}{\domainOfSets}{\query}}{\rangeVal \,|\, \sampleSet}.
\]
where $\func{\distDep{\domain}{\domainOfSets}{}}{\domainVal \,|\, \sampleSet}$ denotes the probability to get $\domainVal$ by sampling $\sampleSet$ uniformly at random.

The \emph{product distribution} $\distProd{\domain}{\range}{\query}$ over $\domain\times\range$ represents the probability to get $\domainVal$ by sampling a sample set uniformly at random and get $\rangeVal$ as the output of $\mechanismFunc{\cdot, \query}$ independently. That is, $\forall \domainVal\in\domain,\rangeVal\in\range,$
\[
\func{\distProd{\domain}{\range}{\query}}{\domainVal,\rangeVal} \coloneqq \func{\distUpInd{\domain}{}}{\domainVal} \cdot \func{\distRange{\query}}{\rangeVal}.
\]

The \emph{conditional distribution} $\distDep{\range}{\domain}{\query}$ over $\range$ represents the probability to get $\rangeVal$ as the output of $\mechanismFunc{\cdot, \query}$ from a sample set, given the fact that we got $\domainVal$ by sampling the same sample set uniformly at random. That is, $\forall \domainVal \in \domain, \rangeVal\in\range,$
\[
\func{\distDep{\range}{\domain}{\query}}{\rangeVal\,|\,\domainVal} \coloneqq \sum_{\sampleSet \in \domainOfSets} \func{\distDep{\domainOfSets}{\domain}{}}{\sampleSet \,|\, \domainVal} \cdot \func{\distDep{\range}{\domainOfSets}{\query}}{\rangeVal\,|\,\sampleSet}.
\]

The \emph{conditional distribution} $\distDep{\domain}{\range}{\query}$ over $\domain$ represents the probability to get $\domainVal$ by sampling a sample set uniformly at random, given the fact that we got $\rangeVal$ as the output of $\mechanismFunc{\cdot, \query}$ from that sample set. That is, $\forall \domainVal \in \domain, \rangeVal\in\range,$
\[
\func{\distDep{\domain}{\range}{\query}}{\domainVal \,|\, \rangeVal} \coloneqq \sum_{\sampleSet \in \domainOfSets} \func{\distDep{\domainOfSets}{\range}{\query}}{\sampleSet \,|\, \rangeVal} \cdot \func{\distDep{\domain}{\domainOfSets}{}}{\domainVal\,|\,\sampleSet}.
\]
\end{definition}
Although all of these definitions depend on $\distDomainOfSets$ and $\mechanism$, we typically omit these from the notation for simplicity, and usually omit the superscripts and subscripts entirely. We include them only when necessary for clarity.
We also use $\distInd{}$ to denote the probability of a set: for $\rangeSet \subseteq \range$, we define $\func{\distRange{\query}}{\rangeSet} \coloneqq \underset{\rangeVal \in \rangeSet}{\sum} \func{\distRange{\query}}{\rangeVal}$.

Though the conditional distributions $\distDep{\range}{\domain}{\query}$ and $\distDep{\domain}{\range}{\query}$ were not defined as the ratio between the joint and marginal distribution, the analogue of Bayes' rule still holds for these distributions.
\begin{proposition} \label{lem:depDist} Given any distribution $\distDomainOfSets$, mechanism $\mechanism : \domainOfSets \times \queriesFamily \rightarrow \range$, and query $\query$,
\[
\func{\distJoint{\domain}{\range}{}}{\domainVal,\rangeVal} = \distFunc{\domainVal} \cdot \distFuncDep{\rangeVal}{\domainVal} = \distFunc{\rangeVal} \cdot \distFuncDep{\domainVal}{\rangeVal}.
\]
\end{proposition}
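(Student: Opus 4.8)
The plan is to verify each of the two equalities by expanding the relevant induced distributions from Definition~\ref{def:distsOfDomain} (together with Definition~\ref{def:distsOfSets}) as sums over $\domainOfSets$ and then collapsing those sums via the Bayes-type identities available at the level of $\domainOfSets$. Before doing so I would dispose of the degenerate cases: if $\distFunc{\domainVal} = 0$ then, since $\distFunc{\domainVal} = \sum_{\sampleSet} \func{\distDomainOfSets}{\sampleSet} \cdot \func{\distDep{\domain}{\domainOfSets}{}}{\domainVal \,|\, \sampleSet}$ is a sum of nonnegative terms, every product $\func{\distDomainOfSets}{\sampleSet} \cdot \func{\distDep{\domain}{\domainOfSets}{}}{\domainVal \,|\, \sampleSet}$ vanishes, and hence so does each summand in the defining sum for $\func{\distJoint{\domain}{\range}{\query}}{\domainVal, \rangeVal}$; symmetrically if $\distFunc{\rangeVal} = 0$. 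In either case all three quantities are naturally taken to be $0$ and the identity is trivial, so I may henceforth assume both marginals are strictly positive and all conditionals are well defined.

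For the first equality $\func{\distJoint{\domain}{\range}{}}{\domainVal,\rangeVal} = \distFunc{\domainVal} \cdot \distFuncDep{\rangeVal}{\domainVal}$, I would start from the right-hand side, expand $\distFuncDep{\rangeVal}{\domainVal}$ by its definition as $\sum_{\sampleSet} \func{\distDep{\domainOfSets}{\domain}{}}{\sampleSet \,|\, \domainVal} \cdot \func{\distDep{\range}{\domainOfSets}{\query}}{\rangeVal \,|\, \sampleSet}$, and substitute the posterior over sample sets, $\func{\distDep{\domainOfSets}{\domain}{}}{\sampleSet \,|\, \domainVal} = \func{\distDomainOfSets}{\sampleSet} \cdot \func{\distDep{\domain}{\domainOfSets}{}}{\domainVal \,|\, \sampleSet} / \distFunc{\domainVal}$ (Bayes' rule for the distribution over $\domain$ induced by sampling, which is either taken as the definition of $\distDep{\domainOfSets}{\domain}{}$ or checked immediately from the definition of $\distFunc{\domainVal}$). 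The leading factor $\distFunc{\domainVal}$ then cancels the denominator, leaving exactly $\sum_{\sampleSet} \func{\distDomainOfSets}{\sampleSet} \cdot \func{\distDep{\domain}{\domainOfSets}{}}{\domainVal \,|\, \sampleSet} \cdot \func{\distDep{\range}{\domainOfSets}{\query}}{\rangeVal \,|\, \sampleSet}$, which is the definition of $\func{\distJoint{\domain}{\range}{\query}}{\domainVal, \rangeVal}$.

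For the second equality $\func{\distJoint{\domain}{\range}{}}{\domainVal,\rangeVal} = \distFunc{\rangeVal} \cdot \distFuncDep{\domainVal}{\rangeVal}$ I would proceed symmetrically: expand $\distFuncDep{\domainVal}{\rangeVal}$ as $\sum_{\sampleSet} \func{\distDep{\domainOfSets}{\range}{\query}}{\sampleSet \,|\, \rangeVal} \cdot \func{\distDep{\domain}{\domainOfSets}{}}{\domainVal \,|\, \sampleSet}$, substitute $\func{\distDep{\domainOfSets}{\range}{\query}}{\sampleSet \,|\, \rangeVal} = \func{\distJoint{\domainOfSets}{\range}{\query}}{\sampleSet, \rangeVal} / \distFunc{\rangeVal} = \func{\distDomainOfSets}{\sampleSet} \cdot \func{\distDep{\range}{\domainOfSets}{\query}}{\rangeVal \,|\, \sampleSet} / \distFunc{\rangeVal}$ (combining the definition of the posterior over $\domainOfSets$ given $\rangeVal$ with the definition of $\distJoint{\domainOfSets}{\range}{\query}$), cancel the $\distFunc{\rangeVal}$ factor, and read off the remaining sum as $\func{\distJoint{\domain}{\range}{\query}}{\domainVal, \rangeVal}$ once more.

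I do not anticipate any genuine obstacle — the argument is pure unwinding of definitions. The only points that require care are keeping straight which conditioning appears in the $\domainOfSets$-level Bayes identities ($\distDep{\domainOfSets}{\domain}{}$ is the posterior over sample sets given one uniformly sampled element, whereas $\distDep{\domainOfSets}{\range}{\query}$ is the posterior over sample sets given a response) and handling the zero-probability cases so that no undefined conditional is ever manipulated.
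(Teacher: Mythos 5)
Your proposal is correct and follows essentially the same route as the paper's proof: both arguments are pure unwinding of the definitions in Definition~\ref{def:distsOfDomain}, using the Bayes identity $\func{\distDep{\domainOfSets}{\domain}{}}{\sampleSet \,|\, \domainVal} = \func{\distDomainOfSets}{\sampleSet} \cdot \func{\distDep{\domain}{\domainOfSets}{}}{\domainVal \,|\, \sampleSet} / \distFunc{\domainVal}$ (and its analogue for conditioning on $\rangeVal$) to collapse the sum over $\domainOfSets$; the only cosmetic differences are that you run the computation from right to left where the paper goes left to right, and that you explicitly dispose of the zero-marginal cases, which the paper leaves implicit.
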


\begin{proof}
We observe
\begin{align*}
\func{\distJoint{\domain}{\range}{}}{\domainVal,\rangeVal} & = \sum_{\sampleSet \in \domainOfSets} \distFunc{\sampleSet} \cdot \distFuncDep{\domainVal}{\sampleSet} \cdot \distFuncDep{\rangeVal}{\sampleSet}
\\ & = \sum_{\sampleSet \in \domainOfSets} \func{\distJoint{\domainOfSets}{\domain}{}}{\sampleSet, \domainVal} \cdot \distFuncDep{\rangeVal}{\sampleSet}
\\ & = \distFunc{\domainVal} \cdot \sum_{\sampleSet \in \domainOfSets} \frac{\func{\distJoint{\domainOfSets}{\domain}{}}{\sampleSet, \domainVal}}{\distFunc{\domainVal}} \cdot \distFuncDep{\rangeVal}{\sampleSet}
\\ & = \distFunc{\domainVal} \cdot \sum_{\sampleSet \in \domainOfSets} \distFuncDep{\sampleSet}{\domainVal} \cdot \distFuncDep{\rangeVal}{\sampleSet}
\\ & = \distFunc{\domainVal} \cdot \distFuncDep{\rangeVal}{\domainVal}.
\end{align*}

Similarly,
\begin{align*}
\func{\distJoint{\domain}{\range}{}}{\domainVal,\rangeVal} & = \sum_{\sampleSet \in \domainOfSets} \distFunc{\sampleSet} \cdot \distFuncDep{\rangeVal}{\sampleSet} \cdot \distFuncDep{\domainVal}{\sampleSet}
\\ & = \sum_{\sampleSet \in \domainOfSets} \func{\distJoint{\domainOfSets}{\range}{}}{\sampleSet, \rangeVal} \cdot \distFuncDep{\domainVal}{\sampleSet}
\\ & = \distFunc{\rangeVal} \cdot \sum_{\sampleSet \in \domainOfSets} \frac{\func{\distJoint{\domainOfSets}{\range}{}}{\sampleSet, \rangeVal}}{\distFunc{\rangeVal}} \cdot \distFuncDep{\domainVal}{\sampleSet}
\\ & = \distFunc{\rangeVal} \cdot \sum_{\sampleSet \in \domainOfSets} \distFuncDep{\sampleSet}{\rangeVal} \cdot \distFuncDep{\domainVal}{\sampleSet}
\\ & = \distFunc{\rangeVal} \cdot \distFuncDep{\domainVal}{\rangeVal}.
\end{align*}
\end{proof}
\section{Missing Details from Section~\ref{sec:defns}}

\subsection{Post-Processing} \label{sec:postProcProof}
\begin{proof}[Missing parts from the proof of Theorem~\ref{thm:postprocessing}]
\begin{align*}
\sum_{\mappedRangeVal \in \mappedRangeSet_{\epsilon}} \distFunc{\mappedRangeVal} & = \sum_{\mappedRangeVal \in \mappedRangeSet_{\epsilon}} \sum_{\rangeVal \in \range} \distFunc{\rangeVal} \cdot \distFuncDep{\mappedRangeVal}{\rangeVal}
\\ & = \sum_{\rangeVal \in \range} \overbrace{\sum_{\mappedRangeVal \in \mappedRangeSet_{\epsilon}} \distFuncDep{\mappedRangeVal}{\rangeVal}}^{= \func{\weight_{\mappedRange}^{\epsilon}}{\rangeVal}} \cdot \distFunc{\rangeVal} 
\\ & = \sum_{\rangeVal \in \range} \func{\weight_{\mappedRange}^{\epsilon}}{\rangeVal} \cdot \distFunc{\rangeVal},
\end{align*}

\begin{align*}
\sum_{\mappedRangeVal \in \mappedRangeSet_{\epsilon}} \distFunc{\mappedRangeVal} \cdot \func{\ell}{\mappedRangeVal} & = \sum_{\mappedRangeVal \in \mappedRangeSet_{\epsilon}} \distFunc{\mappedRangeVal} \sum_{\domainVal \in \func{\domainSet_{+}}{\mappedRangeVal}} \left( \distFuncDep{\domainVal}{\mappedRangeVal} - \distFunc{\domainVal} \right)
\\ & = \sum_{\mappedRangeVal \in \mappedRangeSet_{\epsilon}} \sum_{\domainVal \in \func{\domainSet_{+}}{\mappedRangeVal}} \distFunc{\domainVal} \left( \distFuncDep{\mappedRangeVal}{\domainVal} - \distFunc{\mappedRangeVal} \right)
\\ & = \sum_{\mappedRangeVal \in \mappedRangeSet_{\epsilon}} \sum_{\rangeVal \in \range} \sum_{\domainVal \in \func{\domainSet_{+}}{\mappedRangeVal}} \distFunc{\domainVal} \left( \distFuncDep{\rangeVal}{\domainVal} - \distFunc{\rangeVal} \right) \distFuncDep{\mappedRangeVal}{\rangeVal}
\\ & \overset{\left( 1 \right)}{\le} \sum_{\mappedRangeVal \in \mappedRangeSet_{\epsilon}} \sum_{\rangeVal \in \range} \sum_{\domainVal \in \func{\domainSet_{+}}{\rangeVal}} \distFunc{\domainVal} \left( \distFuncDep{\rangeVal}{\domainVal} - \distFunc{\rangeVal} \right) \distFuncDep{\mappedRangeVal}{\rangeVal}
\\ & = \sum_{\rangeVal \in \range} \overbrace{\sum_{\mappedRangeVal \in \mappedRangeSet_{\epsilon}} \distFuncDep{\mappedRangeVal}{\rangeVal}}^{= \func{\weight_{\mappedRange}^{\epsilon}}{\rangeVal}} \cdot \distFunc{\rangeVal} \sum_{\domainVal \in \func{\domainSet_{+}}{\rangeVal}} \left( \distFuncDep{\domainVal}{\rangeVal} - \distFunc{\domainVal} \right)
\\ & = \sum_{\rangeVal \in \rangeSet} \func{\weight_{\mappedRange}^{\epsilon}}{\rangeVal} \cdot \distFunc{\rangeVal} \cdot \func{\ell}{\rangeVal},
\end{align*}
where (1) results from the definition of $\func{\domainSet_{+}}{\rangeVal}$.
\end{proof}

\subsection{Adaptivity and View-Induced Posterior Distributions} \label{sec:adapCompProof}

\begin{proof} [Proof of Lemma \ref{lem:liniarityOfLoss}]
We begin by proving a set of relations between the prior distributions over $\view_{\numOfIterations + 1}$ and the posterior distributions induced by the view $\viewVal_{\numOfIterations }$.
\begin{align*}
\func{\distJoint{\domainOfSets}{\view_{\numOfIterations + 1}}{}}{\sampleSet, \viewVal_{\numOfIterations + 1}} & = \distFunc{\sampleSet} \cdot \distFuncDep{\viewVal_{\numOfIterations + 1}}{\sampleSet}
\\ & \overset{\left(1\right)}{=} \distFunc{\sampleSet} \cdot \distFuncDep{\viewVal_{\numOfIterations}}{\sampleSet} \cdot \func{\distUpInd{}{\query_{\numOfIterations + 1}}}{\rangeVal_{\numOfIterations + 1} \,|\, \sampleSet, \viewVal_{\numOfIterations}}
\\ & = \distFunc{\viewVal_{\numOfIterations}} \cdot \distFuncDep{\sampleSet}{\viewVal_{\numOfIterations}} \cdot \func{\distUpInd{}{\query_{\numOfIterations + 1}}}{\rangeVal_{\numOfIterations + 1} \,|\, \sampleSet, \viewVal_{\numOfIterations}}
\\ & = \distFunc{\viewVal_{\numOfIterations}} \cdot \func{\distPUpInd{}{\viewVal_{\numOfIterations}}}{\sampleSet} \cdot \distPFuncDep{\viewVal_{\numOfIterations}}{\rangeVal_{\numOfIterations + 1}}{\sampleSet}
\\ & = \distFunc{\viewVal_{\numOfIterations}} \cdot \func{\distPJoint{\domainOfSets}{\range}{\viewVal_{\numOfIterations}}}{\sampleSet, \rangeVal_{\numOfIterations + 1}},
\end{align*}
where (1) is a result of the fact that $\query_{\numOfIterations + 1}$ is a deterministic function of $\viewVal_{\numOfIterations}$. As mentioned in Definition \ref{def:posteriorDist}, the distribution of $\rangeVal_{\numOfIterations + 1}$ might depend on $\viewVal_{\numOfIterations}$ in the case of a stateful mechanism, but it is all encapsulated in the definition of $\distP{}$.

Using this identity and the definition of $\distPDomainOfSets{\viewVal_{\numOfIterations}}$ we get that,
\[
\distFunc{\viewVal_{\numOfIterations + 1}} 
= \sum_{\sampleSet \in \domainOfSets} \func{\distJoint{\domainOfSets}{\view_{\numOfIterations + 1}}{}}{\sampleSet, \viewVal_{\numOfIterations + 1}}
= \sum_{\sampleSet \in \domainOfSets} \distFunc{\viewVal_{\numOfIterations}} \cdot \func{\distPJoint{\domainOfSets}{\range}{\viewVal_{\numOfIterations}}}{\sampleSet, \rangeVal_{\numOfIterations + 1}}
= \distFunc{\viewVal_{\numOfIterations}} \cdot \distPFunc{\viewVal_{\numOfIterations}}{\rangeVal_{\numOfIterations + 1}}.
\]
\[
\distFuncDep{\domainVal}{\viewVal_{\numOfIterations}} 
= \sum_{\sampleSet \in \domainOfSets} \distFuncDep{\sampleSet}{\viewVal_{\numOfIterations}} \cdot \distFuncDep{\domainVal}{\sampleSet}
= \sum_{\sampleSet \in \domainOfSets} \distPFunc{\viewVal_{\numOfIterations}}{\sampleSet} \cdot \distFuncDep{\domainVal}{\sampleSet}
= \distPFunc{\viewVal_{\numOfIterations}}{\domainVal}.
\]
\[
\distFuncDep{\domainVal}{\viewVal_{\numOfIterations + 1}}
= \sum_{\sampleSet \in \domainOfSets} \distFuncDep{\sampleSet}{\viewVal_{\numOfIterations + 1}} \cdot \distFuncDep{\domainVal}{\sampleSet}
= \sum_{\sampleSet \in \domainOfSets} \distPFuncDep{\viewVal_{\numOfIterations}}{\sampleSet}{\rangeVal_{\numOfIterations + 1}} \cdot \distFuncDep{\domainVal}{\sampleSet}
= \distPFuncDep{\viewVal_{\numOfIterations}}{\domainVal}{\rangeVal_{\numOfIterations + 1}}.
\]

where we keep using the fact that $\distFuncDep{\domainVal}{\sampleSet}$ does not depend on the underlying distribution $\distDomainOfSets$ at all. Using these identities we can analyze the stability loss, and we would do so by invoking an equivalent definition of the statistical distance (see Appendix \ref{apd:DistMeasures}),

\begin{align*}
\func{\ell_{\distDomainOfSets}^{\analyst}}{\viewVal_{\numOfIterations + 1}} & =  \frac{1}{2} \sum_{\domainVal \in \domain} \left| \distFuncDep{\domainVal}{\viewVal_{\numOfIterations + 1}} - \func{\distDomain}{\domainVal} \right|
\\ & \le^{\left(1\right)}  \frac{1}{2} \sum_{\domainVal \in \domain} \left| \distFuncDep{\domainVal}{\viewVal_{\numOfIterations}} - \func{\distDomain}{\domainVal} \right| + \frac{1}{2} \sum_{\domainVal \in \domain} \left| \distFuncDep{\domainVal}{\viewVal_{\numOfIterations + 1}} - \distFuncDep{\domainVal}{\viewVal_{\numOfIterations}} \right|
\\ & =  \func{\ell_{\distDomainOfSets}^{\analyst}}{\viewVal_{\numOfIterations}} + \frac{1}{2} \sum_{\domainVal \in \domain} \left| \func{\distPDep{\domain}{\range}{\viewVal_{\numOfIterations}, \query_{\numOfIterations + 1}}}{\domainVal \,|\, \rangeVal_{\numOfIterations + 1}} - \func{\distPDomain{\viewVal_{\numOfIterations}}}{\domainVal} \right|
\\ & =  \func{\ell_{\distDomainOfSets}^{\analyst}}{\viewVal_{\numOfIterations}} + \func{\ell_{\distPDomainOfSets{\viewVal_{\numOfIterations}}}^{\query_{\numOfIterations + 1}}}{\rangeVal_{\numOfIterations + 1}},
\end{align*}
where (1) is simply the triangle inequality.
\end{proof}

\begin{proof}[Missing parts from the proof of Theorem \ref{thm:adaptiveComposition}]
\begin{align*}
\distFunc{\viewSet_{\epsilon_{\left[ \numOfIterations + 1 \right]}}^{\numOfIterations + 1}} & \cdot \left( \func{\ell_{\distDomainOfSets}^{\analyst}}{\viewSet_{\epsilon_{\left[ \numOfIterations + 1 \right]}}^{\numOfIterations + 1}} - \epsilon_{\left[ \numOfIterations + 1 \right]} \right)
\\ & = \sum_{\viewVal_{\numOfIterations + 1} \in \viewSet_{\epsilon_{\left[ \numOfIterations + 1 \right]}}^{\numOfIterations + 1}} \distFunc{\viewVal_{\numOfIterations + 1}} \left( \func{\ell_{\distDomainOfSets}^{\analyst}}{\viewVal_{\numOfIterations + 1}} - \epsilon_{\left[ \numOfIterations + 1 \right]} \right)
\\ & \overset{\left(1\right)}{\le}  \sum_{\tuple{\viewVal_{\numOfIterations}}{\rangeVal_{\numOfIterations + 1}} \in \viewSet_{\epsilon_{\left[ \numOfIterations + 1 \right]}}^{\numOfIterations + 1}} \distFunc{\viewVal_{\numOfIterations}} \cdot \distPFunc{\viewVal_{\numOfIterations}}{\rangeVal_{\numOfIterations + 1}} \left( \func{\ell_{\distDomainOfSets}^{\analyst}}{\viewVal_{\numOfIterations}} + \func{\ell_{\distPDomainOfSets{\viewVal_{\numOfIterations}}}^{\query_{\numOfIterations + 1}}}{\rangeVal_{\numOfIterations + 1}} - \epsilon_{\left[ \numOfIterations + 1 \right]} \right)
\end{align*}
where (1) is a direct result of Lemma \ref{lem:liniarityOfLoss}. Analyzing the two parts separately we get

\begin{align*}
\sum_{\viewVal_{\numOfIterations} \in \view_{\numOfIterations}} \sum_{\rangeVal_{\numOfIterations + 1} \in \func{\rangeSet}{\viewVal_{\numOfIterations}}} \distFunc{\viewVal_{\numOfIterations}} \cdot \distPFunc{\viewVal_{\numOfIterations}}{\rangeVal_{\numOfIterations + 1}} \left( \func{\ell_{\distDomainOfSets}^{\analyst}}{\viewVal_{\numOfIterations}} - \epsilon_{\left[ \numOfIterations \right]} \right) & \overset{\left( 1 \right)}{\le} \sum_{\viewVal_{\numOfIterations} \in \viewSet_{\epsilon_{\left[ \numOfIterations \right]}}^{\numOfIterations}} \distFunc{\viewVal_{\numOfIterations}} \left( \func{\ell_{\distDomainOfSets}^{\analyst}}{\viewVal_{\numOfIterations}} - \epsilon_{\left[ \numOfIterations \right]} \right)
\\ & = \distFunc{\viewSet_{\epsilon_{\left[ \numOfIterations \right]}}^{\numOfIterations}} \left( \func{\ell_{\distDomainOfSets}^{\analyst}}{\viewSet_{\epsilon_{\left[ \numOfIterations \right]}}^{\numOfIterations}} - \epsilon_{\left[ \numOfIterations \right]} \right)
\\ & \overset{\left( 2 \right)}{\le} \sum_{i \in \left[ \numOfIterations \right]} \delta_{i}
\end{align*}
and similarly,

\begin{align*}
\sum_{\rangeVal_{\numOfIterations + 1} \in \range} \sum_{\viewVal_{\numOfIterations} \in \func{\viewSet_{\numOfIterations}}{\rangeVal_{\numOfIterations + 1}}} \distFunc{\viewVal_{\numOfIterations}} \cdot \distPFunc{\viewVal_{\numOfIterations}}{\rangeVal_{\numOfIterations + 1}} & \left( \func{\ell_{\distPDomainOfSets{\viewVal_{\numOfIterations}}}^{\query_{\numOfIterations + 1}}}{\rangeVal_{\numOfIterations + 1}} - \epsilon_{\numOfIterations + 1} \right)
\\ & \overset{\left( 1 \right)}{\le} \sum_{\rangeVal_{\numOfIterations + 1} \in \rangeSet_{\epsilon_{\numOfIterations + 1}}^{\query_{\numOfIterations + 1}}} \distPFunc{\viewVal_{\numOfIterations}}{\rangeVal_{\numOfIterations + 1}} \left( \func{\ell_{\distPDomainOfSets{\viewVal_{\numOfIterations}}}^{\query_{\numOfIterations + 1}}}{\rangeVal_{\numOfIterations + 1}} - \epsilon_{\numOfIterations + 1} \right)
\\ & = \distPFunc{\viewVal_{\numOfIterations}}{\rangeSet_{\epsilon_{\numOfIterations + 1}}^{\query_{\numOfIterations + 1}}} \left( \func{\ell_{\distPDomainOfSets{\viewVal_{\numOfIterations}}}^{\query_{\numOfIterations + 1}}}{\rangeSet_{\epsilon_{\numOfIterations + 1}}^{\query_{\numOfIterations + 1}}} - \epsilon_{\numOfIterations + 1} \right)
\\ & \overset{\left( 2 \right)}{\le} \delta_{\numOfIterations + 1},
\end{align*}
where in both cases (1) is a result of the fact that in both sums we add positive summands and remove negative ones, and (2) results from the inductive assumption.

Combining the two we get that $\distFunc{\viewSet_{\epsilon_{\left[ \numOfIterations + 1 \right]}}^{\numOfIterations + 1}} \cdot \left( \func{\ell_{\distDomainOfSets}^{\analyst}}{\viewSet_{\epsilon_{\left[ \numOfIterations + 1 \right]}}^{\numOfIterations + 1}} - \epsilon_{\left[ \numOfIterations + 1 \right]} \right) \le \sum_{i \in \left[ \numOfIterations + 1\right]} \delta_{i}$.
\end{proof}

\begin{lemma} [Azuma inequality extended to high probability bound] \label{lem:extendedAzuma}

Given $\numOfIterations \in \mathbb{N}$, $0 \le \epsilon_{1}, \ldots, \epsilon_{\numOfIterations}$, $0 \le \delta_{1}, \ldots, \delta_{\numOfIterations} \le 1$, if $Y_{0}, \ldots, Y_{\numOfIterations}$ is a martingale with respect to another sequence $Z_{0}, \ldots, Z_{\numOfIterations}$ such that for any $i \in \left[ \numOfIterations \right]$, $\prob{}{\left| Y_{i} - Y_{i - 1} \right| > \epsilon_{i}} \le \delta_{i}$, then for any $\lambda > 0$,
\[
\prob{}{\left| Y_{\numOfIterations} - Y_{0} \right| > \lambda } \le \func{\exp}{- \frac{\lambda^{2}}{2 \sum_{i = 1}^{\numOfIterations} \epsilon_{i}^{2}}} + \sum_{i = 1}^{\numOfIterations} \delta_{i}.
\]
\end{lemma}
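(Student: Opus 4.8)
The plan is to combine a union bound over the ``bad'' events $\{|Y_i - Y_{i-1}| > \epsilon_i\}$ with the usual exponential-moment (Chernoff--Azuma) argument on their complement. Write $\mathcal{F}_i$ for the $\sigma$-algebra generated by $Z_0, \dots, Z_i$, and set $D_i \coloneqq Y_i - Y_{i-1}$, so that $D_i$ is $\mathcal{F}_i$-measurable, $\mathbb{E}[D_i \mid \mathcal{F}_{i-1}] = 0$, and $\Pr[|D_i| > \epsilon_i] \le \delta_i$. Let $\mathcal{E} \coloneqq \bigcup_{i=1}^{\numOfIterations}\{|D_i| > \epsilon_i\}$; by the union bound $\Pr[\mathcal{E}] \le \sum_i \delta_i$. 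Since $\Pr[|Y_\numOfIterations - Y_0| > \lambda] \le \Pr[\mathcal{E}] + \Pr\big[\{|Y_\numOfIterations - Y_0| > \lambda\} \cap \mathcal{E}^c\big]$, it suffices to bound the last probability by $\exp\!\big(-\lambda^2/(2\sum_i \epsilon_i^2)\big)$ (doing so for the event $Y_\numOfIterations - Y_0 > \lambda$; the direction $Y_0 - Y_\numOfIterations > \lambda$ is symmetric, via the martingale $(-Y_i)$).

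For the one-sided bound I would run the standard Chernoff argument while keeping the indicator of $\mathcal{E}^c$ inside the expectation: for every $s > 0$,
\[
\Pr\big[\{Y_\numOfIterations - Y_0 > \lambda\} \cap \mathcal{E}^c\big] \;\le\; e^{-s\lambda}\, \mathbb{E}\!\left[\prod_{i=1}^{\numOfIterations} e^{sD_i}\,\mathbbm{1}[|D_i| \le \epsilon_i]\right].
\]
Then peel off the factors one at a time, conditioning successively on $\mathcal{F}_{\numOfIterations - 1}, \mathcal{F}_{\numOfIterations - 2}, \dots, \mathcal{F}_0$ and using, at each step, the estimate $\mathbb{E}\big[e^{sD_i}\,\mathbbm{1}[|D_i| \le \epsilon_i] \,\big|\, \mathcal{F}_{i-1}\big] \le e^{s^2 \epsilon_i^2 / 2}$. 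Iterating yields $e^{-s\lambda + s^2 \sum_i \epsilon_i^2 / 2}$, minimized at $s = \lambda/\sum_i \epsilon_i^2$ to give exactly $\exp\!\big(-\lambda^2/(2\sum_i \epsilon_i^2)\big)$; adding back $\Pr[\mathcal{E}]$ completes the argument.

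The crux is the per-step estimate $\mathbb{E}[e^{sD_i}\,\mathbbm{1}[|D_i| \le \epsilon_i] \mid \mathcal{F}_{i-1}] \le e^{s^2\epsilon_i^2/2}$. Morally this is Hoeffding's lemma for the \emph{truncated} increment, combined with $\mathbb{E}[D_i \mid \mathcal{F}_{i-1}] = 0$: on $\{|D_i| \le \epsilon_i\}$ one replaces $e^{sD_i}$ by the chord of $z \mapsto e^{sz}$ through $(-\epsilon_i, e^{-s\epsilon_i})$ and $(\epsilon_i, e^{s\epsilon_i})$, so that the conditional expectation is at most $\cosh(s\epsilon_i) \le e^{s^2\epsilon_i^2/2}$. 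The delicate point — and where I would spend the most care — is that $D_i$ is bounded only on the event one is intersecting with, not almost surely, so this affine majorization is not valid for all $z$ (it fails when $z$ is large and negative) and the interchange with the conditional expectation must be justified. The clean fix is to first clip $D_i$ to $[-\epsilon_i, \epsilon_i]$ (which only inflates $e^{sD_i}\,\mathbbm{1}[|D_i| \le \epsilon_i]$) and re-center it to restore the martingale property, controlling the re-centering correction using that $|D_i|$ is in any case bounded (in all of the paper's applications the underlying quantities are statistical distances lying in $[0,1]$), or, equivalently, to work with the martingale stopped at the first index where $|D_i| > \epsilon_i$. I expect this per-step bound, and the tracking of the constant it produces, to be the only real obstacle; everything else is bookkeeping.
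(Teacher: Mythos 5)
Your route is genuinely different from the paper's: you run the Chernoff--Hoeffding argument directly on the truncated factors $e^{sD_i}\mathbbm{1}[\,|D_i|\le\epsilon_i\,]$, whereas the paper constructs the auxiliary process $Y'_i \coloneqq Y_{\min(i,\,I-1)}$ frozen just before the first oversized increment, observes $\Pr[Y'\ne Y]\le\sum_i\delta_i$, and then invokes Azuma's inequality for $Y'$ as a black box. Both reduce to the same union-bound decomposition, and you have correctly isolated the only delicate step: the conditional estimate $\mathbb{E}\bigl[e^{sD_i}\mathbbm{1}[\,|D_i|\le\epsilon_i\,]\mid\mathcal{F}_{i-1}\bigr]\le e^{s^2\epsilon_i^2/2}$. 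That estimate is false as stated, for exactly the reason you flag: truncation destroys centering, since $\mu_i \coloneqq \mathbb{E}[D_i\mathbbm{1}[\,|D_i|\le\epsilon_i\,]\mid\mathcal{F}_{i-1}] = -\mathbb{E}[D_i\mathbbm{1}[\,|D_i|>\epsilon_i\,]\mid\mathcal{F}_{i-1}]$ need not vanish, and Hoeffding's lemma for a variable supported on $[-\epsilon_i,\epsilon_i]$ with conditional mean $\mu_i$ only yields $e^{s\mu_i+s^2\epsilon_i^2/2}$. Concretely, take $D_i=\epsilon_i$ with probability $1-\delta_i$ and $D_i=-(1-\delta_i)\epsilon_i/\delta_i$ with probability $\delta_i$: then $\mathbb{E}[e^{sD_i}\mathbbm{1}]=(1-\delta_i)e^{s\epsilon_i}$, which exceeds $e^{s^2\epsilon_i^2/2}$ for all $s\epsilon_i$ in roughly $(\delta_i,2)$ --- and the optimizing $s=\lambda/\sum_j\epsilon_j^2$ typically lands there. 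Iterating this example even falsifies the lemma as stated (with $k=10$, $\delta_i=0.01$, $\lambda=0.99\,k\epsilon$, the left side is at least $(1-\delta)^{k}\approx 0.9$ while the right side is about $0.11$). So the repair you gesture at is not optional bookkeeping: one needs an a.s.\ bound $|D_i|\le M$ (true in the paper's applications, where losses lie in $[0,1]$) to control $|\mu_i|\le M\delta_i$, and the accumulated drift $M\sum_i\delta_i$ must then appear as a shift of $\lambda$ (or an extra multiplicative factor $e^{sM\sum_i\delta_i}$), changing the stated conclusion.

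Two further points of comparison. First, the paper's own proof conceals the identical gap: the assertion that $Y'$ ``is also a martingale'' is precisely the claim that the truncated increments stay conditionally centered, and it fails for the same reason ($I-1$ is not a stopping time, and $\mathbb{E}[Y'_i-Y'_{i-1}\mid\mathcal{F}_{i-1}]=-\mathbbm{1}[I>i-1]\cdot\mathbb{E}[D_i\mathbbm{1}[\,|D_i|>\epsilon_i\,]\mid\mathcal{F}_{i-1}]$). Your writeup, which names the obstruction and the standard repairs (clip-and-recenter, or stop at $I$ rather than $I-1$), is in this respect more careful than the paper's. Second, symmetrizing your one-sided Chernoff bound gives $2\exp(-\lambda^2/(2\sum_i\epsilon_i^2))+\sum_i\delta_i$ for the two-sided event, not the stated bound without the factor $2$; the paper has the same cosmetic issue, though only the one-sided version is used in Theorem~\ref{thm:advancedAdaptiveComposition}.
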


The proof parallels that of a similar lemma by \cite{TV15} (their Proposition 34).
\begin{proof}
For any given realization of the random variable $y = \left(y_{0}, \ldots, y_{\numOfIterations} \right)$, we denote by $\func{I}{y}$ the first index $i$ for which $\left| y_{i} - y_{i - 1} \right| > \epsilon_{i}$. If no such index exists, $\func{I}{y} = \numOfIterations + 1$. We then define $\bar{y'} \coloneqq \func{f}{y}$ where $\forall i < \func{I}{\bar{y}} : y'_{i} = y_{i}$ and $\forall i \ge \func{I}{\bar{y}} : y'_{i} = y_{\func{I}{\bar{y}} - 1}$. Notice that the random variable $Y'$ is also a martingale with respect to $Z_{0}$, $\prob{}{\left| Y'_{i} - Y'_{i - 1} \right| > \epsilon_{i}} = 0$, and
\[
\prob{}{Y' \ne Y} \le \sum_{i = 1}^{\numOfIterations} \prob{}{Y'_{i} \ne Y_{i}} \le \sum_{i = 1}^{\numOfIterations} \delta_{i}.
\]

Using these facts we get
\begin{align*}
\prob{}{\left| Y_{\numOfIterations} - Y_{0} \right| > \lambda } & = \overbrace{\prob{}{Y' = Y}}^{\le 1} \cdot \prob{}{\left| Y'_{\numOfIterations} - Y'_{0} \right| > \lambda } + \prob{}{Y' \ne Y} \cdot \overbrace{\prob{}{\left| Y_{\numOfIterations} - Y_{0} \right| > \lambda }}^{\le 1}
\\ & \overset{\left( 1 \right)}{\le} \func{\exp}{- \frac{\lambda^{2}}{2 \sum_{i = 1}^{\numOfIterations} \epsilon_{i}^{2}}} + \sum_{i = 1}^{\numOfIterations} \delta_{i}.
\end{align*}
where (1) results from the previous inequality and Azuma's inequality for $Y'$.
\end{proof}

\begin{proof} [Missing parts from the proof of Theorem  \ref{thm:advancedAdaptiveComposition}]
\begin{align*}
\expectation{\rangeRV_{i + 1}}{Y_{i + 1} \,|\, \sampleSetRV, \rangeRV_{1}, \ldots, \rangeRV_{i}} & = \expectation{\rangeRV_{i + 1}}{\sum_{j = 1}^{i + 1} \left( \func{\ell_{\distPUpInd{}{\viewRV_{j - 1}}}}{\rangeRV_{j}} - \expectation{\rangeRV \sim \distPRange{\viewRV_{j - 1}}}{\func{\ell_{\distPUpInd{}{\viewRV_{j - 1}}}}{\rangeRV}} \right) \,|\, \sampleSetRV, \rangeRV_{1}, \ldots, \rangeRV_{i}}
\\ & = \overbrace{\sum_{j = 1}^{i} \left( \func{\ell_{\distPUpInd{}{\viewRV_{j - 1}}}}{\rangeRV_{j}} - \expectation{\rangeRV \sim \distPRange{\viewRV_{j - 1}}}{\func{\ell_{\distPUpInd{}{\viewRV_{j - 1}}}}{\rangeRV}} \right)}^{ = \func{Y_{i}}{\sampleSetRV, \rangeRV_{1}, \ldots, \rangeRV_{i}}}
\\ &~~~ + \overbrace{\expectation{\rangeRV_{i + 1}}{ \func{\ell_{\distPUpInd{}{\viewRV_{i}}}}{\rangeRV_{i + 1}} - \expectation{\rangeRV \sim \distPRange{\viewRV_{i}}}{\func{\ell_{\distPUpInd{}{\viewRV_{i}}}}{\rangeRV}} \,|\, \sampleSetRV, \rangeRV_{1}, \ldots, \rangeRV_{i}}}^{= 0 }
\\ & = \func{Y_{i}}{\sampleSetRV, \rangeRV_{1}, \ldots, \rangeRV_{i}}
\end{align*}
where the expectation is taken over the random process, which has randomness that results from the choice of $\sampleSet \in \domainOfSets$ and the internal probability of $\mechanism$.

\begin{align*}
\underset{\viewRV \sim \distInd{\view_{\numOfIterations}}}{\text{Pr}} & \left[ \func{\ell_{\distDomainOfSets}}{\viewRV} > \epsilon' \right]
\\ & \overset{\left( 1 \right)}{\le} \prob{\viewRV \sim \distInd{\view_{\numOfIterations}}}{\sum_{i = 1}^{\numOfIterations} \func{\ell_{\distPUpInd{}{\viewRV_{i - 1}}}}{\rangeRV_{i}} > \sqrt{8 \func{\ln}{\frac{1}{\delta'}} \sum_{i = 1}^{\numOfIterations} \epsilon_{i}^{2}} + \sum_{i = 1}^{\numOfIterations} \alpha_{i}}
\\ & \overset{\left( 2 \right)}{\le}  \prob{\viewRV \sim \distInd{\view_{\numOfIterations}}}{\sum_{j = 1}^{\numOfIterations} \left( \func{\ell_{\distPUpInd{}{\viewRV_{j - 1}}}}{\rangeRV_{j}} - \expectation{\rangeRV \sim \distPRange{\viewRV_{j - 1}}}{\func{\ell_{\distPUpInd{}{\viewRV_{j - 1}}}}{\rangeRV}} \right) > \sqrt{8 \func{\ln}{\frac{1}{\delta'}} \sum_{i = 1}^{\numOfIterations} \epsilon_{i}^{2}}}
\\ & \overset{\left( 3 \right)}{=} \prob{\viewRV \sim \distInd{\view_{\numOfIterations}}}{Y_{\numOfIterations} - \overbrace{Y_{0}}^{= 0} > \sqrt{8 \func{\ln}{\frac{1}{\delta'}} \sum_{i = 1}^{\numOfIterations} \epsilon_{i}^{2}}}
\\ & \overset{\left( 4 \right)}{\le} \delta' + \sum_{i = 1}^{\numOfIterations} \frac{\delta_{i}}{\epsilon_{i}}
\end{align*}
where (1) results from Lemma \ref{lem:liniarityOfLoss}, (2) from the bound on the expectation of the stability loss, (3) from the definition of $Y_{i}$, and (4) from Lemma \ref{lem:extendedAzuma}.
\end{proof}
\section{Missing Details from Section~\ref{Generalization}} \label{sec:genProofs}

\begin{proof}[Missing parts from the proof of Theorem \ref{thm:genOfExpectation}]
\begin{align*}
\expectation{\sampleSetRV \sim \distDomainOfSets, \queryRV' \sim \mechanismFunc{\sampleSetRV, \query}}{\func{\queryRV'}{\distDomainOfSets}} & = \sum_{\sampleSet \in \domainOfSets} \distFunc{\sampleSet} \cdot \sum_{\query' \in \queriesFamily_{\Delta}} \distFuncDep{\query'}{\sampleSet} \cdot \func{\query'}{\distDomainOfSets}
\\ & = \sum_{\query' \in \queriesFamily_{\Delta}} \overbrace{\sum_{\sampleSet \in \domainOfSets} \distFunc{\sampleSet} \cdot \distFuncDep{\query'}{\sampleSet}}^{=\distFunc{\query'}} \cdot \overbrace{\sum_{\sampleSet' \in \domainOfSets} \func{\distDomainOfSets}{\sampleSet'} \cdot \func{\query}{\sampleSet'}}^{=\func{\query'}{\distDomainOfSets}}
\\ & = \sum_{\query' \in \queriesFamily_{\Delta}} \distFunc{\query'} \sum_{\domainVal \in \domain} \overbrace{\sum_{\sampleSet' \in \domainOfSets} \distFunc{\sampleSet'} \cdot \distFuncDep{\domainVal}{\sampleSet'}}^{\distFunc{\domainVal}} \cdot \func{\query'}{\domainVal}
\\ & = \sum_{\query' \in \queriesFamily_{\Delta}} \distFunc{\query'} \sum_{\domainVal \in \domain} \distFunc{\domainVal} \cdot \func{\query'}{\domainVal}
\end{align*}

\begin{align*}
\expectation{\sampleSetRV \sim \distDomainOfSets, \queryRV' \sim \mechanismFunc{\sampleSetRV, \query}}{\func{\queryRV'}{\sampleSetRV}} & = \sum_{\sampleSet \in \domainOfSets} \distFunc{\sampleSet} \cdot \sum_{\query' \in \queriesFamily_{\Delta}} \distFuncDep{\query'}{\sampleSet} \cdot \func{\query'}{\sampleSet}
\\ & = \sum_{\query' \in \queriesFamily_{\Delta}} \sum_{\domainVal \in \domain} \overbrace{\sum_{\sampleSet \in \domainOfSets} \distFunc{\sampleSet} \cdot \distFuncDep{\query'}{\sampleSet} \cdot \distFuncDep{\domainVal}{\sampleSet}}^{=\func{\distJoint{\domain}{\queriesFamily_{\Delta}}{\query}}{\domainVal, \query'}} \cdot \func{\query'}{\domainVal}
\\ & \overset{\left( 1 \right)}{=} \sum_{\query' \in \queriesFamily_{\Delta}} \distFunc{\query'} \sum_{\domainVal \in \domain} \distFuncDep{\domainVal}{\query'} \cdot \func{\query'}{\domainVal},
\end{align*}
where (1) is a result of Lemma \ref{lem:depDist}.
\end{proof}

\begin{proof}[Proof of Claim \ref{clm:monLSS}]
Since $\query^{i}$ is a post-processing of $\viewVal^{i}$ and $\text{Adp}_{\bar{\mechanism}}$ is $\tuple{\epsilon}{\delta}$-LSS with respect to $\analyst$, Theorem \ref{thm:postprocessing} implies that the post-processing producing $\query^{i}$ is $\tuple{\epsilon}{\delta}$-LSS with respect to $\analyst$ as well. 
Using Lemma \ref{lem:notToManyBad} we get that $\distFunc{\querySet_{2 \epsilon}} < \frac{\delta}{\epsilon}$ for each of the $\numOfMetaIterations$ rounds.
Using the union bound and the fact that the $\numOfMetaIterations$ rounds are independent we get that $\prob{\bar{\sampleSetRV} \sim \distDomainOfSets^{\numOfMetaIterations}, \left( \queryRV, \rangeRV, I \right) \sim \func{\text{Mon}_{\bar{\mechanism}}}{\bar{\sampleSetRV}, \analyst}}{\query \in \querySet_{2 \epsilon}} < \frac{\numOfMetaIterations \delta}{\epsilon}$.
This allows us to invoke Theorem \ref{thm:genOfExpectation}, with $\frac{\numOfMetaIterations \delta}{\epsilon}$ replacing $\delta$.\footnote{The fact that repeating this process $\numOfMetaIterations$ independent times affects only the $\delta$ and not the $\epsilon$ will be crucial to the move from generalization of expectation to generalization with high probability (at least in this proof technique). This is made possible by the way $\rangeVal$'s were treated in the distance measure in the LSS definition. For comparison, see the remark in Lemma 3.3 in \cite{BNSSSU16}. We hypothesize, quite informally, that stability definitions that degrade in the $\epsilon$ term on multiple independent runs cannot yield generalization with high probability. As far as we are aware, all previously studied stability notions support this claim.}
\end{proof}

\begin{proof}[Proof of Claim \ref{clm:Monsampleerror}]
This is a direct result of combining the sample accuracy definition and the union bound. If the probability that the sample accuracy of $\mechanism$ will be greater than $\epsilon$ is bounded by $\delta$, then the probability that it will fail to hold once in $\numOfMetaIterations$ independent iterations is less then $\numOfMetaIterations \delta$, and since the values of the query are bounded on the interval $\left[-\Delta, \Delta \right]$ the maximal error in these cases is $2 \Delta$.
\end{proof}

\begin{proof}[Proof of Claim \ref{clm:Mondisterror}]
First recall that from the definition of the monitor mechanism, $\forall i \in \left[\numOfMetaIterations \right], \func{\query_{i}}{\distDomainOfSets} - \rangeVal_{i} \ge 0$. Therefore if $\mechanism$ is not $\tuple{\epsilon}{\delta}$-Distribution Accurate, then $\forall i \in \left[\numOfMetaIterations \right]$
\[
\prob{\sampleSetRV \sim \distDomainOfSets^{\numOfMetaIterations}, \viewRV \sim \func{\mechanism_{i}}{\sampleSetRV, \analyst}, \left( \queryRV, \rangeRV \right) = \underset{\tuple{\query}{\rangeVal} \in \viewRV}{\arg\max} \left|\func{\query}{\distDomainOfSets} - \rangeVal \right|}{\func{\queryRV}{\distDomainOfSets} - \rangeRV > \epsilon} > \delta.
\]
Since the $\numOfMetaIterations$ rounds of the monitor mechanism are independent and $i^{*}$ is the index of the round with the maximal error,
\[
\prob{\bar{\sampleSetRV} \sim \distDomainOfSets^{\numOfMetaIterations}, \left( \queryRV, \rangeRV, I \right) \sim \func{\text{Mon}_{\bar{\mechanism}}}{\bar{\sampleSetRV}, \analyst}}{\func{\queryRV}{\distDomainOfSets} - \rangeRV > \epsilon} > 1 - \left(1 - \delta \right)^{\numOfMetaIterations}.
\]
So the expectation of this quantity must be greater then $\epsilon \left(1 - \left(1 - \delta \right)^{\numOfMetaIterations} \right)$, concluding the proof.
\end{proof}
\section{Missing Details from Section \ref{sec:relToNotions}}

\subsection{Proofs of Implication Theorems} \label{sec:implicationsProofs}

\begin{proof}[Proof of Theorem \ref{thm:DPimpLMI}]
Given $\mathbf{b} \subseteq \domain \times \range$ we denote $\func{\rangeSet_{\mathbf{b}}}{\domainVal} \coloneqq \left\{\rangeVal \in \range \,|\, \tuple{\domainVal}{\rangeVal} \in \mathbf{b} \right\}$ (which might be empty for some $\domainVal$'s). Using this notation we prove that for any $\mathbf{b} \subseteq \domain \times \range$,
\begin{align*}
\func{\distJoint{\domain}{\range}{}}{\mathbf{b}} & =  \sum_{\domainVal \in \domain} \distFunc{\domainVal} \distFuncDep{\func{\rangeSet_{\mathbf{b}}}{\domainVal}}{\domainVal}
\\ & \overset{\left( 1 \right)}{=} \overbrace{\sum_{\domainVal' \in \domain} \distFunc{\domainVal'}}^{=1} \sum_{\domainVal \in \domain} \distFunc{\domainVal} \sum_{\sampleSet' \in \domain^{\sampleSize - 1}} \distFunc{\sampleSet'} \cdot \distFuncDep{\func{\rangeSet_{\mathbf{b}}}{\domainVal}}{\sampleSet' \cup \left\{ \domainVal \right\}}
\\ & \overset{\left( 2 \right)}{\le}  \sum_{\domainVal \in \domain} \distFunc{\domainVal} \sum_{\domainVal' \in \domain} \distFunc{\domainVal'} \sum_{\sampleSet' \in \domain^{\sampleSize - 1}} \distFunc{\sampleSet'} \left( e^{\epsilon} \cdot \distFuncDep{\func{\rangeSet_{\mathbf{b}}}{\domainVal}}{\sampleSet' \cup \left\{ \domainVal' \right\}} + \delta \right)
\\ & \overset{\left( 1 \right)}{=} \sum_{\domainVal \in \domain} \distFunc{\domainVal} \sum_{\sampleSet \in \domainOfSets} \distFunc{\sampleSet} \left( e^{\epsilon} \cdot \distFuncDep{\func{\rangeSet_{\mathbf{b}}}{\domainVal}}{\sampleSet} + \delta \right)
\\ & =  \sum_{\domainVal \in \domain} \distFunc{\domainVal} \left( e^{\epsilon} \cdot \distFunc{\func{\rangeSet_{\mathbf{b}}}{\domainVal}} + \delta \right)
\\ & =  e^{\epsilon} \cdot \func{\distProd{\domain}{\range}{}}{\mathbf{b}} + \delta,
\end{align*}
where (1) are a result of the fact that $\distDomainOfSets$ is a product distribution, and (2) is a result of the DP definition.
The proof is concluded by repeating the same process for the second direction.
\end{proof}

\begin{proof}[Proof of Theorem \ref{thm:MIimpLMI}]
Notice that the proof of DP holding under post-processing (see e.g. \cite{DR14}), proves in fact that $\tuple{\epsilon}{\delta}$-indistinguishability is closed under post-processing. Since $\domainVal$ is a post-processing of $\sampleSet$, the fact that $\distJoint{\domainOfSets}{\range}{}$ and $\distProd{\domainOfSets}{\range}{}$ are $\tuple{\epsilon}{\delta}$ indistinguishable implies that $\distJoint{\domain}{\range}{}$ and $\distProd{\domain}{\range}{}$ are indistinguishable as well.
\end{proof}

\begin{proof}[Proof of Theorem \ref{thm:TSimpLMI}]
Given any $\mathbf{b} \subseteq \domain \times \range$ we denote $\func{\rangeSet_{\mathbf{b}}}{\domainVal} \coloneqq \left\{\rangeVal \in \range \,|\, \tuple{\domainVal}{\rangeVal} \in \mathbf{b} \right\}$ (which might be empty for some $\domainVal$'s). Using this notation and the subset $\sampleSetSet$ from Definition \ref{def:typStab} we prove that for any $\mathbf{b} \subseteq \domain \times \range$,
\begin{align*}
\func{\distJoint{\domain}{\range}{}}{\mathbf{b}} & = \sum_{\domainVal \in \domain} \distFunc{\domainVal} \distFuncDep{\func{\rangeSet_{\mathbf{b}}}{\domainVal}}{\domainVal}
\\ & \overset{\left( 1 \right)}{=} \sum_{\domainVal \in \domain} \sum_{\sampleSet \in \domainOfSets} \distFunc{\sampleSet} \distFuncDep{\domainVal}{\sampleSet} \overbrace{\sum_{\sampleSet' \in \domainOfSets} \distFunc{\sampleSet'}}^{= 1} \distFuncDep{\func{\rangeSet_{\mathbf{b}}}{\domainVal}}{\sampleSet}
\\ & \overset{\left( 2 \right)}{\le} e^{\epsilon} \sum_{\domainVal \in \domain} \sum_{\sampleSet \in \sampleSetSet} \distFunc{\sampleSet} \distFuncDep{\domainVal}{\sampleSet} \sum_{\sampleSet' \in \sampleSetSet} \distFunc{\sampleSet'} \distFuncDep{\func{\rangeSet_{\mathbf{b}}}{\domainVal}}{\sampleSet'} + \delta + 2 \eta
\\ & \le e^{\epsilon} \sum_{\domainVal \in \domain}\overbrace{\sum_{\sampleSet \in \domainOfSets} \distFunc{\sampleSet} \distFuncDep{\domainVal}{\sampleSet}}^{= \distFunc{\domainVal}} \overbrace{\sum_{\sampleSet' \in \domainOfSets} \distFunc{\sampleSet'} \distFuncDep{\func{\rangeSet_{\mathbf{b}}}{\domainVal}}{\sampleSet'}}^{= \distFunc{\func{\rangeSet_{\mathbf{b}}}{\domainVal}}} + \delta + 2 \eta
\\ & = e^{\epsilon} \func{\distProd{\domain}{\range}{}}{\mathbf{b}} + \delta + 2 \eta
\end{align*}
where (1) results from the fact that $\domainVal$ and $\rangeVal$ are independent given $\sampleSet$, and (2) from the definition of TS.
\end{proof}

\begin{proof} [Proof of Theorem \ref{thm:LMLimpLMI}]
The proof is identical to the one used by \cite{EGI19} when proving that ML implies MI (Theorem 7).
\end{proof}

\begin{proof} [Proof of Theorem \ref{thm:LMIimpLSS}]
Assume $\mechanism$ is not $\tuple{\epsilon'}{\frac{\delta}{\epsilon}}$-LSS, which means that in particular $\distFunc{\rangeSet_{\epsilon'}} > \frac{\delta}{\epsilon}$. Denoting $B_{\domain \times \range}^{\epsilon'} \coloneqq \underset{\rangeVal \in \rangeSet_{\epsilon'}}{\cup} \left( \func{\domainSet_{+}}{\rangeVal} \times \left\{ \rangeVal \right\} \right)$ we get that from the definition of the stability loss,
\[
\func{\distJoint{\domain}{\range}{}}{B_{\domain \times \range}^{\epsilon'}} - \func{\distProd{\domain}{\range}{}}{B_{\domain \times \range}^{\epsilon'}} = \sum_{\rangeVal \in \rangeSet_{\epsilon'}} \distFunc{\rangeVal} \cdot \func{\ell}{\rangeVal} > \epsilon' \cdot \distFunc{\rangeSet_{\epsilon'}}.
\]

But on the other hand, from the fact that $\mechanism$ is $\tuple{\epsilon}{\delta}$-LMI we get in contradiction that
\begin{align*}
\func{\distJoint{\domain}{\range}{}}{B_{\domain \times \range}^{\epsilon'\epsilon'}} - \func{\distProd{\domain}{\range}{}}{B_{\domain \times \range}^{\epsilon'}} & \le  \func{\distProd{\domain}{\range}{}}{B_{\domain \times \range}^{\epsilon'}} \cdot \left( e^\epsilon - 1 \right) + \delta 
\\ & \overset{\left( 1 \right)}{\le} \distFunc{\rangeSet_{\epsilon'}} \cdot \left( e^\epsilon - 1 \right) + \epsilon \cdot \distFunc{\rangeSet_{\epsilon'}}
\\ & \overset{\left( 2 \right)}{\le} \epsilon' \cdot \distFunc{\rangeSet_{\epsilon'}}
\end{align*}
where (1) results from the fact that $\epsilon \cdot \distFunc{\rangeSet_{\epsilon'}} > \delta$, and (2) from the definition of $\epsilon'$ and the assumption that $\mechanism$ is not $\tuple{\epsilon'}{\frac{\delta}{\epsilon}}$-LSS. The proof is concluded by repeating the same process for the second direction.
\end{proof}

\begin{lemma} [see, e.g., \cite{SSBD14} Theorem 30.2] \label{lem:CSnoOverFit}
Given $0 \le \delta \le 1$, $m \le \frac{\sampleSize}{2}$, a domain $\domain$, and a distribution $\distDomain$ defined over it, we denote by $\mathcal{H}$ the family of functions (usually referred to as \emph{hypothesis} in the context of Machine Learning) of the form $h : \domain \rightarrow \left\{ 0, 1 \right\}$, and let $h^{*} \in \mathcal{H}$ be some unique hypothesis which we will think of as the \emph{true hypothesis}. We will refer to $\func{h^{*}}{\domainVal}$ as the true \emph{label} of $\domainVal$, and denote the labeled domain by $\domain_{h^{*}} \coloneqq \left\{ {\tuple{\domainVal}{\func{h^{*}}{\domainVal}} \,|\, \domainVal \in \domain} \right\}$. Let $\mechanism : \domainOfSets \times \queriesFamily \rightarrow \mathcal{H}$ be a mechanism with a compression scheme (Definition \ref{def:ComprSchm}), In this case, with probability (over the sampling of $\sampleSet$ and the internal randomness of the mechanism in case it is non deterministic) greater then $1 - \delta$ we have that,
\[
\left| \func{h_{w}}{\sampleSet \setminus w} - \func{h_{w}}{\distDomain} \right| \le \sqrt{\func{h_{w}}{\sampleSet \setminus w} \frac{4 m \func{\ln}{2 \sampleSize / \delta}}{\sampleSize}} + \frac{8 m \func{\ln}{2 \sampleSize / \delta}}{\sampleSize}
\]
where $\func{h_{w}}{\sampleSet \setminus w}$ is the empirical mean of $h_{w}$ over $\sampleSet \setminus w$ and $\func{h_{w}}{\distDomain}$ is its expectation with respect to $\distDomain$.
\end{lemma}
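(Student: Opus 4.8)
This is, up to the bookkeeping for the mechanism's internal randomness, Theorem~30.2 of \cite{SSBD14}, and I would reprove it by the standard compression-scheme concentration argument. The plan is first to reduce to a \emph{fixed} compression set, then to apply a relative deviation bound for i.i.d.\ Bernoulli variables to the held-out sample, and finally to union-bound over all candidate compression sets.

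In the reduction step, write $g(s) = s_{I}$, where $I \subseteq [n]$ is the (randomly chosen) size-$m$ index set selected by the compression function. The crucial point is that I would \emph{not} condition on the event $\{g(s) = I\}$: that event can be correlated with the held-out elements $\{s_{j} : j \notin I\}$, which would destroy independence. Instead, for each fixed size-$m$ set $I$ and each fixing of the internal coins of $f$, put $h_{I} := f(s_{I})$. Conditioned on the values of $s_{I}$ (and those coins), $h_{I}$ is a fixed $\{0,1\}$-valued function on $\domain$, while — because $s \sim \distDomain^{n}$ — the remaining $n - m$ coordinates $\{s_{j} : j \notin I\}$ are still i.i.d.\ from $\distDomain$ and independent of $h_{I}$. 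Hence the empirical mean $h_{I}(s \setminus s_{I}) = \tfrac{1}{n-m}\sum_{j \notin I} h_{I}(s_{j})$ is an average of $n - m \ge n/2$ i.i.d.\ Bernoulli$\bigl(h_{I}(\distDomain)\bigr)$ variables.

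Next I would apply a two-sided relative (multiplicative / Bernstein-type) deviation inequality — the one used in the proof of \cite{SSBD14}, Theorem~30.2 — to this average, obtaining that for each fixed $I$, with probability at least $1 - \delta/\binom{n}{m}$,
\[
\left| h_{I}(s \setminus s_{I}) - h_{I}(\distDomain) \right| \le \sqrt{ h_{I}(s \setminus s_{I})\,\frac{4 m \ln(2n/\delta)}{n}} + \frac{8 m \ln(2n/\delta)}{n},
\]
the constants $4$ and $8$ absorbing both the absolute constant of the deviation bound and the factor $2$ coming from $n - m \ge n/2$. A union bound over the $\binom{n}{m} \le n^{m}$ choices of $I$, using $\ln\binom{n}{m} \le m\ln n \le m\ln(2n/\delta)$, then makes the displayed inequality hold \emph{simultaneously for every} size-$m$ index set with probability at least $1 - \delta$. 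In particular it holds for the random set actually produced by $g$, i.e.\ for $w = g(s)$ and $h_{w} = f(w)$, which is exactly the claim.

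The only genuinely delicate point is the quantifier order in the reduction step: one must union bound over all candidate compression sets rather than condition on the selected one, and it is this union bound that neutralizes the possible dependence of $g$'s choice of indices on the held-out elements. Everything else — selecting the precise form of the relative Chernoff bound (including the small $\sqrt{p}\le\sqrt{\hat p}$ conversion needed for one of the two directions) and chasing the constants so that the bound comes out exactly as stated — is routine.
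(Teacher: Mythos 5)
Your argument is correct and is essentially the proof the paper implicitly relies on: the paper gives no proof of this lemma, deferring to \cite{SSBD14} Theorem 30.2, whose standard argument is exactly your reduction to a fixed size-$m$ index set (conditioning on $s_I$ and the coins of $f$ so the held-out $n-m \ge n/2$ points remain i.i.d.), a two-sided relative Chernoff bound, and a union bound over the at most $n^m$ candidate index sets. You correctly identify the one delicate point --- that one must union-bound over all candidate compression sets rather than condition on the selected one --- so there is nothing to add.
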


\begin{proof} [Proof of Theorem \ref{thm:CSimpLSS}]
We will prove that $g$ is $\tuple{\epsilon}{\delta}$-LSS for such an $\epsilon$, and since LSS holds under post-processing, this suffices. Notice that now $\range = \domain^{m}$. This proof resembles that of \cite{CLNRW16}.

We start by analyzing the loss of $w$ and get that,
\begin{align*}
\func{\ell}{w} & = \sum_{\domainVal \in \func{\domainSet_{+}}{w}} \left( \distFuncDep{\domainVal}{w} - \distFunc{\domainVal}\right)
\\ & = \sum_{\domainVal \in \func{\domainSet_{+}}{w}} \sum_{\sampleSet \in \domainOfSets} \distFuncDep{\sampleSet}{w} \left( \distFuncDep{\domainVal}{\sampleSet} - \distFunc{\domainVal}\right)
\\ & = \sum_{\sampleSet \in \domainOfSets} \distFuncDep{\sampleSet}{w} \sum_{\domainVal \in \func{\domainSet_{+}}{w}} \left( \frac{m}{\sampleSize} \distFuncDep{\domainVal}{w} + \frac{\sampleSize - m}{\sampleSize} \distFuncDep{\domainVal}{\sampleSet \setminus w} - \distFunc{\domainVal}\right)
\\ & \le \sum_{\sampleSet \in \domainOfSets} \distFuncDep{\sampleSet}{w} \left( \frac{m}{\sampleSize} + \sum_{\domainVal \in \func{\domainSet_{+}}{w}} \left( \distFuncDep{\domainVal}{\sampleSet \setminus w} - \distFunc{\domainVal}\right) \right)
\\ & = \sum_{\sampleSet \in \domainOfSets} \distFuncDep{\sampleSet}{w} \left( \frac{m}{\sampleSize} + \sum_{\domainVal \in \domain} \left( \distFuncDep{\domainVal}{\sampleSet \setminus w} - \distFunc{\domainVal}\right) \func{h_{w}^{+}}{\domainVal} \right)
\\ & = \sum_{\sampleSet \in \domainOfSets} \distFuncDep{\sampleSet}{w} \left( \frac{m}{\sampleSize} + \func{h_{w}^{+}}{\sampleSet \setminus w} - \func{h_{w}^{+}}{\distDomain} \right)
\end{align*}
where $\func{h_{w}^{+}}{\domainVal}$ is simply the characteristic function of $\func{\domainSet_{+}}{w}$.

Using this inequality we get that $\forall \rangeSet \subseteq \range$,
\begin{align*}
\distFunc{\rangeSet} \left( \func{\ell}{\rangeSet} - \epsilon \right) & = \sum_{w \in \rangeSet} \distFunc{w} \left( \func{\ell}{w} - \epsilon \right)
\\ & \overset{\left( 1 \right)}{\le} \sum_{w \in \rangeSet} \distFunc{w} \sum_{\sampleSet \in \domainOfSets} \distFuncDep{\sampleSet}{w} \left( \frac{m}{\sampleSize} + \func{h_{w}^{+}}{\sampleSet \setminus w} - \func{h_{w}^{+}}{\distDomain} - \epsilon \right)
\\ & = \sum_{\sampleSet \in \domainOfSets} \distFunc{\sampleSet} \sum_{w \in \rangeSet} \distFuncDep{w}{\sampleSet} \left( \func{h_{w}^{+}}{\sampleSet \setminus w} - \func{h_{w}^{+}}{\distDomain} - \left( \epsilon - \frac{m}{\sampleSize} \right) \right)
\\ & \le \sum_{\sampleSet \in \domainOfSets} \distFunc{\sampleSet} \max_{w = \func{g}{\sampleSet}, h = \func{f}{w}} \left( \func{h}{\sampleSet \setminus w} - \func{h}{\distDomain} - \left( \epsilon - \frac{m}{\sampleSize} \right) \right)
\\ & \overset{\left( 2 \right)}{\le} \prob{\sampleSetRV \sim \distDomainOfSets, W \sim \func{g}{\sampleSetRV}, H \sim \func{f}{W}}{\func{H}{\sampleSetRV \setminus W} - \func{H}{\distDomain} > \left( \epsilon - \frac{m}{\sampleSize} \right)}
\\ & \overset{\left( 3 \right)}{\le} \sqrt{\frac{4 m \func{\ln}{2 \sampleSize / \delta}}{\sampleSize}} + \frac{8 m \func{\ln}{2 \sampleSize / \delta}}{\sampleSize} + \frac{m}{\sampleSize}
\\ & \overset{\left( 4 \right)}{\le} 11 \sqrt{\frac{m \func{\ln}{2 \sampleSize / \delta}}{\sampleSize}}
\end{align*}
where (1) results from the previous inequality, (2) from the fact that we removed $\sampleSet$'s for which the summand is negative, and replaced the positive ones with 1 - which is greater then the maximal possible value, (3) from Lemma \ref{lem:CSnoOverFit} and the fact that the value of $h$ is bounded by 1, and (4) from the fact that $m \le \frac{\sampleSize}{9 \func{\ln}{\frac{2 \sampleSize}{\delta}}}$.
\end{proof}

\subsection{Proofs of Separation Theorems} \label{sec:separationsProofs}

\begin{proof} [Proof of Theorem \ref{thm:MIgrLMI}]
Without loss of generality, assume $0 < \epsilon \le 0.7$. Given $0 \le \alpha \le \frac{\epsilon}{7}$, $p = \frac{1}{2} + \alpha$ we will define some function $f : \domain \rightarrow \left\{ 0, 1\right\}$, and for $i \in \left\{ 0, 1\right\}$ denote $\domainSet_{i} \coloneqq \left\{ \domainVal \in \domain \,|\, \func{f}{\domainVal} = i \right\}$, set an arbitrary distribution $\distDomain$ such that $\distFunc{\domainSet_{1}} = p$, and $\distDomainOfSets$ which is the product of $\distDomain$. We will consider a mechanism $\mechanism$ which in response to a query $\query$ returns the parity function of the vector $\left( \func{f}{\sampleSet_{1}}, \ldots, \func{f}{\sampleSet_{\sampleSize}} \right)$, where $\sampleSet_{1}, \ldots \sampleSet_{\sampleSize}$ denotes the elements of the sample set $\sampleSet$. Formally, $\func{\mechanism}{\query, \sampleSet} = \left| \sampleSet \cap \domainSet_{1} \right| \pmod{2}$, and we prove that this mechanism is $\tuple{\epsilon}{0}$-LMI but not $\tuple{1}{\frac{1}{5}}$-MI.

We start with denoting by $p_{\sampleSize - 2}$ the probability that the parity function of a sample of size $\sampleSize - 2$ will be equal to $1$, and the possible outputs as $r_{0}, r_{1}$. Notice that,

\[
\distFuncDep{r_{1}}{\domainSet_{1}} = p \cdot p_{\sampleSize - 2} + \left( 1 - p \right) \left( 1 - p_{\sampleSize - 2} \right)
\]
\[
\distFuncDep{r_{1}}{\domainSet_{0}} = \left( 1 - p \right) p_{\sampleSize - 2} + p \left( 1 - p_{\sampleSize - 2} \right) = 1 - \distFuncDep{r_{1}}{\domainSet_{1}}
\]
\begin{align*}
\distFunc{r_{1}} & = p \cdot \distFuncDep{r_{1}}{\domainSet_{1}} +  \left( 1 - p \right) \distFuncDep{r_{1}}{\domainSet_{0}}
\\ & = \left( 2 p - 1 \right) \distFuncDep{r_{1}}{\domainSet_{1}} + 1 - p
\\ & = \left( 1 - 2 p \right) \distFuncDep{r_{1}}{\domainSet_{0}} + p
\end{align*}

Using these identities we will first prove that $\frac{\distFunc{r_{1}}}{\distFuncDep{r_{1}}{\domainSet_{1}}}, \frac{\distFunc{r_{1}}}{\distFuncDep{r_{1}}{\domainSet_{0}}} \le e^{\epsilon}$. Since a similar claim can be proven for $\frac{\distFuncDep{r_{1}}{\domainSet_{1}}}{\distFunc{r_{1}}}, \frac{\distFuncDep{r_{1}}{\domainSet_{0}}}{\distFunc{r_{1}}}$, we get that this mechanism is $\tuple{\epsilon}{0}$-LMI.

\begin{align*}
\frac{\distFunc{r_{1}}}{\distFuncDep{r_{1}}{\domainSet_{1}}} & = \frac{\left( 2 p - 1 \right) \distFuncDep{r_{1}}{\domainSet_{1}} + 1 - p}{\distFuncDep{r_{1}}{\domainSet_{1}}}
\\ & = 2 p - 1 + \frac{1 - p}{\left( 2 p - 1 \right) p_{\sampleSize - 2} + 1 - p}
\\ & = 2 p - \frac{\left( 2 p - 1 \right) p_{\sampleSize - 2}}{\left( 2 p - 1 \right) p_{\sampleSize - 2} + 1 - p}
\\ & = 1 + 2 \alpha - \overbrace{\frac{ 2 \alpha p_{\sampleSize - 2}}{\alpha \left( 2 p_{\sampleSize - 2} - 1 \right) + \frac{1}{2}}}^{\ge 0}
\\ & \overset{\left( 1 \right)}{\le} 1 + \overbrace{2 \alpha}^{\le \epsilon}
\\ & \overset{\left( 2 \right)}{\le} e^{\epsilon}
\end{align*}
where (1) results from the fact that $0 \le \alpha < \frac{\epsilon}{7} \le \frac{1}{10} $, so the denominator $\alpha \left( 2 p_{\sampleSize - 2} - 1 \right) + \frac{1}{2}$ must be positive, and (2) is a result of the inequality $1 + \epsilon \le e^{\epsilon}$ for any $\epsilon < 1$. Similarly we get that,

\begin{align*}
\frac{\distFunc{r_{1}}}{\distFuncDep{r_{1}}{\domainSet_{0}}} & = \frac{\left( 1 - 2 p \right) \distFuncDep{r_{1}}{\domainSet_{0}} + p}{\distFuncDep{r_{1}}{\domainSet_{0}}}
\\ & = 1 - 2 p + \frac{p}{\distFuncDep{r_{1}}{\domainSet_{0}}}
\\ & = 2 - 2 p - \frac{\left( 1 - 2 p \right) p_{\sampleSize - 2}}{\left( 1 - 2 p \right) p_{\sampleSize - 2} + p}
\\ & = 1 + 2 \alpha + \overbrace{\frac{ 2 \alpha \cdot p_{\sampleSize - 2}}{\alpha \left( 1 - 2 p_{\sampleSize - 2} \right) + \frac{1}{2}}}^{\le 5 \alpha}
\\ & \overset{\left( 1 \right)}{\le} 1 + \overbrace{7 \alpha}^{\le \epsilon}
\\ & \overset{\left( 2 \right)}{\le} e^{\epsilon}
\end{align*}
where (1) results from the fact that $0 \le \alpha < \frac{\epsilon}{7} \le \frac{1}{10}$, and $0 \le p_{\sampleSize - 2} \le 1$, so $\alpha \left( 1 - 2 p_{\sampleSize - 2} \right) + \frac{1}{2} \ge \frac{4}{10}$, and (2) is a result of the inequality $1 + \epsilon \le e^{\epsilon}$ for any $\epsilon < 1$.

On the other hand, we will prove the response  dramatically changes the distribution over the sample sets. Using the fact that the parity function of a Binomial random variable $\func{\mathbf{b}}{n, p}$ is a Bernoulli random variable $\func{\text{Ber}}{\frac{1 - \left( 1 - 2 p \right)^n}{2}}$, and denoting $\mathcal{\sampleSetRV}_{1}$ the set of all sample sets with parity value $1$, we get that,
\begin{align*}
\func{\distProd{\domainOfSets}{\range}{}}{\mathcal{\sampleSetRV}_{1} \times \left\{ \rangeVal_{0} \right\}} & = \overbrace{\distFunc{\mathcal{\sampleSetRV}_{1}}}^{\distFunc{\rangeVal_{1}}} \cdot \distFunc{\rangeVal_{0}}
\\ & = \frac{1 - \left( 1 - 2 p \right)^{2 n}}{4} \\ & = e^{1} \overbrace{\func{\distJoint{\domainOfSets}{\range}{}}{\mathcal{\sampleSetRV}_{1} \times \left\{ \rangeVal_{0} \right\}}}^{= 0} + \frac{1 - \left( 2 \alpha \right)^{2 n}}{4}
\\ & \overset{\left( 1 \right)}{>} e^{1} \func{\distJoint{\domainOfSets}{\range}{}}{\mathcal{\sampleSetRV}_{1} \times \left\{ \rangeVal_{0} \right\}} + \frac{1}{5}
\end{align*}
where (1) is a result of the fact that $0 \le \alpha < \frac{\epsilon}{7} \le \frac{1}{10}$, $\sampleSize \ge 3$ and $\frac{1 - \left(\frac{1}{5} \right)^6}{4} > \frac{1}{5}$, which means this mechanism is not $\tuple{1}{\frac{1}{5}}$-MI.
\end{proof}

\begin{proof} [Proof of Theorem \ref{thm:LMIgrLSS}]
Without loss of generality $0 \le \delta \le 0.1$, so $\sampleSize > 2 \func{\ln}{\frac{2}{\delta}}$. Given $N > \sampleSize^{2}$, $\domain \coloneqq \left[ N \right]$, an arbitrary $\distDomain$ such that $\forall \domainVal \in \domain : \func{\distDomain}{\domainVal} \le \frac{1}{\sampleSize^{2}}$, and $\distDomainOfSets$ which is the product of $\distDomain$, we consider a mechanism $\mechanism$ which in response to some query $\query$ uniformly samples one element from its sample set and outputs it.

The fact that this mechanism is $\tuple{11 \sqrt{\frac{\func{\ln}{2 \sampleSize / \delta}}{\sampleSize}}}{\delta}$-LSS is a direct result of Theorem \ref{thm:CSimpLSS} for $m = 1$. On the other hand, notice that any $\rangeVal \in \range$ encodes one sample element which we will denote by $\func{\domainVal}{\rangeVal}$. Using this notation we will define the set $\mathbf{b} \coloneqq \underset{\rangeVal \in \range}{\cup} \tuple{\func{\domainVal}{\rangeVal}}{\rangeVal}$.
\begin{align*}
\func{\distJoint{\domain}{\range}{}}{\mathbf{b}} & = \sum_{\rangeVal \in \range} \distFunc{\rangeVal} \cdot  \distFuncDep{\func{\domainVal}{\rangeVal}}{\rangeVal}
\\ & \overset{\left( 1 \right)}{\ge} \sum_{\rangeVal \in \range} \distFunc{\rangeVal} \cdot \frac{1}{\sampleSize}
\\ & \overset{\left( 2 \right)}{>} \sum_{\rangeVal \in \range} \distFunc{\rangeVal} e \frac{1}{\sampleSize^{2}} + \overbrace{\sum_{\rangeVal \in \range} \distFunc{\rangeVal}}^{= 1} \frac{1}{2 \sampleSize}
\\ & \ge e \sum_{\rangeVal \in \range} \distFunc{\rangeVal} \cdot  \overbrace{\distFunc{\func{\domainVal}{\rangeVal}}}^{\le \frac{1}{n^{2}}} + \frac{1}{2 \sampleSize}
\\ & = e^{1} \cdot \func{\distProd{\domain}{\range}{}}{\mathbf{b}} + \frac{1}{2 \sampleSize}
\end{align*}
where (1) is a result of the fact that if all elements in the sample set differ from each other, with probability $\frac{1}{\sampleSize}$ the sampling mechanism will return the same sample element which was encoded by $\rangeVal$ and if not then the probability is only higher, and (2) is a result of the definitions of $\delta$ and $\sampleSize$. This proves the mechanism is not $\tuple{1}{\frac{1}{2 \sampleSize}}$-LMI.
\end{proof}

\section{Missing Details from Section \ref{sec:appAndDisc}} \label{apd:appAndDisc}
Definitions and properties in this section are due to \cite{DR14}.
\begin{definition} [Laplace Mechanism]
Given $0 \le b$ and a query $\query \in \queriesFamily_{\Delta}$, the Laplace mechanism with parameter $b$ is defined as:
\[
\mechanismFunc{\sampleSet, \query} = \func{\query}{\sampleSet} + \text{Lap}_{b}
\]
where $\text{Lap}_{b}$ is a random variable with unbiased Laplace distribution, which if a symmetric exponential distribution. Formally:
\[
\func{\text{Lap}_{b}}{x} = \frac{1}{2 b} e^{- \frac{\left| x \right|}{b}}
\]
\end{definition}

\begin{theorem}[Laplace Mechanism is Differentially Private]
Given $0 \le b, \epsilon$ and a query $\query \in \queriesFamily_{\Delta}$, the Laplace mechanism with parameter $b$ is $\tuple{\frac{2 \Delta}{\sampleSize \cdot b}}{0}$-DP.
\end{theorem}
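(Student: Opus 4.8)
The plan is to reduce the claim to the classical privacy analysis of the Laplace mechanism, the only additional ingredient being a bound on the $\ell_1$-sensitivity of a $\Delta$-bounded linear query (Definition~\ref{def:linearqueries}). First I would bound the sensitivity: since $\func{\query}{\sampleSet}$ equals $\frac{1}{\sampleSize}$ times a sum of $\sampleSize$ values of the function $\query_1 : \domain \to [-\Delta,\Delta]$, replacing a single element of $\sampleSet$ alters exactly one of those $\sampleSize$ summands, so for any $\sampleSet_1, \sampleSet_2 \in \domainOfSets$ differing in one element,
\[
\left|\func{\query}{\sampleSet_1} - \func{\query}{\sampleSet_2}\right| \;\le\; \frac{1}{\sampleSize}\cdot 2\Delta \;=\; \frac{2\Delta}{\sampleSize}.
\]

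Next I would run the standard pointwise density-ratio argument. Writing $\mu_j \coloneqq \func{\query}{\sampleSet_j}$ for $j \in \{1,2\}$, the output $\mechanismFunc{\sampleSet_j,\query}$ has density $x \mapsto \frac{1}{2b}e^{-|x-\mu_j|/b}$, so for every $x$ the reverse triangle inequality gives
\[
\frac{\tfrac{1}{2b}e^{-|x-\mu_1|/b}}{\tfrac{1}{2b}e^{-|x-\mu_2|/b}} \;=\; \exp\!\left(\frac{|x-\mu_2|-|x-\mu_1|}{b}\right) \;\le\; \exp\!\left(\frac{|\mu_1-\mu_2|}{b}\right) \;\le\; \exp\!\left(\frac{2\Delta}{\sampleSize\,b}\right),
\]
using the sensitivity bound in the last step. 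Integrating this pointwise inequality over an arbitrary $\rangeSet \subseteq \range$ yields $\distFuncDep{\rangeSet}{\sampleSet_1} \le e^{2\Delta/(\sampleSize b)}\cdot\distFuncDep{\rangeSet}{\sampleSet_2}$, which is exactly $\tuple{\frac{2\Delta}{\sampleSize b}}{0}$-indistinguishability (Definition~\ref{def:indist}) of the two output distributions; as this holds for every pair $\sampleSet_1,\sampleSet_2$ differing in one element, $\mechanism$ is $\tuple{\frac{2\Delta}{\sampleSize b}}{0}$-DP.

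There is no genuinely hard step here: this is the textbook guarantee of the Laplace mechanism specialized to normalized linear queries, and the $1/\sampleSize$ normalization in Definition~\ref{def:linearqueries} is precisely what turns the range bound $2\Delta$ into the sensitivity $2\Delta/\sampleSize$. The only point deserving a word of care is that Definition~\ref{def:diffPrivacy} was phrased with $\range$ countable whereas here $\range = \mathbb{R}$; one should therefore read the last inequality with integrals against the Laplace density in place of sums, or, to stay literally within a countable range, apply the pointwise ratio bound cell-by-cell on an arbitrarily fine partition of $\mathbb{R}$ and pass to the limit. Neither reading changes the constant $\frac{2\Delta}{\sampleSize b}$ (and the degenerate case $b=0$, where the claimed $\epsilon$ is vacuously infinite, can be set aside).
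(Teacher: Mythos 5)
Your proof is correct: the sensitivity bound $2\Delta/\sampleSize$ for a $\Delta$-bounded linear query combined with the pointwise Laplace density-ratio argument is exactly the standard analysis. The paper itself gives no proof of this theorem---it defers to Dwork and Roth (the reference cited at the top of that appendix)---and your argument is precisely the textbook proof that citation points to, including the appropriate handling of the continuous range.
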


\begin{theorem}[Laplace Mechanism is Sample Accurate]
Given $0 \le b$, $0 < \delta \le 1$ and a query $\query \in \queriesFamily_{\Delta}$, the Laplace mechanism with parameter $b$ is $\tuple{b \cdot \func{\ln}{\frac{1}{\delta}}}{\delta}$-Sample Accurate.
\end{theorem}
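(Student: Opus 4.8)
The plan is to reduce the statement to a one-line tail bound on the Laplace distribution. By definition of the Laplace mechanism, $\mechanismFunc{\sampleSet, \query} = \func{\query}{\sampleSet} + \text{Lap}_{b}$, where the noise $\text{Lap}_{b}$ is drawn independently of the sample and of $\query$. Consequently, for every \emph{fixed} sample set $\sampleSet \in \domainOfSets$, the error random variable $\rangeRV - \func{\query}{\sampleSet}$ is distributed exactly as $\text{Lap}_{b}$, so that
\[
\prob{\rangeRV \sim \mechanismFunc{\sampleSet, \query}}{\left| \rangeRV - \func{\query}{\sampleSet} \right| > \epsilon} = \prob{}{\left| \text{Lap}_{b} \right| > \epsilon},
\]
a quantity that does not depend on $\sampleSet$ at all.

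Next I would compute this tail probability directly from the density. For $t \ge 0$,
\[
\prob{}{\left| \text{Lap}_{b} \right| > t} = 2 \int_{t}^{\infty} \frac{1}{2 b} e^{-x/b} \, dx = e^{-t/b}.
\]
Setting $t = \epsilon = b \cdot \func{\ln}{\frac{1}{\delta}}$ yields $e^{-\func{\ln}{1/\delta}} = \delta$, so the probability that the error exceeds $\epsilon$ is exactly $\delta$ for any fixed $\sampleSet$.

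Finally, since the bound $\prob{\rangeRV \sim \mechanismFunc{\sampleSet, \query}}{\left| \rangeRV - \func{\query}{\sampleSet} \right| > \epsilon} \le \delta$ holds uniformly over $\sampleSet$, it continues to hold after averaging over $\sampleSetRV \sim \distDomainOfSets$, which is precisely the definition of $\tuple{\epsilon}{\delta}$-Sample Accuracy (and, since the argument never used $\distDomainOfSets$ or the particular $\query$ beyond $\query \in \queriesFamily_{\Delta}$, it holds with respect to any family of distributions and any family of $\Delta$-bounded linear queries). There is essentially no obstacle here: the only points requiring (minor) care are that the noise is independent of the sample draw, so that the inner probability is genuinely $\sampleSet$-independent, and the elementary integration of the Laplace tail; the $\Delta$-boundedness of $\query$ plays no role in this particular statement, mattering only for the accuracy/stability trade-off exploited elsewhere in the paper.
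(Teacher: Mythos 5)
Your proof is correct, and it is the standard argument: the paper itself gives no proof of this statement (the appendix attributes these facts to \cite{DR14}), and the tail computation $\prob{}{\left| \text{Lap}_{b} \right| > t} = e^{-t/b}$ followed by substituting $t = b \cdot \func{\ln}{\frac{1}{\delta}}$ is exactly how the cited source establishes it. Your observations that the bound is uniform in $\sampleSet$ and that the $\Delta$-boundedness of $\query$ is not needed here are both accurate.
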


\begin{definition} [Gaussian Mechanism]
Given $0 \le \sigma$ and a query $\query \in \queriesFamily_{\Delta}$, the Gaussian mechanism with parameter $\sigma$ is defined as:
\[
\mechanismFunc{\sampleSet, \query} = \func{\query}{\sampleSet} + \text{G}_{\sigma}
\]
where $\text{G}_{\sigma}$ is a random variable with unbiased Gaussian distribution and standard deviation $\sigma$.
\end{definition}

\begin{theorem}[Gaussian Mechanism is Differentially Private]
Given $0 \le \sigma, \epsilon$, $0 < \delta \le 1$, and a query $\query \in \queriesFamily_{\Delta}$, the Gaussian mechanism with parameter $\sigma$ is $\tuple{\frac{2 \Delta \sqrt{2 \func{\ln}{1.25 / \delta}}}{\sampleSize \sigma}}{\delta}$-DP.
\end{theorem}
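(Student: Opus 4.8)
The plan is to reduce the claim to the classical privacy analysis of the Gaussian mechanism (all results in this appendix being due to \cite{DR14}); the only ingredient specific to our setting is the sensitivity of a bounded linear query, exactly as in the Laplace case above. So first I would bound the sensitivity of $\query \in \queriesFamily_{\Delta}$: if $\sampleSet_1, \sampleSet_2 \in \domainOfSets$ differ in exactly one element, with $\sampleSet_1$ containing some $s$ where $\sampleSet_2$ contains $s'$, then by Definition~\ref{def:linearqueries},
\[
\left| \func{\query}{\sampleSet_1} - \func{\query}{\sampleSet_2} \right| = \frac{1}{\sampleSize}\left| \func{\query}{s} - \func{\query}{s'} \right| \le \frac{2\Delta}{\sampleSize},
\]
since $\query$ takes values in $[-\Delta,\Delta]$. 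As the response is a real scalar, this quantity is precisely the $\ell_2$-sensitivity $\Delta_2 \coloneqq \frac{2\Delta}{\sampleSize}$ of $\query$.

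Next I would invoke the standard bound for the Gaussian mechanism: a real-valued function of $\ell_2$-sensitivity $\Delta_2$, perturbed by noise drawn from $\mathcal{N}(0,\sigma^2)$, is $\tuple{\epsilon}{\delta}$-DP whenever $\sigma \ge \frac{\Delta_2 \sqrt{2 \func{\ln}{1.25/\delta}}}{\epsilon}$. Substituting $\Delta_2 = \frac{2\Delta}{\sampleSize}$ and solving for $\epsilon$ gives precisely $\epsilon = \frac{2\Delta\sqrt{2\func{\ln}{1.25/\delta}}}{\sampleSize\,\sigma}$, which is the claimed parameter.

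To recall why the underlying bound holds: fixing neighboring sample sets, one shifts coordinates so the two response means are $0$ and $d$ with $|d| \le \Delta_2$; writing the realized output as $d + Z$ with $Z \sim \mathcal{N}(0,\sigma^2)$, the log-ratio of the two output densities at that output equals $\frac{d^2 + 2 d Z}{2\sigma^2}$. One then shows, via a Gaussian tail estimate, that the absolute value of this privacy loss exceeds $\epsilon$ with probability at most $\delta$ over $Z$, and uses the standard fact that a privacy loss bounded by $\epsilon$ except with probability $\delta$ entails $\tuple{\epsilon}{\delta}$-indistinguishability of the two output distributions --- which is exactly the condition in Definition~\ref{def:diffPrivacy}. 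By symmetry the same bound holds with $\sampleSet_1$ and $\sampleSet_2$ exchanged.

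The computations here are routine; the only delicate point is the Gaussian tail bound that produces the $\sqrt{2\func{\ln}{1.25/\delta}}$ factor, which is proved in \cite{DR14} under the (standard, and here implicit) restriction $\epsilon \le 1$. I would simply cite \cite{DR14} for that step rather than reproduce it, since the contribution of this statement in context is just the sensitivity computation plus the instantiation of a known theorem.
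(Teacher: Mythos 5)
Your proposal is correct and matches the paper's treatment: the paper states this theorem without proof, attributing all results in that appendix to \cite{DR14}, and your argument is exactly the standard reduction — compute the $\ell_2$-sensitivity $\frac{2\Delta}{\sampleSize}$ of a $\Delta$-bounded linear query and instantiate the known Gaussian-mechanism bound. Your remark that the cited bound implicitly requires the resulting $\epsilon$ to be at most $1$ is a fair caveat that the paper's statement also glosses over.
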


\begin{theorem}[Gaussian Mechanism is Sample Accurate]
Given $0 \le \sigma, \epsilon$, $0 < \delta \le 1$, and a query $\query \in \queriesFamily_{\Delta}$, the Gaussian mechanism with parameter $\sigma$ is $\tuple{\frac{\epsilon}{\sqrt{2 \func{\ln}{\sqrt{2} / \pi \delta}}}}{\delta}$-Sample Accurate.
\end{theorem}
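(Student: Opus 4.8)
The plan is to observe that, for the Gaussian mechanism, the quantity controlled by Sample Accuracy carries no dependence on the data, so the statement collapses to a single Gaussian tail estimate. By the definition of the Gaussian mechanism, for every fixed $\sampleSet$ and every query $\query$ we have $\rangeVal = \func{\query}{\sampleSet} + \text{G}_{\sigma}$, hence $\rangeVal - \func{\query}{\sampleSet} = \text{G}_{\sigma}$, where $\text{G}_{\sigma}$ is drawn independently of $\sampleSet$. Therefore
\[
\prob{\sampleSetRV \sim \distDomainOfSets, \rangeRV \sim \mechanismFunc{\sampleSetRV, \query}}{\left| \rangeRV - \func{\query}{\sampleSetRV} \right| > \epsilon'} = \Pr\left[ \left| \text{G}_{\sigma} \right| > \epsilon' \right],
\]
where $\epsilon'$ is the claimed accuracy parameter. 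Two things are worth noting about this identity: it holds for every underlying distribution $\distDomainOfSets$, and it does not use the $\Delta$-boundedness of $\query$ at all (boundedness enters only in the differential-privacy guarantee, through sensitivity). So it suffices to bound one Gaussian tail probability.

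Next I would invoke a standard Gaussian tail inequality --- for instance the moment-generating-function bound $\Pr\left[ \left| \text{G}_{\sigma} \right| > t \right] \le 2 \exp\left( - t^{2} / (2 \sigma^{2}) \right)$, or the sharper Mills-ratio form $\Pr\left[ \left| \text{G}_{\sigma} \right| > t \right] \le \frac{2 \sigma}{t \sqrt{2 \pi}} \exp\left( - t^{2} / (2 \sigma^{2}) \right)$ --- set the right-hand side equal to $\delta$, and solve for the threshold $t$. This produces a value of the shape $\sigma \sqrt{2 \func{\ln}{c / \delta}}$; choosing the tail bound with the appropriate constant $c$ recovers the accuracy parameter in the statement. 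This is exactly parallel to the Laplace case, where one uses $\Pr\left[ \left| \text{Lap}_{b} \right| > t \right] = \exp(-t/b)$ and solves $\exp(-t/b) = \delta$ to obtain $t = b \func{\ln}{1/\delta}$.

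I do not expect any structural obstacle: unlike the stability results elsewhere in the paper, the injected noise is independent of the sample, so there is no interaction between mechanism and data to control, and the only real work is bookkeeping the constant inside the logarithm so that the final bound matches the stated form. The degenerate case $\sigma = 0$ is immediate, since the mechanism is then exact. The one thing to double-check is that the chosen tail inequality is valid in the intended range of $\delta$ --- the Mills-ratio form, in particular, needs the threshold to exceed a small absolute constant, which one should verify holds for the stated parameter.
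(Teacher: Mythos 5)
The paper itself offers no proof of this theorem: the whole of Appendix E is prefaced by ``Definitions and properties in this section are due to [DR14],'' so the statement is imported by citation. Your reduction is therefore the right (and essentially the only) argument: since $\rangeRV - \func{\query}{\sampleSetRV} = \text{G}_{\sigma}$ is independent of the sample, sample accuracy for every $\distDomainOfSets$ collapses to a single Gaussian tail estimate, exactly as in the Laplace case. That part of your proposal is sound.

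The gap is in the step you defer to ``bookkeeping the constant inside the logarithm.'' Any tail bound of the form $\Pr\left[\left|\text{G}_{\sigma}\right| > t\right] \le c_1 \exp\left(-t^2/(2\sigma^2)\right)$ or its Mills-ratio refinement, solved for $t$ at level $\delta$, produces a threshold of the shape $t = \sigma\sqrt{2\func{\ln}{c/\delta}}$ --- proportional to $\sigma$ and \emph{increasing} in $1/\delta$. The parameter claimed in the theorem, $\frac{\epsilon}{\sqrt{2\func{\ln}{\sqrt{2}/\pi\delta}}}$, has neither feature: it contains no $\sigma$ at all, and the $\sqrt{2\func{\ln}{\cdot}}$ factor sits in the denominator rather than the numerator. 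So no choice of the constant $c$ in your tail bound can terminate in the stated expression; you would need an additional identity relating $\epsilon$ to $\sigma$ (presumably the calibration $\sigma = \frac{2\Delta\sqrt{2\func{\ln}{1.25/\delta}}}{\sampleSize\,\epsilon}$ implicit in the companion differential-privacy theorem), and even then the algebra does not visibly reduce to the printed form. Either carry out that substitution explicitly and show it matches, or conclude --- as seems likely --- that the statement as printed is a garbled transcription of the DR14 bound and prove the corrected version $\tuple{\sigma\sqrt{2\func{\ln}{c/\delta}}}{\delta}$ instead. As written, your proposal proves a theorem of the latter shape, not the theorem as stated.
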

\section{Distance Measures on Distributions} \label{apd:DistMeasures}

These distance measures between distributions will be used in various places in the paper.

\begin{definition}[Statistical Distance]\label{def:statDist}
The \emph{Statistical Distance} (also know as \emph{Total Variation Distance}) between two probability distributions $\distInd{1}, \distInd{2}$ over some domain $\range$ is defined as,

\begin{align*}
\func{\text{SD}}{\distInd{1}, \distInd{2}} & \coloneqq \underset{\rangeSet \in \range}{\max} \left( \func{\distInd{1}}{\rangeSet} - \func{\distInd{2}}{\rangeSet} \right)
\\ & = \underset{\rangeSet \in \range}{\max} \left( \func{\distInd{2}}{\rangeSet} - \func{\distInd{1}}{\rangeSet} \right)
\\ & = 
\frac{1}{2} \cdot \sum_{\rangeVal \in \range} \left| \func{\distInd{1}}{\rangeVal} - \func{\distInd{2}}{\rangeVal} \right|.
\end{align*}

The maximal set in the first definition is simply the set of all $\rangeVal$'s for which $\func{\distInd{1}}{\rangeVal} > \func{\distInd{2}}{\rangeVal}$ and for the second - the set of all $\rangeVal$'s for which $\func{\distInd{1}}{\rangeVal} < \func{\distInd{2}}{\rangeVal}$
\end{definition}

\begin{definition}[$\delta$-approximate max divergence]
The \emph{$\delta$-approximate max divergence} between two probability distributions $\distInd{1}, \distInd{2}$ over some domain $\range$ is defined as
\[
\func{\mathbf{D}_{\infty}^{\delta}}{\distInd{1} \Vert \distInd{2}} \coloneqq \max_{\rangeSet \subseteq \text{Supp}\left(\distInd{1}\right) \wedge \func{\distInd{1}}{\rangeSet} \ge \delta} \func{\ln}{\frac{\func{\distInd{1}}{\rangeSet} -\delta} {\func{\distInd{2}}{\rangeSet}}}.
\]
The case where $\delta=0$ is simply called the \emph{max divergence}.
\end{definition}

\begin{definition}[Indistinguishable distributions]\label{def:indist}
Two probability distributions $\distInd{1}, \distInd{2}$ over some domain $\range$ will be called \emph{$\tuple{\epsilon}{\delta}$-indistinguishable} if
\[
\max\left\{\func{\mathbf{D}_{\infty}^{\delta}}{\distInd{1} \Vert \distInd{2}}, \func{\mathbf{D}_{\infty}^{\delta}}{\distInd{2} \Vert \distInd{1}}\right\} \le \epsilon.
\]
this can also be written as the condition that for any $\rangeSet \subseteq \range$
\[
\func{\distInd{1}}{\rangeSet} \le e^{\epsilon} \cdot \func{\distInd{2}}{\rangeSet} + \delta \,\,\,\, \text{and} \,\,\,\,
\func{\distInd{2}}{\rangeSet} \le e^{\epsilon} \cdot \func{\distInd{1}}{\rangeSet} + \delta 
\]
\end{definition}

\begin{definition}[Maximal Leakage, based on \cite{IWK18}] \label{dfn:maxLkg}
Given two finite domains $\mathcal{X}, \mathcal{Y}$ and a joint distribution $\distJoint{\mathcal{X}}{\mathcal{Y}}{}$ defined over $\mathcal{X} \times \mathcal{Y}$,
The \emph{Maximal Leakage} between two marginal distributions $\distInd{\mathcal{X}}, \distInd{\mathcal{Y}}$ is defined as,
\[
\func{\mathcal{L}}{\distInd{\mathcal{X}} \rightarrow \distInd{\mathcal{Y}}} \coloneqq \func{\log}{\sum_{y \in \mathcal{Y}} \underset{x \in \mathcal{X} \,|\, \distFunc{x} > 0}{\max}\distFuncDep{y}{x}}.
\]
\end{definition}
\end{document}